\newcommand{\half}{\ensuremath{ \frac{1}{2} }}
\newcommand{\tr}{\ensuremath{ \mathrm{tr}}}
\newcommand{\lambdamin}{\ensuremath{ \lambda_{\min}}}
\newcommand{\lambdamax}{\ensuremath{ \lambda_{\max}}}
\newcommand{\Hs}{\ensuremath{H}}
\newcommand{\R}{\ensuremath{\mathbb{R}}}
\newcommand{\M}{\ensuremath{\mathcal{M}}}
\newcommand{\E}{\ensuremath{\mathbb{E}}}
\newcommand{\given}{\ensuremath{\, | \, }}
\newcommand{\Rn}{\ensuremath{\mathbb{R}^n}}
\newcommand{\tA}{\ensuremath{\tilde A}}
\newcommand{\tU}{\ensuremath{\tilde U}}
\newcommand{\tV}{\ensuremath{\tilde V}}
\newcommand{\tu}{\ensuremath{\tilde u}}
\newcommand{\tv}{\ensuremath{\tilde v}}
\newcommand{\tLambda}{\ensuremath{\tilde \Lambda}}
\newcommand{\numsamp}{\ensuremath{N}}
\newcommand{\numclass}{\ensuremath{M}}
\newcommand{\numcat}{\ensuremath{Q}}
\newcommand{\icat}{\ensuremath{q}}
\newcommand{\jcat}{\ensuremath{r}}
\newcommand{\class}{\ensuremath{C}}
\newcommand{\classi}{\ensuremath{C_i}}
\newcommand{\classj}{\ensuremath{C_j}}
\newcommand{\X}{\ensuremath{\mathcal{X}}}
\newcommand{\x}{\ensuremath{x}}
\newcommand{\xj}{\ensuremath{x_j}}
\newcommand{\xii}{\ensuremath{x_i}}
\newcommand{\Y}{\ensuremath{Y}}
\newcommand{\y}{\ensuremath{y}}
\newcommand{\yi}{\ensuremath{y_i}}
\newcommand{\Z}{\ensuremath{Z}}
\newcommand{\z}{\ensuremath{z}}
\newcommand{\zj}{\ensuremath{z_j}}
\newcommand{\zi}{\ensuremath{z_i}}
\newcommand{\ii}{\ensuremath{i}}
\newcommand{\jj}{\ensuremath{j}}
\newcommand{\yonemax}{\ensuremath{y_{1,max}}}
\newcommand{\ytwomin}{\ensuremath{y_{2,min}}}
\newcommand{\zonemax}{\ensuremath{z_{1,max}}}
\newcommand{\ztwomin}{\ensuremath{z_{2,min}}}
\newcommand{\zonemin}{\ensuremath{z_{1,min}}}
\newcommand{\ztwomax}{\ensuremath{z_{2,max}}}
\newcommand{\ionemax}{\ensuremath{i_{1,max}}}
\newcommand{\ionemin}{\ensuremath{i_{1,min}}}
\newcommand{\itwomax}{\ensuremath{i_{2,max}}}
\newcommand{\itwomin}{\ensuremath{i_{2,min}}}
\newcommand{\suppone}{\ensuremath{S_1}}
\newcommand{\supptwo}{\ensuremath{S_2}}
\newcommand{\suppovr}{\ensuremath{S}}
\newcommand{\G}{\ensuremath{G}}
\newcommand{\Gw}{\ensuremath{G_w}}
\newcommand{\Gb}{\ensuremath{G_b}}
\newcommand{\wminbyD}{\ensuremath{\overline{w}_{min}}}
\newcommand{\simw}{\ensuremath{\sim_w}}
\newcommand{\simb}{\ensuremath{\sim_b}}
\newcommand{\Ww}{\ensuremath{W_w}}
\newcommand{\Wb}{\ensuremath{W_b}}
\newcommand{\Lw}{\ensuremath{L_w}}
\newcommand{\Lb}{\ensuremath{L_b}}
\newcommand{\Dw}{\ensuremath{D_w}}
\newcommand{\Db}{\ensuremath{D_b}}
\newcommand{\Lcat}{\ensuremath{L^c}}
\newcommand{\Licat}{\ensuremath{L^{c, \icat}}}
\newcommand{\Lwc}{\ensuremath{L_w^c}}
\newcommand{\Lbc}{\ensuremath{L_b^c}}
\newcommand{\Lwicat}{\ensuremath{L_w^{\icat}}}
\newcommand{\Lbicat}{\ensuremath{L_b^{\icat}}}
\newcommand{\Lall}{\ensuremath{ L}}
\newcommand{\PerL}{\ensuremath{ L^{nc}}}
\newcommand{\Ycat}{\ensuremath{Y^c}}
\newcommand{\dw}{\ensuremath{d_w}}
\newcommand{\db}{\ensuremath{d_b}}
\newcommand{\rdeg}{\ensuremath{\beta}}
\newcommand{\rdegmax}{\ensuremath{\beta_{max}}}
\newcommand{\rdegmin}{\ensuremath{\beta_{min}}}
\newcommand{\dwmin}{\ensuremath{d_{w, min}}}
\newcommand{\vol}{\ensuremath{V}}
\newcommand{\volmax}{\ensuremath{V_{max}}}
\newcommand{\volbetmax}{\ensuremath{V^b_{max}}}
\newcommand{\volbetkl}{\ensuremath{V^b_{kl}}}
\newcommand{\diam}{\ensuremath{D}}
\newcommand{\w}{\ensuremath{w}}
\newcommand{\wij}{\ensuremath{w_{ij}}}
\newcommand{\wmink}{\ensuremath{w_{min, k}}}
\newcommand{\wminone}{\ensuremath{w_{min, 1}}}
\newcommand{\wmintwo}{\ensuremath{w_{min, 2}}}
\newcommand{\yv}{\ensuremath{\xi}}
\newcommand{\mar}{\ensuremath{\gamma}}
\newcommand{\marc}{\ensuremath{\gamma^c}}
\newcommand{\maroned}{\ensuremath{\delta}}
\newcommand{\hyp}{\ensuremath{\omega}}
\newcommand{\pmes}{\ensuremath{\nu}}
\newcommand{\Lcon}{\ensuremath{L}}
\newcommand{\Lconf}{\ensuremath{L_\phi}}
\newcommand{\ci}{\ensuremath{c_i}}
\newcommand{\cik}{\ensuremath{c_i^k}}
\newcommand{\cvect}{\ensuremath{c}}
\newcommand{\cbnd}{\ensuremath{\mathcal{C}}}
\newcommand{\nrbf}{\ensuremath{R}}
\newcommand{\Demb}{\ensuremath{D}}
\newcommand{\cover}{\ensuremath{\mathcal{N}}}
\newcommand{\minm}{\ensuremath{\eta_{m,\delta}}}
\newcommand{\dimM}{\ensuremath{D}}
\newcommand{\nbdm}{\ensuremath{A}}
\newcommand{\Knb}{\ensuremath{Q}}
\newcommand{\Bx}{\ensuremath{B_{\delta}(x)}}
\newcommand{\ags}{\ensuremath{a}}
\newcommand{\epsac}{\ensuremath{\varepsilon}}
\newcommand{\delc}{\ensuremath{\zeta}}
\begin{document}

\title{A study of the classification of low-dimensional data with supervised manifold learning}

\author{\name Elif Vural \thanks{Most part of the work was performed while the first author was in INRIA.}
       \email velif@metu.edu.tr  \\      
       \addr 
       Middle East Technical University\\
       Ankara, Turkey
       \AND
       \name Christine Guillemot \email christine.guillemot@inria.fr \\
       \addr
       Centre de recherche INRIA Bretagne Atlantique\\
       Rennes, France}

\editor{}

\maketitle

\begin{abstract}
Supervised manifold learning methods learn data representations by preserving the geometric structure of data while enhancing the separation between data samples from different classes. In this work, we propose a theoretical study of supervised manifold learning for classification. We consider nonlinear dimensionality reduction algorithms that yield linearly separable embeddings of training data and present generalization bounds for this type of algorithms. A necessary condition for satisfactory generalization performance is that the embedding allow the construction of a sufficiently regular interpolation function in relation with the separation margin of the embedding. We show that for supervised embeddings satisfying this condition, the classification error decays at an exponential rate with the number of training samples. Finally, we examine the separability of  supervised nonlinear embeddings that aim to preserve the low-dimensional geometric structure of data based on graph representations. The proposed analysis is supported by experiments on several real data sets.


\end{abstract}

\begin{keywords}
Manifold learning, dimensionality reduction, classification, out-of-sample extensions, RBF interpolation
\end{keywords}

\section{Introduction}
\label{sec:intro}

In many data analysis problems, data samples have an intrinsically low-dimensional structure although they reside in a high-dimensional ambient space. The learning of low-dimensional structures in collections of data has been a well studied topic of the last two decades \citep{Tenenbaum00}, \citep{Roweis00}, \citep{Belkin03},  \citep{He04}, \citep{Donoho03}, \citep{Zhang2005}. Following these works, many classification methods have been proposed  in the recent years to apply such manifold learning techniques to learn classifiers that are adapted to the geometric structure of low-dimensional data \citep{Hua12}, \citep{Yang11}, \citep{Zhang12}, \citep{Sugiyama07}, \citep{Raducanu12}. The common approach in such works is to learn a data representation that enhances the between-class separation while preserving the intrinsic low-dimensional structure of data. While many efforts have focused on the practical aspects of learning such supervised embeddings for training data, the generalization performance of these methods as supervised classification algorithms has not been investigated much yet. In this work, we aim to study nonlinear supervised dimensionality reduction methods and present performance bounds based on the properties of the embedding and the interpolation function used for generalizing the embedding.

Several supervised manifold learning methods extend the Laplacian eigenmaps algorithm \citep{Belkin03}, or its linear variant LPP \citep{He04} to the classification problem. The algorithms proposed by \citet{Hua12}, \citet{Yang11}, \citet{Zhang12} provide a supervised extension of the LPP algorithm and learn a linear projection that preserves the proximity of neighboring samples from the same class, while increasing the distance between nearby samples from different classes. The method by \citet{Sugiyama07} proposes an adaptation of the Fisher metric for linear manifold learning, which is in fact shown to be equivalent to the above methods by \citet{Yang11}, \citet{Zhang12}. In \citep{Li13}, \citep{Cui12}, \citep{Wang09}, some other similar Fisher-based linear manifold learning methods are proposed. In \citep{Raducanu12} a method relying on a similar formulation as in \citep{Hua12}, \citep{Yang11}, \citep{Zhang12} is presented, which, however, learns a nonlinear embedding. The main advantage of linear dimensionality reduction methods over nonlinear ones is that the generalization of the learnt embedding to novel (initially unavailable) samples is straightforward. However, nonlinear manifold learning algorithms are more flexible as the possible data representations they can learn belong to a wider family of functions, e.g., one can always find a nonlinear embedding to make training samples from different classes linearly separable. On the other hand, when a nonlinear embedding is used, one must also determine a suitable interpolation function to generalize the embedding to new samples, and the choice of the interpolator is critical for the classification performance.




The common effort in all of these supervised dimensionality reduction methods is to learn an embedding that increases the separation between different classes, while preserving the geometric structure of data. It is interesting to note that supervised manifold learning methods achieve separability by reducing the dimension of data, while kernel methods in traditional classifiers achieve this by increasing the dimension of data. Meanwhile, making training data linearly separable in supervised manifold learning does not mean much only by itself. Assuming that the data are sampled from a continuous distribution (hence two samples coincide with 0 probability), it is almost always possible to separate a discrete set of samples from different classes with a nonlinear embedding, e.g., even with a simple embedding such as the one mapping each sample to a vector encoding its class label. What actually matters is how the embedding generalizes to test data, i.e., where the test samples will be mapped to in the low-dimensional domain of embedding and how well the performance will be. The generalization for test data is straightforward for kernel methods, it is determined by the underlying main algorithm. However, in nonlinear supervised manifold learning, this question has rather been overlooked so far. In this work we aim to fill this gap and look into the generalization capabilities of supervised manifold learning algorithms. We study the conditions that must be satisfied by the embedding of the training samples and the interpolation function for satisfactory generalization of the classifier. We then examine the rates of convergence of supervised manifold learning algorithms that satisfy these conditions.

In Section \ref{sec:class_anly_supml}, we consider arbitrary supervised manifold learning algorithms that compute a linearly separable embedding of training samples. We study the generalization capability of such algorithms for two types of out-of-sample interpolation functions. We first consider arbitrary interpolation functions that are Lipschitz-continuous on the support of each class, and then focus on out-of-sample extensions with radial basis function (RBF) kernels, which is a popular family of  interpolation functions. For both types of interpolators, we derive  conditions that must be satisfied by the embedding of the training samples and the regularity of the interpolation function that generalizes the embedding to test samples, when a nearest neighbor or linear classifier is used in the low-dimensional domain of embedding. These conditions enforce the Lipschitz constant of the interpolator to be sufficiently small, in comparison with the separation margin between training samples from different classes in the low-dimensional domain of embedding. The practical value of these results resides in their implications about what must really be taken into account when designing a supervised dimensionality reduction algorithm: Achieving a good separation margin does not suffice by itself; the geometric structure must also be preserved so as to ensure that a sufficiently regular interpolator can be found to generalize the embedding to the whole ambient space. We then particularly consider Gaussian RBF kernels and show the existence of an optimal value for the kernel scale by studying the condition in our main result that links the separation with the Lipschitz constant of the kernel.

Our results in Section \ref{sec:class_anly_supml} also provide bounds on the rate of convergence of the classification error of supervised embeddings. We show that the misclassification error probability decays at an exponential rate with the number of samples, provided that the interpolation function is sufficiently regular with respect to the separation margin of the embedding. These  convergence rates are higher than those reported in previous results on RBF networks \citep{NiyogiG96}, \citep{LinLRX14}, \citep{AguirreKB02}, and regularized least-squares regression algorithms \citep{CaponnettoV07}, \citep{SteinwartHS09}. The essential difference between our results and such previous works is that those assume a general setting and do not focus on a particular data model, whereas our results are rather relevant to settings where the support of each class admits some certain structure, so as to allow the existence of an interpolator that is sufficiently regular on the support of each class. Moreover, in contrast with these previous works, our bounds are independent of the ambient space dimension and vary only with the intrinsic dimensions of the class supports as they characterize the error in terms of the covering numbers of the supports. 

The results in Section \ref{sec:class_anly_supml} assume an embedding that makes training samples from different classes linearly separable. Even if most nonlinear dimensionality reduction methods are observed to yield separable embeddings in practice, we aim to verify this theoretically in Section \ref{sec:sep_analysis}. In particular, we focus on the nonlinear version of the supervised Laplacian eigenmaps embeddings \citep{Raducanu12}, \citep{Hua12}, \citep{Yang11}, \citep{Zhang12}. Supervised Laplacian eigenmaps methods embed the data with the eigenvectors of the linear combination of two graph Laplacian matrices that encode the links between neighboring samples from the same class and different classes. In such a data representation, the coordinates of neighboring data samples change slowly within the same class and rapidly across different classes. We study the conditions for the linear separability of these embeddings and characterize their separation margin in terms of some graph and algorithm parameters.

In Section \ref{sec:exp_results}, we evaluate our results with experiments on several object and face data sets. We study the implications of the condition derived in Section \ref{sec:class_anly_supml} on the separability margin - interpolator regularity tradeoff. The experimental comparison of several supervised dimensionality reduction algorithms shows that this compromise between the separation and interpolator regularity can indeed be related to the practical classification performance of a supervised manifold learning algorithm. This suggests that, one can possibly improve the accuracy of supervised dimensionality reduction algorithms by considering more carefully the generalization capability of the embedding during the learning. We then study the variation of the classification performance with parameters such as the sample size, the RBF kernel scale, and the dimension of the embedding, in view of the generalization bounds presented in Section  \ref{sec:class_anly_supml}. Finally, we conclude in Section \ref{sec:conc}.

\section{Performance bounds for supervised manifold learning methods}
\label{sec:class_anly_supml}


\subsection{Notation and Problem Formulation}
\label{ssec:oos_notation}

Consider a setting with $M$ data classes where the samples of each class $m \in \{ 1, \dots, M \}$ are drawn from a probability measure $\pmes_m$ in a Hilbert space $\Hs$ such that $\pmes_m$ has a bounded support $\M_m \subset \Hs$. Let $\X=\{ \x_i \}_{i=1}^{\numsamp} \subset \Hs $ be a set of $\numsamp$ training samples such that each $\x_i$ is drawn from one of the probability measures $\pmes_m$, and the samples drawn from each $\pmes_m$ are independent and identically distributed. We denote the class label of $\xii$ by $\classi \in \{1, 2, \dots, M\}$.

Let $\Y=\{ \y_i \}_{i=1}^{\numsamp} \subset \R^d$ be a $d$-dimensional embedding of $\X$, where each $\y_i $ corresponds to $\x_i$. We consider supervised embeddings such that $\Y$ is linearly separable. Linear separability is defined as follows:

\begin{definition}
\label{def:lin_sepbilty}
The data representation $\Y$ is linearly separable with a margin of $\mar>0$, if for any two classes $k, l \in \{1, 2, \dots, M \}$, there exists a separating hyperplane defined by $\hyp_{kl} \in \R^d$, $\| \hyp_{kl} \|=1$ and $b_{kl} \in \R$ such that 
\begin{equation}
\label{eq:defn_sepbilty}
\begin{split}
\hyp_{kl}^T \, \yi + b_{kl} &\geq \mar/2 \quad \,\,\,\, \text{     if  } \classi=k\\
\hyp_{kl}^T \, \yi + b_{kl} &\leq - \mar/2 \quad \text{  if  } \classi=l.
\end{split}
\end{equation}
\end{definition}

\begin{figure}[t]
 \centering
  \includegraphics[height=5cm]{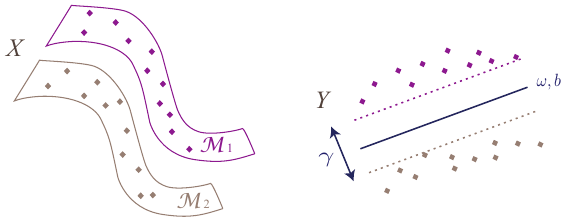}
  \caption{Illustration of a linearly separable embedding. Data in $X$ are sampled from two different classes with supports $\M_1$, $\M_2$. The samples $X$ are mapped to the coordinates $Y$ with a low-dimensional embedding, where the two classes become linearly separable with margin $\mar$ with the hyperplane given by $\hyp$, $b$.}
  \label{fig:illus_embedding}
\end{figure}

The above definition of separability implies the following. For any given class $m$, there exists a set of hyperplanes $\{ \hyp_{mk} \}_{k\neq m} \subset \R^d$, $\| \hyp_{mk} \|=1$, and a set of real numbers $\{ b_{mk}  \}_{k \neq m} \subset \R$ that separate class $m$ from other classes, such that for all $\yi$ of class $\classi=m$
\begin{equation}
\label{eq:hyp_m_pos}
\hyp_{mk}^T \, \yi + b_{mk}  > \mar/2,  \quad  \forall k \neq m
\end{equation}
and for all $\yi$ of class $\classi \neq m$, there exists a $k$ such that
\begin{equation}
\label{eq:hyp_m_neg}
\hyp_{mk}^T \, \yi + b_{mk}  < - \mar/2. 
\end{equation}
These hyperplanes are obtained by setting $\hyp_{km}=-\hyp_{mk}$, $b_{km}= - b_{mk}$.


Figure \ref{fig:illus_embedding} shows an illustration of a linearly separable embedding of data samples from two classes. Manifold learning methods typically compute a low-dimensional embedding $\Y$ of training data $\X$ in a pointwise manner, i.e., the coordinates $\y_i$ are computed only for the initially available training samples $\x_i$. However, in a classification problem, in order to estimate the class label of a new data sample $\x$ of unknown class, $\x$ needs to be mapped to the low-dimensional domain of embedding as well. The construction of a function $f: \Hs \rightarrow \R^d$ that generalizes the learnt embedding to the whole space is known as the out-of-sample generalization problem. Smooth functions are commonly used for out-of-sample interpolation, e.g.~as in \citep{QiaoZWZ13}, \citep{PeherstorferPB11}.

Now let $\x$ be a test sample drawn from the probability measure $\pmes_m$, hence, the true class label of $\x$ is $m$. In our study, we consider two basic classification schemes in the domain of embedding:

\textit{Linear classifier. }  The embeddings of the training samples are used to compute the separating hyperplanes, i.e., the classifier parameters $\{ \hyp_{mk} \}$ and $\{ b_{mk}  \}$. Then, mapping $x$ to the low-dimensional domain as $f(x) \in \R^d$, the class label of $\x$ is estimated as $\hat {\class} (x) = l$ if there exists $l \in \{ 1, \dots, M \}$ such that
\begin{equation}
\label{eq:lin_classifier}
\hyp_{lk}^T \, f(x) + b_{lk}  > 0,  \quad  \forall k \in \{1, \dots, M \} \setminus \{ l \}.
\end{equation}
Note that the existence of such an $l$ is not guaranteed in general for any $\x$, but for a given $\x$ there cannot be more than one $l$ satisfying the above condition. Then $\x$ is classified correctly if the estimated class label agrees with the true class label, i.e., $ \hat {\class} (x) = l = m$.

\textit{Nearest neighbor classification.} The test sample $x$ is assigned the class label of the closest training point in the domain of embedding, i.e.,  $\hat {\class} (x) = C_{i'}$, where
\[
i' = \arg \min_{i=1, \dots, \numsamp} \| y_i - f(x) \|
\]

In the rest of this section, we study the generalization performance of supervised dimensionality reduction methods. We first consider in Section \ref{ssec:oos_reg_func} interpolation functions that vary regularly on each class support and we search for a lower bound on the probability of correctly classifying a new data sample in terms of the regularity of $f$, the separation of the embedding, and the sampling density. Then in Section \ref{ssec:oos_rbf}, we study the classification performance for a particular type of interpolation functions, namely RBF interpolators, which is one of the most popular ones  \citep{PeherstorferPB11}, \citep{ChinS08}. We focus particularly on Gaussian RBF interpolators in Section \ref{ssec:scale_optim} and derive some results regarding the existence of an optimal kernel scale parameter. Lastly, we discuss our results in comparison with previous literature in Section \ref{ssec:disc_prev_res}.

In the results in Sections \ref{ssec:oos_reg_func}-\ref{ssec:scale_optim}, we keep a generic formulation and simply treat the supports $\{ \M_m \}$ as arbitrary bounded subsets of $\Hs$, each of which represents a different data class. Nevertheless, from the perspective of manifold learning, our results are of interest especially when the data is assumed to have an underlying low-dimensional structure. In Section \ref{ssec:disc_prev_res}, we study the implications of our results for the setting where $\M_m $ are low-dimensional manifolds. We then examine how the proposed bounds vary in relation to the intrinsic dimensions of $\{ \M_m  \}$.


\subsection{Out-of-sample interpolation with regular functions} 
\label{ssec:oos_reg_func}
Let $f: \Hs \rightarrow \R^d$ be an out-of-sample interpolation function such that $f(\x_i)=\y_i$ for each training sample $\x_i$, $i=1, \dots, \numsamp$. Assume that $f$ is Lipschitz continuous with constant $\Lcon>0$ when restricted to any one of the supports $\M_m$; i.e., for any $m \in \{1, \dots, M  \}$ and any $u, v \in \M_m$
\begin{equation*}
\| f(u) - f(v) \| \leq \Lcon \, \| u - v \|
\end{equation*}
where $\| \cdot \|$ denotes above the $\ell_2$-norm if the argument is in $\R^d$, and the norm induced from the inner product in $\Hs$ if the argument is in $\Hs$. 

We will find a relation between the classification accuracy and the number of training samples via the covering number of the supports $\M_m$. Let $B_\epsilon(x) \subset \Hs$ denote an open ball of radius $\epsilon$ around $x$
\begin{equation*}
B_\epsilon(x)=\{ u \in \Hs: \| x-u \| < \epsilon  \}.
\end{equation*}
The covering number $\cover(\epsilon, A)$ of a set $A \subset \Hs$ is defined as the smallest number of open balls $B_\epsilon$ of radius $\epsilon$ whose union contains $A$ \citep{KulkarniP95}   
\begin{equation*}
\cover(\epsilon, A) = \inf \{ k: \exists \, u_1, \dots, u_k \in \Hs \, \text{s.t.} \, A \subset \bigcup_{i=1}^k B_\epsilon(u_i)   \}.
\end{equation*}
We assume that the supports $\M_m$ are totally bounded, i.e., $\M_m$ has a finite covering number $\cover(\epsilon, \M_m)$ for any $\epsilon > 0$. 

We state below a lower bound for the probability of correctly classifying a sample $\x$ drawn from $\pmes_m$, in terms of the number of training samples drawn from $\pmes_m$, the separation of the embedding and the regularity of $f$. 
\begin{theorem}
\label{thm:emb_smooth_intp}
For some $\epsilon$ with $0 < \epsilon \leq \mar/(2\Lcon) $, let the training set $\X$ contain at least $\numsamp_m$ samples drawn i.i.d.~according to a probability measure $\pmes_m$ such that 
\begin{equation*}
\numsamp_m \geq \cover(\epsilon/2, \M_m).
\end{equation*}
Let $\Y$ be an embedding of the training samples $\X$ that is linearly separable with margin larger than $\mar$, and let $f$ be an interpolation function that is Lipschitz continuous with constant $\Lcon$ on the support $\M_m$. Then the probability of correctly classifying a test sample $\x$ drawn from $\pmes_m$ independently from the training samples with the linear classifier  \eqref{eq:lin_classifier} is lower bounded as
\begin{equation*}
P\left( \hat{\class}(\x) = m \right) \geq  1 - \frac{\cover(\epsilon/2, \M_m)}{2 \numsamp_m}.
\end{equation*}
\end{theorem}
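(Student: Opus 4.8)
The plan is to combine a covering argument for $\M_m$ with a margin estimate that shows any test point sharing a small ball with a same-class training point is classified correctly, and then to bound the probability that no such training point exists. The event of misclassification will be contained in a union of "empty ball" events, and a union bound together with an elementary optimization will deliver the stated rate.

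First I would fix a minimal cover of $\M_m$ by $K = \cover(\epsilon/2, \M_m)$ open balls $B_1, \dots, B_K$ of radius $\epsilon/2$. The geometric core is the following implication: if the test sample $\x$ and some class-$m$ training sample $\xii$ both lie in a common ball $B_j$, then $\| \x - \xii \| < \epsilon$ by the triangle inequality, so by Lipschitz continuity on $\M_m$ and the hypothesis $\epsilon \leq \mar/(2\Lcon)$ we get $\| f(\x) - \yi \| \leq \Lcon \| \x - \xii \| < \Lcon \epsilon \leq \mar/2$. Taking the separation inequality \eqref{eq:hyp_m_pos} for $\yi$ and adding the correction $\hyp_{mk}^T(f(\x) - \yi)$, whose absolute value is at most $\| \hyp_{mk} \| \, \| f(\x) - \yi \| < \mar/2$ by Cauchy--Schwarz and $\| \hyp_{mk} \| = 1$, yields $\hyp_{mk}^T f(\x) + b_{mk} > 0$ for every $k \neq m$; hence the linear classifier \eqref{eq:lin_classifier} returns $\hat{\class}(\x) = m$.

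Consequently a misclassification can occur only if $\x$ falls into a ball containing no class-$m$ training sample. Since $\x \in \M_m \subseteq \bigcup_j B_j$, the event $\{ \hat{\class}(\x) \neq m \}$ is contained in $\bigcup_{j=1}^K \{ \x \in B_j \text{ and } B_j \text{ contains no class-}m\text{ training point} \}$. Writing $p_j = \pmes_m(B_j)$ and using that $\x$ is independent of the training data, the probability of the $j$-th event factorizes as $p_j (1 - p_j)^{N'}$, where $N' \geq \numsamp_m$ is the number of class-$m$ training samples; since $(1 - p_j)^{N'} \leq (1 - p_j)^{\numsamp_m}$, a union bound gives $P(\hat{\class}(\x) \neq m) \leq \sum_{j=1}^K p_j (1 - p_j)^{\numsamp_m}$.

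The final step is the elementary estimate $\max_{p \in [0,1]} p(1 - p)^{\numsamp_m} \leq \tfrac{1}{e\, \numsamp_m} < \tfrac{1}{2\numsamp_m}$, which follows from $p(1 - p)^{\numsamp_m} \leq p\, e^{-p\, \numsamp_m}$ and optimizing at $p = 1/\numsamp_m$. Applying it term by term bounds the sum by $K/(2\numsamp_m) = \cover(\epsilon/2, \M_m)/(2\numsamp_m)$, giving the claim. I expect the only care needed to be in the union-bound bookkeeping: the cover balls overlap, so $\sum_j p_j$ may exceed $1$, but this is harmless because we only use the inclusion of events rather than a partition; and one must keep every inequality strict (using the margin strictly above $\mar$ together with $\| f(\x) - \yi \| < \mar/2$, which is strict because the balls are open) so that $\x$ lands strictly on the correct side of every separating hyperplane.
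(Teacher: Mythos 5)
Your proof is correct, and its geometric core is the same as the paper's: both arguments reduce correct classification to the event that some class-$m$ training sample lies within distance $\mar/(2\Lcon)$ of $\x$, using the Lipschitz property of $f$ on $\M_m$, the normalization $\| \hyp_{mk} \| = 1$, and the strict margin in \eqref{eq:hyp_m_pos} to place $f(\x)$ strictly on the correct side of every hyperplane, exactly as in Appendix \ref{pf:thm:emb_smooth_intp}. Where you genuinely diverge is the probabilistic step: the paper invokes as a black box the nearest-neighbor tail bound $P(\| \x - \x_i \| > \epsilon) \leq \cover(\epsilon/2, \M_m)/(2 \numsamp_m)$ from the proof of Theorem 1 of \citet{KulkarniP95}, and this citation is where the hypothesis $\numsamp_m \geq \cover(\epsilon/2, \M_m)$ enters, whereas you re-derive the bound from scratch via the empty-ball union bound $\sum_j p_j (1-p_j)^{\numsamp_m}$ and the elementary estimate $p(1-p)^{\numsamp_m} \leq p\, e^{-p \numsamp_m} \leq 1/(e \numsamp_m)$. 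Your self-contained route buys two things: it nowhere uses $\numsamp_m \geq \cover(\epsilon/2, \M_m)$, so that hypothesis becomes superfluous in your version, and it gives the sharper constant $\cover(\epsilon/2, \M_m)/(e \numsamp_m)$, which implies the stated bound since $e > 2$; the paper's route buys brevity at the cost of importing the lemma's hypothesis. Your bookkeeping remarks are also sound: overlapping cover balls are harmless because you only use inclusion of events rather than a partition, any class-$m$ samples beyond $\numsamp_m$ are absorbed by the monotonicity $(1-p_j)^{N'} \leq (1-p_j)^{\numsamp_m}$, and your ``misclassification'' event correctly covers the case in which no label $l$ satisfies \eqref{eq:lin_classifier}, since the shared-ball condition forces the classifier to output $m$ outright.
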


The proof of the theorem is given in Appendix \ref{pf:thm:emb_smooth_intp}. Theorem \ref{thm:emb_smooth_intp} establishes a link between the classification performance and the separation of the embedding of the training samples. In particular, due to the condition $\epsilon  \leq \mar/(2 \Lcon)$, the increase in the separation $\mar$ allows a larger value for $\epsilon$, provided that the interpolator regularity is not affected much. This reduces the covering number $\cover(\epsilon/2, \M_m)$ in return and increases the probability of correct classification. Similarly, from the condition $\epsilon  \leq \mar/(2 \Lcon)$, one can also observe that at a given separation $\mar$, a smaller Lipschitz constant $\Lcon$ for the interpolation function allows the parameter $\epsilon$ to take a larger value. This reduces the covering number $\cover(\epsilon/2, \M_m)$ and therefore increases the correct classification probability. Thus, choosing a more regular interpolator at a given separation helps improve the classification performance. If the $\epsilon$ parameter is fixed, the Lipschitz constant of the interpolator is allowed to increase only proportionally to the separation margin. The condition that the interpolator must be sufficiently regular in comparison with the separation suggests that increasing the separation too much at the cost of impairing the interpolator regularity may degrade the classifier performance.  In the case that the supports $\M_m$ are low-dimensional manifolds, the covering number $\cover(\epsilon/2, \M_m)$ increases at a geometric rate with the intrinsic dimension $\dimM$ of the manifold, since a $\dimM$-dimensional manifold is locally homeomorphic to $\mathbb{R}^\dimM$. Therefore, from the condition on the number of samples, $\numsamp_m$ should increase at a geometric rate with $\dimM$.

In Theorem \ref{thm:emb_smooth_intp} the probability of misclassification decreases with the number $\numsamp_m$ of training samples at a rate of $O(\numsamp_m^{-1})$. In the rest of this section, we show that it is in fact possible to obtain an exponential convergence rate with linear and NN-classifiers under certain assumptions. We first present the following lemma.
\begin{lemma}
\label{lem:fdev_neigh_genf}
Let $\X=\{ \x_i \}_{i=1}^{\numsamp} \subset \Hs $ be a set of training samples such that each $\x_i$ is drawn i.i.d.~from one of the probability measures $\{\pmes_m \}_{m=1}^M$. Let $\x$ be a test sample randomly drawn according to the probability measure $\pmes_m$ of class $m$. Let 
\begin{equation}
\label{eq:defn_NNx_lem_genf}
\nbdm = \{ \x_i \in \X : \x_i \in B_\delta(x), \x_i \sim \pmes_m \}
\end{equation}
be the set of samples in $\X$ that are in a $\delta$-neighborhood of $\x$ and  also drawn from the measure $\pmes_m$. Assume that $\nbdm$ contains $|  \nbdm|=\Knb$ samples. Then 
\begin{equation}
\label{eq:lemma_bnd_dev_genf}
P\left( \| f(x) -  \frac{1}{\Knb} \sum_{\x_j \in \nbdm} f(\x_j)  \|  \leq  \Lcon \delta + \sqrt{d}  \epsilon \right) 
\geq
1- 2 d  \exp  \left( - \frac{  \Knb \, \epsilon^2 }{ 2 \Lcon^2 \delta^2} \right).
\end{equation}

\end{lemma}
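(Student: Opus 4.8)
The plan is to condition on the realization of the test point $\x$ (which is drawn independently of $\X$) and on the event $|\nbdm|=\Knb$, and then to treat the $\Knb$ elements of $\nbdm$ as i.i.d.\ draws from the restriction of $\pmes_m$ to the ball $B_\delta(\x)$. Under this conditioning $f(\x)$ is a fixed vector and $\frac{1}{\Knb}\sum_{\x_j\in\nbdm}f(\x_j)$ is an empirical mean of $\Knb$ i.i.d.\ vectors in $\R^d$, which is precisely the setting for a concentration argument.

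First I would record the single deterministic ingredient that drives the bound. Since $\x\sim\pmes_m$ and each $\x_j\in\nbdm$ is also drawn from $\pmes_m$, both points lie in the support $\M_m$; and since $\x_j\in B_\delta(\x)$ we have $\|\x-\x_j\|<\delta$. The Lipschitz hypothesis on $\M_m$ then gives $\|f(\x)-f(\x_j)\|\le\Lcon\delta$ for every $\x_j\in\nbdm$, so in particular each coordinate satisfies $|f^{(t)}(\x)-f^{(t)}(\x_j)|\le\Lcon\delta$; thus each coordinate of $f(\x_j)$ lies in an interval of length $2\Lcon\delta$ centred at $f^{(t)}(\x)$.

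The core of the argument is a bias--fluctuation split. Writing $\mu=\E[f(\x_j)\mid \x_j\in B_\delta(\x),\ \x_j\sim\pmes_m]\in\R^d$ for the conditional mean, I would decompose
\[
f(\x)-\frac{1}{\Knb}\sum_{\x_j\in\nbdm}f(\x_j)
=\bigl(f(\x)-\mu\bigr)+\Bigl(\mu-\frac{1}{\Knb}\sum_{\x_j\in\nbdm}f(\x_j)\Bigr).
\]
For the bias term, Jensen's inequality together with the deterministic bound gives $\|f(\x)-\mu\|=\|\E[f(\x)-f(\x_j)]\|\le\E\|f(\x)-f(\x_j)\|\le\Lcon\delta$. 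For the fluctuation term I would apply Hoeffding's inequality coordinatewise: for each $t$ the scalars $\{f^{(t)}(\x_j)\}$ are i.i.d.\ and confined to a range of length $2\Lcon\delta$, so their empirical mean deviates from $\mu^{(t)}$ by more than $\epsilon$ with probability at most $2\exp(-2\Knb\epsilon^2/(2\Lcon\delta)^2)=2\exp(-\Knb\epsilon^2/(2\Lcon^2\delta^2))$. A union bound over the $d$ coordinates shows that, with probability at least $1-2d\exp(-\Knb\epsilon^2/(2\Lcon^2\delta^2))$, every coordinate deviation is at most $\epsilon$, whence $\|\mu-\frac{1}{\Knb}\sum_{\x_j\in\nbdm}f(\x_j)\|\le\sqrt{d}\,\epsilon$. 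Combining the two bounds by the triangle inequality yields $\|f(\x)-\frac{1}{\Knb}\sum_{\x_j\in\nbdm}f(\x_j)\|\le\Lcon\delta+\sqrt{d}\,\epsilon$ on this event, which is the claim.

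I expect the main obstacle to be the conditioning step rather than any inequality: one must argue carefully that, given $\x$ and the count $|\nbdm|=\Knb$, the points falling in $\nbdm$ really are i.i.d.\ samples of $\pmes_m$ restricted to $B_\delta(\x)$, so that the independence hypothesis of Hoeffding's inequality is met. This is the standard fact that conditioning an i.i.d.\ sample on membership in a fixed set (and on the number of accepted members) leaves the accepted points i.i.d.\ under the normalized restricted law; because $\x$ is drawn independently of $\X$, the ball $B_\delta(\x)$ is fixed under this conditioning and the argument goes through. Everything else---the Lipschitz range bound, Jensen's inequality, the coordinatewise Hoeffding constants, and the union bound---is routine once the independence is in place.
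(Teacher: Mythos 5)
Your proposal is correct and follows essentially the same route as the paper's proof: the same bias--fluctuation decomposition around the conditional mean $m_f = \E_u[f(u) \mid u \in \Bx]$, the same Lipschitz bound $\Lcon\delta$ on the bias term, the same coordinatewise Hoeffding bound with range $2\Lcon\delta$, and the same union bound over the $d$ coordinates yielding the $\sqrt{d}\,\epsilon$ fluctuation term. The one point where you go beyond the paper is in making explicit the conditioning argument that the accepted points in $\nbdm$ are i.i.d.\ under the normalized restriction of $\pmes_m$ to $B_\delta(\x)$ --- a step the paper's proof uses implicitly when invoking Hoeffding --- which is a welcome addition in rigor but not a different method.
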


Lemma \ref{lem:fdev_neigh_genf} is proved in Appendix \ref{pf:lem_fdev_neigh_genf}. The inequality in \eqref{eq:lemma_bnd_dev_genf} shows that as the number $\Knb$ of training samples falling in a neighborhood of a test point $\x$ increases, the probability of the deviation of $f(x)$ from its average within the neighborhood decreases. The parameter $\epsilon$ captures the relation between the amount and the probability of deviation.

When studying the classification accuracy in the main result below, we will use the following generalized definition of the linear separation. 
\begin{definition}
Let $\Y$ be a linearly separable embedding with margin $\mar$ such that each pair $(k,l)$ of classes are separated with the hyperplanes given by $\hyp_{kl}$, $b_{kl}$ as defined in Definition \ref{def:lin_sepbilty}. We say that the linear classifier given by  $\{ \hyp_{kl} \}$, $\{ b_{kl} \}$ has a $Q$-mean separability margin of $\mar_Q>0$ if any choice of $Q$ samples $\{y_{k,i}\}_{i=1}^Q \subset \Y$ from class $k$ and $Q$ samples $\{y_{l,i}\}_{i=1}^Q \subset \Y$ from class $l$, $l \neq k$, satisfies
\begin{equation}
\begin{split}
\hyp_{kl}^T \, \left( \frac{1}{Q} \sum_{i=1}^Q y_{k,i} \right) + b_{kl} &\geq \mar_Q/2 \\
\hyp_{kl}^T \, \left( \frac{1}{Q} \sum_{i=1}^Q y_{l,i} \right) + b_{kl} &\leq - \mar_Q/2 .
\end{split}
\end{equation}
\end{definition}
The above definition of separability is more flexible than the one in Definition \ref{def:lin_sepbilty}. Clearly, an embedding that is linearly separable with margin $\mar$ has a $Q$-mean separability margin of $\mar_Q \geq \mar$ for any $Q$. As in the previous section, we consider that the test sample $\x$ is classified with the linear classifier \eqref{eq:lin_classifier} in the low-dimensional domain, defined with respect to the set of hyperplanes given by $\{ \hyp_{mk} \} $ and $\{ b_{mk}  \}$ as in \eqref{eq:hyp_m_pos} and \eqref{eq:hyp_m_neg}.

In the following result, we show that an exponential convergence rate can be obtained with linear classifiers in supervised manifold learning. We define beforehand a parameter depending on $\delta$, which gives the smallest possible measure of the $\delta$-neighborhood $\Bx$ of a point $\x$ in support $\M_m$.
\begin{equation*}
\minm := \inf_{\x \in \M_m} \pmes_m(\Bx).
\end{equation*}
\begin{theorem}
\label{thm:linclassif_genf}
Let $\X =\{ \x_i \}_{i=1}^{\numsamp} \subset \Hs$ be a set of training samples such that each $\x_i$ is drawn i.i.d.~from one of the probability measures $\{\pmes_m \}_{m=1}^M$. Let $\Y$ be an embedding of $\X$ in $\R^d$ that is linearly separable with a $Q$-mean separability margin larger than $\mar_Q$. For a given $\epsilon>0$ and $\delta>0$, let $f$ be a Lipschitz-continuous interpolator such that 
\begin{equation}
\label{eq:cond_sep_genf_lin}
\Lcon \delta +  \sqrt{d} \epsilon  \leq  \frac{\mar_Q}{ 2}.
\end{equation}
Consider a test sample $\x$ randomly drawn according to the probability measure $\pmes_m$ of class $m$. If $\X$ contains at least $\numsamp_m$ training samples drawn i.i.d.~from $\pmes_m$ such that 
\begin{equation*}
\numsamp_m > \frac{Q}{\minm}
\end{equation*}
then the probability of correctly classifying $\x$ with the linear classifier given in \eqref{eq:lin_classifier} is lower bounded as
\begin{equation}
\label{eq:prob_lb_genf_lin}
P\left( \hat{\class}(\x) = m \right) \geq 
1 - \exp \left( - \frac{2 \, (\numsamp_m \, \minm - Q)^2 }{\numsamp_m} \right)
 - 2 d \exp \left( - \frac{  \Knb \, \epsilon^2 }{ 2 \Lcon^2 \delta^2} \right).
\end{equation}
\end{theorem}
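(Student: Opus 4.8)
The plan is to bound the misclassification probability by splitting it across two ``bad'' events and controlling each separately, then showing that off these events the linear classifier \eqref{eq:lin_classifier} necessarily outputs $m$. Fix the test point $\x$, let $\nbdm$ be the set of class-$m$ training samples in $\Bx$ with cardinality $\Knb$, and set $\bar y := \frac{1}{\Knb} \sum_{\xj \in \nbdm} f(\xj) = \frac{1}{\Knb} \sum_{\xj \in \nbdm} \yj$, using the interpolation property $f(\xj) = \yj$. The two bad events are $E_1 = \{ \Knb < Q \}$ (too few same-class neighbors to invoke the $Q$-mean margin) and $E_2 = \{\, \| f(\x) - \bar y \| > \Lcon \delta + \sqrt d \, \epsilon \,\}$ (the image of $\x$ strays too far from the neighborhood mean). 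I would conclude by a union bound, $P(\hat{\class}(\x) \neq m) \leq P(E_1) + P(E_2 \cap E_1^c)$.

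The deterministic core is to verify correctness on $E_1^c \cap E_2^c$. Since $\nbdm$ consists of class-$m$ points and $\Knb \geq Q$, I first need $\hyp_{mk}^T \bar y + b_{mk} \geq \mar_Q/2$ for every $k \neq m$. The $Q$-mean separability definition gives this only for averages of exactly $Q$ samples, so the one genuinely new step is to promote it to the size-$\Knb$ average: expressing $\bar y$ as the uniform average over all size-$Q$ sub-collections of $\nbdm$ (whereupon each sample carries equal total weight $1/\Knb$), each sub-average clears $\mar_Q/2$, so $\bar y$, being a convex combination of points in the common half-space $\{ y : \hyp_{mk}^T y + b_{mk} \geq \mar_Q/2 \}$, clears it too. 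Then Cauchy--Schwarz with $\| \hyp_{mk} \| = 1$ gives $| \hyp_{mk}^T ( f(\x) - \bar y ) | \leq \| f(\x) - \bar y \| \leq \Lcon \delta + \sqrt d \, \epsilon \leq \mar_Q/2$ on $E_2^c$ by \eqref{eq:cond_sep_genf_lin}, so $\hyp_{mk}^T f(\x) + b_{mk} \geq \mar_Q/2 - \mar_Q/2 = 0$ for all $k \neq m$ (strict once the margin is taken strictly larger than $\mar_Q$), which is exactly the decision rule for label $m$.

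It remains to bound the two probabilities. For $E_1$, condition on $\x$: each of the $\numsamp_m$ class-$m$ training samples lands in $\Bx$ independently with probability $p_\x = \pmes_m(\Bx) \geq \minm$, so $\Knb$ is $\mathrm{Binomial}(\numsamp_m, p_\x)$ with mean $\numsamp_m p_\x \geq \numsamp_m \minm$. Hoeffding's lower-tail inequality with deviation $t = \numsamp_m p_\x - Q \geq \numsamp_m \minm - Q > 0$ (positivity from $\numsamp_m > Q/\minm$) yields $P(E_1) = P(\Knb < Q) \leq \exp( - 2 (\numsamp_m \minm - Q)^2 / \numsamp_m )$, the first exponential, and the estimate is uniform in $\x$ so it survives integration. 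For $E_2$ on $E_1^c$, Lemma \ref{lem:fdev_neigh_genf} applied conditionally on $\Knb$ gives $P(E_2 \mid \Knb) \leq 2 d \exp( - \Knb \epsilon^2 / (2 \Lcon^2 \delta^2) )$; on $E_1^c$ we have $\Knb \geq Q$, so this is bounded by its value at the threshold, matching the second exponential of \eqref{eq:prob_lb_genf_lin}. Assembling the union bound gives the claim.

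The main obstacle I anticipate is the coupling between the randomness of $\Knb$ and the conditional nature of Lemma \ref{lem:fdev_neigh_genf}: the lemma fixes the neighborhood size, whereas here $\Knb$ is itself random and feeds both the deviation estimate and, through the requirement $\Knb \geq Q$, the separation argument. Handling this cleanly means conditioning on $\Knb$, checking that the deviation bound is monotone in $\Knb$ so that the worst case at the threshold controls the term, and ensuring the \emph{same} set $\nbdm$ is used both to form $\bar y$ and to apply the $Q$-mean margin. The subset-averaging step that upgrades the exactly-$Q$ margin to an at-least-$Q$ margin is elementary but is the only new ingredient, so I would isolate and verify it first.
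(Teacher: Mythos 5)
Your proposal is correct and follows essentially the same route as the paper's proof: a Hoeffding/binomial tail bound on the event $\{\Knb < Q\}$, the conditional deviation bound from Lemma \ref{lem:fdev_neigh_genf} (monotone in the neighborhood size, so the threshold $Q$ controls it), and the deterministic margin argument via Cauchy--Schwarz with $\|\hyp_{mk}\|=1$, combined by a union bound where the paper multiplies the two confidence factors. Your subset-averaging argument upgrading the exactly-$Q$ margin to the size-$\Knb$ average is precisely the justification for the step the paper asserts without detail (that $Q$-mean separability implies $R$-mean separability for $R>Q$), so if anything your write-up is slightly more complete than the paper's.
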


Theorem \ref{thm:linclassif_genf} is proved in Appendix \ref{pf:thm_linclassif_genf}. The theorem shows how the classification accuracy is influenced by the separation of the classes in the embedding, the smoothness of the out-of-sample interpolant, and the number of training samples drawn from the density of each class. 
The condition in \eqref{eq:cond_sep_genf_lin} points to the tradeoff between the separation and the regularity of the interpolation function. As the Lipschitz constant $\Lcon$ of the interpolation function $f$ increases, $f$ becomes less ``regular'', and a higher separation $\mar_Q$ is needed to meet the condition. This is coherent with the expectation that, when $f$ becomes irregular, the classifier becomes more sensitive to the perturbations of the data, e.g., due to noise. The requirement of a higher separation is then for ensuring a larger margin in the linear classifier, which compensates for the irregularity of $f$. From  \eqref{eq:cond_sep_genf_lin}, it is also observed that the separation should increase with the dimension $d$ as well, and also with $\epsilon$, whose  increase improves the confidence of the bound \eqref{eq:prob_lb_genf_lin}. Note that the condition in \eqref{eq:cond_sep_genf_lin} implies also the following: When computing an embedding, it is not advisable to increase the separation of training data unconditionally. In particular, increasing the separation too much may violate the preservation of the geometry and yield an irregular interpolator. Hence, when designing a supervised dimensionality reduction algorithm, one must pay attention to the regularity of the resulting interpolator as much as the enhancement of the separation margin.

Next, we discuss the roles of the parameters $Q$ and $\delta$. The term $\exp ( - \Knb \, \epsilon^2 / ( 2 \Lcon^2 \delta^2) )$ in the correct classification probability bound \eqref{eq:prob_lb_genf_lin} shows that, for fixed $\delta$, the confidence increases with the value of $Q$. Meanwhile, due to the numerator of the term $\exp ( -2 \, (\numsamp_m \, \minm - Q)^2 / \numsamp_m )$, for a high confidence, the number of samples $\numsamp_m$ should also be relatively big with respect to $Q$ to have a high overall confidence. Similarly, at fixed $Q$, $\delta$ should be made smaller to increase the confidence due to the term $\exp ( - (  \Knb \, \epsilon^2 ) / ( 2 \Lcon^2 \delta^2 )  )$, which then reduces the parameter $\minm$ and eventually requires the number of samples $\numsamp_m$ to take a sufficiently large value in order to make the term $\exp ( -2 \, (\numsamp_m \, \minm - Q)^2 / \numsamp_m )$ small and have a high confidence. Therefore, these two parameters $Q$ and $\delta$ behave in a similar way, and determine the relation between the number of samples and the correct classification probability, i.e., they indicate how large $\numsamp_m $ should be in order to have a certain confidence of correct classification.

Theorem \ref{thm:linclassif_genf} studies the setting where the class labels are estimated with a linear classifier in the domain of embedding. We also provide another result below that analyses the performance when a nearest-neighbor classifier is used in the domain of embedding.

\begin{theorem}
\label{thm:error_genf_nnclass}
Let $\X =\{ \x_i \}_{i=1}^{\numsamp} \subset \Hs$ be a set of training samples such that each $\x_i$ is drawn i.i.d.~from one of the probability measures $\{\pmes_m \}_{m=1}^M$. Let $\Y$ be an embedding of $\X$ in $\R^d$ such that 
\begin{equation*}
\begin{split}
\| \y_i - \y_j \| &< \Demb_\delta, \text{ if } \| \x_i - \x_j  \| \leq \delta \text{ and } \classi = \classj \\
\| \y_i - \y_j \| &> \mar,  \  \text{ if }  \classi \neq \classj,
\end{split}
\end{equation*}
hence, nearby samples from the same class are mapped to nearby points, and samples from different classes are separated by a distance of at least $\mar$ in the embedding.

For given $\epsilon>0$ and $\delta>0$, let $f$ be a Lipschitz-continuous interpolation function such that 
\begin{equation}
\label{eq:cond_sep_genf_nn}
\Lcon \delta + \sqrt{d} \epsilon +   \Demb_{2 \delta} \leq \frac{ \mar}{ 2}.
\end{equation}

Consider a test sample $\x$ randomly drawn according to the probability measure $\pmes_m$ of class $m$. If $\X$ contains at least $\numsamp_m$ training samples drawn i.i.d.~from $\pmes_m$ such that 
\begin{equation*}
\numsamp_m > \frac{Q}{\minm}
\end{equation*}
then the probability of correctly classifying $\x$ with nearest-neighbor classification in  $\R^d$ is lower bounded as
\begin{equation}
P\left( \hat{\class}(\x) = m \right) \geq 
1 - \exp \left( - \frac{2 \, (\numsamp_m \, \minm - Q)^2 }{\numsamp_m} \right)
 - 2 d \exp \left( - \frac{  \Knb \, \epsilon^2 }{ 2 \Lcon^2 \delta^2} \right).
\end{equation}

\end{theorem}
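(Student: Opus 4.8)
The plan is to adapt the two-stage scheme behind Theorem~\ref{thm:linclassif_genf}, replacing its linear-classifier geometry with a nearest-neighbor distance comparison. Write $\nbdm$ for the set of same-class training points of \eqref{eq:defn_NNx_lem_genf} that fall in $\Bx$, let $\Knb = |\nbdm|$, and set $\bar y := \frac{1}{\Knb} \sum_{\x_j \in \nbdm} \y_j$ to be the mean of their embeddings. I will show that correct classification holds on the intersection of two events: (i) $\Knb \geq Q$, so that $\nbdm$ is nonempty and large enough; and (ii) $\| f(x) - \bar y \| \leq \Lcon \delta + \sqrt d\, \epsilon$. For (i), each of the $\numsamp_m$ samples drawn from $\pmes_m$ lands in $\Bx$ independently with probability $\pmes_m(\Bx) \geq \minm$, so $\Knb$ is binomial with mean at least $\numsamp_m \minm > Q$; Hoeffding's inequality then gives $P(\Knb < Q) \leq \exp(-2(\numsamp_m \minm - Q)^2/\numsamp_m)$, the first exponential term in the bound. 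For (ii), conditioning on a value $\Knb \geq Q$, Lemma~\ref{lem:fdev_neigh_genf} provides the estimate except on an event of probability at most $2d \exp(-\Knb \epsilon^2/(2 \Lcon^2 \delta^2))$.

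The heart of the argument is the geometric step showing that on (i)~$\cap$~(ii) the training embedding nearest to $f(x)$ carries label $m$. Fix any $\x_j \in \nbdm$. Any two points $\x_j, \x_k \in \nbdm$ lie within $\delta$ of $\x$ and hence within $2\delta$ of each other, and both are of class $m$, so the embedding hypothesis at scale $2\delta$ yields $\| \y_j - \y_k \| < \Demb_{2\delta}$; averaging over $k$ then gives $\| \y_j - \bar y \| < \Demb_{2\delta}$. Combining this with (ii) and the separation condition \eqref{eq:cond_sep_genf_nn},
\[
\| f(x) - \y_j \| \leq \| f(x) - \bar y \| + \| \bar y - \y_j \| < \Lcon \delta + \sqrt d\, \epsilon + \Demb_{2\delta} \leq \frac{\mar}{2}.
\]
Conversely, for any training point $\y_k$ whose class differs from $m$, the separation hypothesis gives $\| \y_j - \y_k \| > \mar$, so $\| f(x) - \y_k \| \geq \| \y_j - \y_k \| - \| f(x) - \y_j \| > \mar - \mar/2 = \mar/2$. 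Thus the same-class anchor $\y_j$ is strictly closer to $f(x)$ than every differently labelled point, forcing the nearest neighbor — and therefore $\hat{\class}(\x)$ — to equal $m$.

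It remains to combine the two events by a union bound, with some care taken because the failure probability of (ii) depends on the random variable $\Knb$: bounding $\E[ \mathbf{1}_{\{ \Knb \geq Q\}}\, 2d \exp(-\Knb \epsilon^2/(2\Lcon^2\delta^2)) ]$ by its value at $\Knb = Q$, and adding the Hoeffding estimate for $P(\Knb < Q)$, produces the two exponential terms in the stated lower bound and completes the proof. I expect the main obstacle to be this geometric step rather than the probabilistic bookkeeping: specifically, realizing that it is $\Demb_{2\delta}$ and not $\Demb_{\delta}$ that must enter \eqref{eq:cond_sep_genf_nn} (since two neighbors of $\x$ may be $2\delta$ apart), and that a single in-neighborhood anchor $\y_j$ suffices to defeat all other-class points through the margin $\mar$. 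The concentration inputs — Hoeffding for $\Knb$ and the already-established Lemma~\ref{lem:fdev_neigh_genf} — are then routine.
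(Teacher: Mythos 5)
Your proposal is correct and takes essentially the same route as the paper's own proof: Hoeffding's inequality for $|\nbdm| \geq Q$, Lemma \ref{lem:fdev_neigh_genf} for the deviation of $f(\x)$ from the neighborhood mean, the same $\Demb_{2\delta}$ triangle-inequality step anchoring $f(\x)$ to a same-class neighbor in $\Bx$, and the margin hypothesis to show every other-class embedding is strictly farther. The only cosmetic difference is that you compare both distances against $\mar/2$, while the paper compares them against $\Lcon \delta + \sqrt{d}\epsilon + \Demb_{2\delta}$; under \eqref{eq:cond_sep_genf_nn} these are equivalent, and your handling of the $\Knb$-dependence in the union bound matches the paper's monotonicity argument.
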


Theorem \ref{thm:error_genf_nnclass} is proved in Appendix \ref{pf:thm_error_genf_nnclass}. Theorem \ref{thm:error_genf_nnclass} is quite similar to Theorem \ref{thm:linclassif_genf} and can be interpreted similarly. Unlike in the previous result, the separability condition of the embedding is based on the pairwise distances of samples from different classes here. The condition \eqref{eq:cond_sep_genf_nn} suggests that the result is useful when the parameter $\Demb_{2 \delta}$ is sufficiently small, which requires the embedding to map nearby samples from the same class in the ambient space to nearby points.


In this section, we have characterized the regularity of the interpolation functions via their rates of variation when restricted to the supports $\M_m$. While the results of this section are generic in the sense that they are valid for any interpolation function with the described regularity properties, we have not examined the construction of such functions. In a practical classification problem where one uses a particular type of interpolation functions, one would also be interested in the adaptation of these results to obtain performance guarantees for the particular type of function used. Hence, in the following section we focus on a popular family of smooth functions; radial basis function (RBF) interpolators, and study the classification performance of this particular type of interpolators.

\subsection{Out-of-sample interpolation with RBF interpolators} 
\label{ssec:oos_rbf}

Here we consider an RBF interpolation function $f: \Hs \rightarrow \R^d$ of the form
\[
f(x) = [f^1(x) \, f^2(x) \, \dots f^d(x)]
\]
such that each component $f^k$ of $f$ is given by
\begin{equation*}
f^k(\x) = \sum_{i=1}^{\numsamp} \cik \,  \phi(\| \x - \xii \|)
\end{equation*}
where $\phi: \R \rightarrow \R^{+}$ is a kernel function, $\cik \in \R$ are coefficients, and $\xii$ are kernel centers. In interpolation with RBF functions, it is common to choose the set of kernel centers as the set of available data samples. Hence, we assume that the set of kernel centers $\{ \xii \}_{i=1}^N$ is selected to be the same as the set of training samples $\X$. We consider a setting where the coefficients $\cik$ are set such that $f(\xii)= \yi$, i.e., $f$ maps each training point in $\X$ to its embedding previously computed with supervised manifold learning.

We consider the RBF kernel $\phi$ to be a Lipschitz continuous function with constant $\Lconf>0$, hence, for any $u, v \in \R$
\begin{equation*}
| \phi(u) - \phi(v) | \leq \Lconf \,  | u - v |.
\end{equation*}
Also, let $\cbnd$ be an upper bound on the coefficient magnitudes such that for all $k=1, \dots, d$
\begin{equation*}
\sum_{i=1}^N |\cik| \leq \cbnd.
\end{equation*}

In the following, we analyze the classification accuracy and extend the results in Section \ref{ssec:oos_reg_func} to the case of RBF interpolators. We first give the following result, which probabilistically bounds how much the value of the interpolator $f$ at a point $x$ randomly drawn from $\pmes_m$ may deviate from the average interpolator value of the training points of the same class within a neighborhood of $x$.
\begin{lemma}
\label{lem:bnd_dev_f_nborhd}

Let $\X=\{ \x_i \}_{i=1}^{\numsamp} \subset \Hs $ be a set of training samples such that each $\x_i$ is drawn i.i.d.~from one of the probability measures $\{\pmes_m \}_{m=1}^M$. Let $\x$ be a test sample randomly drawn according to the probability measure $\pmes_m$ of class $m$. Let 
\begin{equation}
\label{eq:defn_NNx_X}
\nbdm = \{ \x_i \in \X : \x_i \in B_\delta(x), \x_i \sim \pmes_m \}
\end{equation}
be the set of samples in $\X$ that are in a $\delta$-neighborhood of $\x$ and  also drawn from the measure $\pmes_m$. Assume that $\nbdm$ contains $|  \nbdm|=\Knb$ samples. Then 
\begin{equation}
\label{eq:lemma_bnd_devf}
P\left( \| f(x) -  \frac{1}{\Knb} \sum_{\x_j \in \nbdm} f(\x_j)  \|  \leq \sqrt{d}  \cbnd (\Lconf \delta + \epsilon) \right) 
\geq
1- 2 \numsamp \exp \left( - \frac{(\Knb-1) \, \epsilon^2 }{ 2 \Lconf^2 \delta^2} \right).
\end{equation}
\end{lemma}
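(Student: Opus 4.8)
The plan is to mirror the proof of Lemma~\ref{lem:fdev_neigh_genf}, but to replace the (here unavailable) global Lipschitz control of $f$ by the Lipschitz regularity of the kernel $\phi$ together with the coefficient bound $\cbnd$. Writing $\bar f := \frac{1}{\Knb} \sum_{\x_j \in \nbdm} f(\x_j)$ and expanding $f^k(\x) = \sum_{i=1}^{\numsamp} \cik \, \phi(\| \x - \xii \|)$, each coordinate of the deviation factorizes as
\begin{equation*}
f^k(\x) - \bar f^{\,k} = \sum_{i=1}^{\numsamp} \cik \, \theta_i,
\qquad
\theta_i := \phi(\| \x - \xii \|) - \frac{1}{\Knb} \sum_{\x_j \in \nbdm} \phi(\| \x_j - \xii \|).
\end{equation*}
The crucial observation is that $\theta_i$ does not depend on the coordinate index $k$; only the coefficients $\cik$ do. Hence, once every $|\theta_i|$ is controlled by a common bound $\tau$, the constraint $\sum_i |\cik| \le \cbnd$ gives $|f^k(\x) - \bar f^{\,k}| \le \cbnd \, \tau$ for each $k$, and summing the $d$ squared coordinates yields $\| f(\x) - \bar f \| \le \sqrt d \, \cbnd \, \tau$. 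Comparing with the target, it therefore suffices to establish $|\theta_i| \le \tau = \Lconf \delta + \epsilon$ simultaneously for all $i$ with the stated probability.

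First I would fix a kernel center $\xii$ and set $g_i(u) := \phi(\| u - \xii \|)$. Since $\phi$ is Lipschitz with constant $\Lconf$ and $u \mapsto \| u - \xii \|$ is $1$-Lipschitz, $g_i$ is $\Lconf$-Lipschitz on $\Hs$. Introducing the conditional mean $\mu_i := \E[\, g_i(\x') \mid \x' \in \Bx, \, \x' \sim \pmes_m \,]$, I would split $\theta_i = \big(g_i(\x) - \mu_i\big) + \big(\mu_i - \tfrac{1}{\Knb}\sum_{\x_j \in \nbdm} g_i(\x_j)\big)$. The first term is deterministic: every admissible $\x'$ lies within $\delta$ of $\x$, so $|g_i(\x) - \mu_i| \le \E[\,\Lconf \| \x - \x' \|\,] \le \Lconf \delta$. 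The second term is the deviation of an empirical mean from its expectation and is to be handled by concentration.

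The main work, and the main obstacle, is this concentration step. Conditioned on $\x$ and on the neighborhood $\nbdm$ (in particular on its cardinality $\Knb$), the same-class samples falling in $\Bx$ are independent draws from $\pmes_m$ restricted to $\Bx$, so the values $g_i(\x_j)$ are i.i.d.\ and confined to the interval $[\,g_i(\x) - \Lconf \delta, \, g_i(\x) + \Lconf \delta\,]$ of width $2 \Lconf \delta$. Applying Hoeffding's inequality to these bounded i.i.d.\ variables gives $P\big(|\mu_i - \tfrac{1}{\Knb}\sum_j g_i(\x_j)| > \epsilon\big) \le 2 \exp\!\big(- (\Knb-1)\,\epsilon^2 / (2 \Lconf^2 \delta^2)\big)$, the effective sample count being $\Knb-1$ rather than $\Knb$ as a consequence of the conditioning used to single out the neighborhood. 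A union bound over the $\numsamp$ kernel centers then ensures $|\theta_i| \le \Lconf \delta + \epsilon$ for all $i$ with probability at least $1 - 2\numsamp \exp\!\big(-(\Knb-1)\epsilon^2/(2\Lconf^2\delta^2)\big)$, and on this event the factorization above delivers $\| f(\x) - \bar f \| \le \sqrt d \, \cbnd (\Lconf \delta + \epsilon)$, which is the claim. The two delicate points I expect to need the most care are justifying the conditional i.i.d.\ structure of the neighborhood samples while $\Knb$ is itself random, and accounting cleanly for the loss of one effective sample that produces the $\Knb-1$ in the exponent; the remaining estimates are routine.
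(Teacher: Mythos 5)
Your factorization $f^k(\x)-\bar f^{\,k}=\sum_i \cik\,\theta_i$, the bias bound $\Lconf\delta$, the union bound over the $\numsamp$ centers, and the final assembly via $\sum_i|\cik|\le\cbnd$ all coincide with the paper's proof, and your constants match the statement. However, the one step you flag as delicate is resolved with the wrong reason: the $(\Knb-1)$ in the exponent is \emph{not} ``a consequence of the conditioning used to single out the neighborhood.'' Conditioned on $\x$ and on $|\nbdm|=\Knb$, the neighborhood samples are genuinely i.i.d.\ draws from $\pmes_m$ restricted to $\Bx$, and for any kernel center $\xii\notin\Bx$ Hoeffding applies to all $\Knb$ summands, giving the \emph{stronger} exponent $\Knb\,\epsilon^2/(2\Lconf^2\delta^2)$ --- no sample is lost to conditioning. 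The sample is lost only in the complementary case $\xii\in\Bx$: there the kernel center is itself a member of $\nbdm$, so the empirical mean $\frac{1}{\Knb}\sum_{\x_j\in\nbdm}\phi(\|\x_j-\xii\|)$ contains the self-term $\phi(\|\xii-\xii\|)=\phi(0)$, which is deterministic given $\xii$; the function $g_i$ and one of the points it is evaluated at are the same random variable, so the i.i.d.\ structure you invoke fails for exactly one summand, and your blanket Hoeffding application with $\Knb-1$ samples is unjustified as stated.

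The paper repairs this with a two-case analysis that your sketch is missing. For $\xii\in\Bx$ it splits off the self-term, bounds its contribution by $\frac{1}{\Knb}\bigl|\phi(\|\x-\xii\|)-\phi(0)\bigr|\le \Lconf\delta/\Knb$ via Lipschitz continuity, applies Hoeffding to the remaining $\Knb-1$ genuinely i.i.d.\ samples in $\nbdm\setminus\{\xii\}$ to get $\bigl|\phi(\|\x-\xii\|)-\frac{1}{\Knb-1}\sum_{\x_j\in\nbdm\setminus\{\xii\}}\phi(\|\x_j-\xii\|)\bigr|\le\Lconf\delta+\epsilon$ with probability at least $1-2\exp\bigl(-(\Knb-1)\epsilon^2/(2\Lconf^2\delta^2)\bigr)$, and then verifies the weighted recombination $\frac{\Lconf\delta}{\Knb}+\frac{\Knb-1}{\Knb}(\Lconf\delta+\epsilon)\le\Lconf\delta+\epsilon$, so the same threshold $\tau=\Lconf\delta+\epsilon$ holds in both cases. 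The exponent $(\Knb-1)$ is then simply the weaker of the two cases, adopted uniformly before the union bound. With this case split inserted, your argument becomes exactly the paper's proof.
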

%


The proof of Lemma \ref{lem:bnd_dev_f_nborhd} is given in Appendix \ref{pf:lem:bnd_dev_f_nborhd}. The lemma states a result similar to the one in Lemma \ref{lem:fdev_neigh_genf}; however, is specialized to the case where $f$ is an RBF interpolator.

We are now ready to present the following main result.

\begin{theorem}
\label{thm:acc_cl_rbfint}
Let $\X =\{ \x_i \}_{i=1}^{\numsamp} \subset \Hs$ be a set of training samples such that each $\x_i$ is drawn i.i.d.~from one of the probability measures $\{\pmes_m \}_{m=1}^M$. Let $\Y$ be an embedding of $\X$ in $\R^d$ that is linearly separable with a $Q$-mean separability margin larger than $\mar_Q$. For a given $\epsilon>0$ and $\delta>0$, let $f$ be an RBF interpolator such that 
\begin{equation}
\label{eq:cond_sep_interp}
\sqrt{d} \, \cbnd \, (\Lconf \delta + \epsilon) \leq  \frac{\mar_Q}{ 2}.
\end{equation}
Consider a test sample $\x$ randomly drawn according to the probability measure $\pmes_m$ of class $m$. If $\X$ contains at least $\numsamp_m$ training samples drawn i.i.d.~from $\pmes_m$ such that 
\begin{equation*}
\numsamp_m > \frac{Q}{\minm}
\end{equation*}
then the probability of correctly classifying $\x$ with the linear classifier given in \eqref{eq:lin_classifier} is lower bounded as
\begin{equation}
\label{eq:prob_lb_rbfint}
P\left( \hat{\class}(\x) = m \right) \geq 
1 - \exp \left( - \frac{2 \, (\numsamp_m \, \minm - Q)^2 }{\numsamp_m} \right)
 - 2 \numsamp \exp \left( - \frac{(\Knb-1) \, \epsilon^2 }{ 2 \Lconf^2 \delta^2} \right).
\end{equation}
\end{theorem}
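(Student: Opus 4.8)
The plan is to follow the proof of Theorem~\ref{thm:linclassif_genf} almost verbatim, the only structural change being that the generic deviation estimate of Lemma~\ref{lem:fdev_neigh_genf} is replaced by its RBF counterpart, Lemma~\ref{lem:bnd_dev_f_nborhd}. The argument hinges on two events: that a sufficient number of class-$m$ training points fall into the $\delta$-neighborhood $\Bx$ of the test point, and that the interpolated value $f(\x)$ stays close to the empirical mean of $f$ over those neighbors. Because $f$ interpolates the embedding, $f(\xii)=\yi$, this mean coincides with the average embedding of the neighbors, and the $Q$-mean separability of $\Y$ will then force $f(\x)$ to lie on the correct side of every separating hyperplane.

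First I would introduce $\nbdm=\{\xii\in\X:\xii\in\Bx,\ \xii\sim\pmes_m\}$ as in Lemma~\ref{lem:bnd_dev_f_nborhd}. Conditioning on $\x$, the number of class-$m$ training samples landing in $\Bx$ is a sum of $\numsamp_m$ i.i.d.\ Bernoulli variables, each of success probability $\pmes_m(\Bx)\ge\minm$ by the definition of $\minm$ as an infimum over $\M_m$. Since $\numsamp_m\,\minm>Q$, a one-sided Hoeffding bound on this binomial count gives
\[
P\big(\,|\nbdm|<Q\,\big)\ \le\ \exp\!\left(-\frac{2\,(\numsamp_m\,\minm-Q)^2}{\numsamp_m}\right),
\]
which is the first exponential term of \eqref{eq:prob_lb_rbfint}; denote by $E_1$ the complementary event that $\nbdm$ contains at least $Q$ points.

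On $E_1$ the central step is to transfer the separation to the neighborhood mean $\bar{\y}:=\tfrac{1}{\Knb}\sum_{\xj\in\nbdm}\yj$. By the $Q$-mean separability hypothesis the mean of any $Q$ neighbors from class $m$ satisfies $\hyp_{mk}^T(\cdot)+b_{mk}>\mar_Q/2$ for every $k\ne m$; writing $\bar{\y}$ as the uniform average of these $Q$-subset means and using that $z\mapsto\hyp_{mk}^Tz+b_{mk}$ is affine, convexity yields $\hyp_{mk}^T\bar{\y}+b_{mk}>\mar_Q/2$ for all $k\ne m$. Next I would invoke Lemma~\ref{lem:bnd_dev_f_nborhd}: conditionally on the realized neighborhood, the event $E_2=\{\|f(\x)-\bar{\y}\|\le\sqrt{d}\,\cbnd(\Lconf\delta+\epsilon)\}$ fails with probability at most $2\numsamp\exp(-(\Knb-1)\epsilon^2/(2\Lconf^2\delta^2))$, the second exponential term. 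On $E_1\cap E_2$, Cauchy--Schwarz with $\|\hyp_{mk}\|=1$ together with the hypothesis \eqref{eq:cond_sep_interp} gives, for every $k\ne m$,
\[
\hyp_{mk}^T f(\x)+b_{mk}\ \ge\ \big(\hyp_{mk}^T\bar{\y}+b_{mk}\big)-\|f(\x)-\bar{\y}\|\ >\ \frac{\mar_Q}{2}-\frac{\mar_Q}{2}\ =\ 0,
\]
so the linear classifier \eqref{eq:lin_classifier} returns $\hat{\class}(\x)=m$. A union bound $P(\hat{\class}(\x)=m)\ge 1-P(E_1^c)-P(E_2^c)$ then yields \eqref{eq:prob_lb_rbfint}.

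Most of the technical weight already sits inside Lemma~\ref{lem:bnd_dev_f_nborhd}, so the remaining work is bookkeeping, and I expect the only genuinely delicate point to be the interaction between the random neighborhood size and the deviation estimate. Since the failure probability of Lemma~\ref{lem:bnd_dev_f_nborhd} is monotone decreasing in $\Knb$, applying it conditionally on the realized count (which is at least $Q$ on $E_1$) is what lets that count appear in the bound \eqref{eq:prob_lb_rbfint}; if a fully unconditional statement is preferred one simply substitutes the worst case $\Knb=Q$. Finally, the slack that upgrades the non-strict hypothesis \eqref{eq:cond_sep_interp} to the strict inequality $\hyp_{mk}^T f(\x)+b_{mk}>0$ comes from assuming the $Q$-mean margin to be strictly larger than $\mar_Q$, which makes the separation inequality for $\bar{\y}$ strict.
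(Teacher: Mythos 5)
Your proposal is correct and follows essentially the same route as the paper's own proof: a Hoeffding bound on the binomial count of class-$m$ neighbors in $\Bx$, Lemma \ref{lem:bnd_dev_f_nborhd} for the deviation of $f(\x)$ from the neighborhood mean (with the monotone substitution $\Knb \to Q$), the transfer of $Q$-mean separability to the full neighborhood mean, and Cauchy--Schwarz together with \eqref{eq:cond_sep_interp} to place $f(\x)$ on the correct side of each hyperplane. Your explicit convexity argument averaging over $Q$-subsets is a nice spelling-out of the step the paper only asserts (that $Q$-mean separability implies $R$-mean separability for $R>Q$), but it is the same argument, not a different approach.
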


The theorem is proved in Appendix \ref{pf:thm:acc_cl_rbfint}. The theorem bounds the classification accuracy in terms of the smoothness of the RBF interpolation function and the number of samples. The condition in  \eqref{eq:cond_sep_interp} characterizes the compromise between the separation and the regularity of the interpolator, which depends on the Lipschitz constant of the RBF kernels and the coefficient magnitude. As the Lipschitz constant $\Lconf$ and the coefficient magnitude parameter $\cbnd$ increase (i.e., $f$ becomes less ``regular''), a higher separation $\mar_Q$ is required to provide a performance guarantee. When the separation margin of the embedding and the interpolator satisfy the condition in  \eqref{eq:cond_sep_interp}, the misclassification probability decays exponentially as the number of training samples increases, similarly to the results in Section \ref{ssec:oos_reg_func}.


Theorem \ref{thm:acc_cl_rbfint} studies the misclassification probability when the class labels in the low-dimensional domain are estimated with a linear classifier. We  also present below a bound on the misclassification probability when the nearest-neighbor classifier is used in the low-dimensional domain. 

\begin{theorem}
\label{thm:acc_cl_nn_rbfint}

Let $\X =\{ \x_i \}_{i=1}^{\numsamp} \subset \Hs$ be a set of training samples such that each $\x_i$ is drawn i.i.d.~from one of the probability measures $\{\pmes_m \}_{m=1}^M$. Let $\Y$ be an embedding of $\X$ in $\R^d$ such that 
\begin{equation*}
\begin{split}
\| \y_i - \y_j \| &< \Demb_\delta, \text{ if } \| \x_i - \x_j  \| \leq \delta \text{ and } \classi = \classj \\
\| \y_i - \y_j \| &> \mar,  \  \text{ if }  \classi \neq \classj.
\end{split}
\end{equation*}
%

For given $\epsilon>0$ and $\delta>0$, let $f$ be an RBF interpolator such that 
\begin{equation}
\label{eq:cond_sep_nn_interp}
\sqrt{d} \, \cbnd \, (\Lconf \delta + \epsilon) +   \Demb_{2 \delta} \leq \frac{ \mar}{ 2}.
\end{equation}

Consider a test sample $\x$ randomly drawn according to the probability measure $\pmes_m$ of class $m$. If $\X$ contains at least $\numsamp_m$ training samples drawn i.i.d.~from $\pmes_m$ such that 
\begin{equation*}
\numsamp_m > \frac{Q}{\minm}
\end{equation*}
then the probability of correctly classifying $\x$ with nearest-neighbor classification in  $\R^d$ is lower bounded as
\begin{equation}
\label{eq:prob_lb_nn_rbfint}
P\left( \hat{\class}(\x) = m \right) \geq 
1 - \exp \left( - \frac{2 \, (\numsamp_m \, \minm - Q)^2 }{\numsamp_m} \right)
 - 2 \numsamp \exp \left( - \frac{(\Knb-1) \, \epsilon^2 }{ 2 \Lconf^2 \delta^2} \right).
\end{equation}
\end{theorem}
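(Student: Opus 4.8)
The plan is to read this statement as the nearest-neighbor, RBF-specialized analogue of the results already established, and to assemble it from three mechanisms that are in hand: the neighborhood-counting concentration bound used in the proof of Theorem~\ref{thm:linclassif_genf}, the RBF deviation estimate of Lemma~\ref{lem:bnd_dev_f_nborhd}, and the nearest-neighbor geometric argument of Theorem~\ref{thm:error_genf_nnclass}. I would introduce two ``good'' events, show that on their intersection the nearest-neighbor rule returns the correct label $m$, and finish with a union bound. Throughout, write $\bar y = \frac{1}{\Knb}\sum_{\xj \in \nbdm} f(\xj)$ for the mean embedding of the same-class neighbors of $\x$ contained in $\nbdm$.

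First I would control the number of same-class samples near $\x$, exactly as in Theorem~\ref{thm:linclassif_genf}. Conditioning on the test point $\x$, each of the $\numsamp_m$ training samples drawn from $\pmes_m$ lies in $\Bx$ independently with probability $\pmes_m(\Bx) \geq \minm$, so $|\nbdm| = \Knb$ is a sum of $\numsamp_m$ independent Bernoulli variables with mean at least $\numsamp_m \minm$. Since $\numsamp_m > Q/\minm$ guarantees $\numsamp_m \minm - Q > 0$, Hoeffding's inequality gives $P(\Knb < Q) \leq \exp(-2(\numsamp_m \minm - Q)^2/\numsamp_m)$, and because this bound is uniform over $\x \in \M_m$ it holds unconditionally; call $A = \{\Knb \geq Q\}$ this event. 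On $A$, I would then apply Lemma~\ref{lem:bnd_dev_f_nborhd} with the realized value $\Knb \geq Q$ to obtain the event $B = \{\, \| f(\x) - \bar y \| \leq \sqrt{d}\,\cbnd(\Lconf \delta + \epsilon)\,\}$ with conditional failure probability $P(B^c \mid A) \leq 2\numsamp \exp(-(\Knb-1)\epsilon^2/(2\Lconf^2 \delta^2))$.

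The core of the argument is the geometry on $A \cap B$, carried out as in Theorem~\ref{thm:error_genf_nnclass} but fed by the RBF deviation bound. Any two neighbors $\xii, \xj \in \nbdm$ satisfy $\|\xii - \xj\| < 2\delta$ by the triangle inequality and share class $m$, so the embedding hypothesis gives $\|\yi - \yj\| < \Demb_{2\delta}$; averaging over the neighbors yields $\|\bar y - \yj\| < \Demb_{2\delta}$ for every neighbor $\yj$. Combining this with $B$ and condition~\eqref{eq:cond_sep_nn_interp} produces $\|f(\x) - \yj\| \leq \sqrt{d}\,\cbnd(\Lconf \delta + \epsilon) + \Demb_{2\delta} \leq \mar/2$, whereas for any training point $\yi$ of a class $l \neq m$ the separation hypothesis $\|\yi - \yj\| > \mar$ forces $\|f(\x) - \yi\| > \mar - \mar/2 = \mar/2$. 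Hence a same-class embedding is strictly closer to $f(\x)$ than every other-class embedding, so the nearest-neighbor minimizer carries label $m$ and $\hat{\class}(\x) = m$. The union bound $P(\hat{\class}(\x) = m) \geq P(A \cap B) \geq 1 - P(A^c) - P(B^c \mid A)$ then yields~\eqref{eq:prob_lb_nn_rbfint}.

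I expect the geometric step to be the main obstacle. One must pass from the deviation of $f(\x)$ relative to the \emph{mean} $\bar y$ to a bound relative to an \emph{individual} neighbor embedding, which is where the intra-class spread $\Demb_{2\delta}$ enters, and one must secure a strict inequality so that the nearest neighbor is provably of class $m$ rather than merely tied. The precise interplay of the three budget terms in~\eqref{eq:cond_sep_nn_interp} — the RBF deviation $\sqrt{d}\,\cbnd(\Lconf \delta + \epsilon)$, the intra-class spread $\Demb_{2\delta}$, and the half-margin $\mar/2$ — is exactly what makes the triangle-inequality chain close, and checking this compatibility is the part that requires care.
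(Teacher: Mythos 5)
Your proposal matches the paper's proof essentially step for step: the paper likewise recalls from the proof of Theorem~\ref{thm:acc_cl_rbfint} the Hoeffding bound on $|\nbdm|\geq Q$ together with the RBF deviation bound of Lemma~\ref{lem:bnd_dev_f_nborhd}, and then runs the same nearest-neighbor triangle-inequality argument as in Theorem~\ref{thm:error_genf_nnclass}, passing from the mean $\bar y$ to individual neighbors via the $\Demb_{2\delta}$ averaging step and invoking condition~\eqref{eq:cond_sep_nn_interp} to separate the same-class distance bound from the strict other-class lower bound. Your treatment of the strictness of the inequality and the union bound is also exactly as in the paper, so the proposal is correct and takes the same route.
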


Theorem \ref{thm:acc_cl_nn_rbfint} is proved in Appendix \ref{pf:thm:acc_cl_nn_rbfint}. While it provides the exact convergence rate as in Theorem \ref{thm:acc_cl_rbfint}, the necessary condition in \eqref{eq:cond_sep_nn_interp} includes also the parameter $\Demb_{2 \delta}$. Hence, if the embedding maps nearby samples from the same class to nearby points, and a compromise is achieved between the separation and the interpolator regularity, the misclassification probability can be upper bounded.

\subsection{Optimizing the scale of Gaussian RBF kernels}
\label{ssec:scale_optim}

In data interpolation with RBFs, it is known that the accuracy of interpolation is quite sensitive to the choice of the shape parameter for several kernels including the Gaussian kernel \citep{Baxter92}. The relation between the shape parameter and the performance of interpolation has been an important problem of interest \citep{Piret07}. In this section, we focus on the Gaussian RBF kernel, which is a popular choice for RBF interpolation due to its smoothness and good spatial localization properties. We study the choice of the scale parameter of the kernel within the context of classification. 

We consider the RBF kernel given by
\begin{equation*}
 \phi(r) = e^{-\frac{r^2}{\sigma^2}}
\end{equation*}
where $\sigma$ is the scale parameter of the Gaussian function. We focus on the condition \eqref{eq:cond_sep_interp} in Theorem \ref{thm:acc_cl_rbfint}
\begin{equation*}
\sqrt{d} \, \cbnd \, (\Lconf \delta + \epsilon) \leq \mar_Q / 2,
\end{equation*}
(or equivalently the condition \eqref{eq:cond_sep_nn_interp} if the nearest neighbor classifier is used), which relates the interpolation function properties with the separation. In particular, for a given separation margin, this condition is satisfied more easily when the term on the left hand side of the inequality is smaller. Thus, in the following, we derive an expression for the left hand side of the above inequality by deriving the Lipschitz constant $\Lconf$ and the coefficient bound $\cbnd$ in terms of the scale parameter $\sigma$ of the Gaussian kernel. We then study the scale parameter that minimizes $\sqrt{d} \, \cbnd \, (\Lconf \delta + \epsilon)$.

Writing the condition $f(\xii)= \yi$ in a matrix form for each dimension $k= 1, \dots, d$, we have
\begin{equation}
\label{eq:lin_sys_coefk}
\Phi \cvect^k = \y^k
\end{equation}
where $\Phi \in \R^{\numsamp \times \numsamp}$ is a matrix whose $(i,j)$-th entry is given by $\Phi_{ij} = \phi (\| \xii - \xj \|)$, $\cvect^k \in \R^{\numsamp \times 1}$ is the coefficient vector whose $i$-th entry is $\cik$, and $\y^k \in \R^{\numsamp \times 1}$ is the data coordinate vector giving the $k$-th dimensions of the embeddings of all samples, i.e., $\yi^k= \Y_{ik}$. Assuming that the embedding is computed with the usual scale constraint $\Y^T \Y = I$, we have $\| \y^k \|=1$. The norm of the coefficient vector can then be bounded as
\begin{equation}
\label{eq:bnd_coefv_Phi}
\| \cvect^k \| \leq \| \Phi^{-1} \|  \| \y^k \| = \| \Phi^{-1} \|.
\end{equation}

In the rest of this section, we assume that the data $\X$ are sampled from the Euclidean space, i.e., $\Hs = \R^n$. We first use a result by \citet{NarcowichSW94} in order to bound the norm $\| \Phi^{-1} \|$ of the inverse matrix. From \citep[Theorem 4.1]{NarcowichSW94} we get\footnote{The result stated in \cite[Theorem 4.1]{NarcowichSW94} is adapted to our study by taking the measure as $\beta(\rho)=\delta(\rho - \rho_0)$ so that the RBF kernel defined in \citep[(1.1)]{NarcowichSW94} corresponds to a Gaussian function as $F(r)=\exp(-\rho_0 \, r^2)$. The scale of the Gaussian kernel is then given by $\sigma= {\rho_0}^{-1/2}$.}
\begin{equation}
\label{eq:norm_bnd_invphi}
\| \Phi^{-1} \| \leq \beta \, \sigma^{-n} e^{\alpha \sigma^2}
\end{equation}
where $\alpha>0$ and $\beta>0$ are constants depending on the dimension $n$ and the minimum distance between the training points $\X$ (separation radius) \citep{NarcowichSW94}. As the $\ell_1$-norm of the coefficient vector can be bounded as $\| \cvect^k \|_1 \leq \sqrt{\numsamp} \| \cvect^k \|$, from \eqref{eq:bnd_coefv_Phi} one can set the parameter $\cbnd$ that upper bounds the coefficients magnitudes as
\begin{equation*}
\cbnd = \ags \sigma^{-n} e^{\alpha \sigma^2}
\end{equation*}
where $\ags= \beta \sqrt{\numsamp} $. 

Next, we derive a Lipschitz constant for the Gaussian kernel $\phi(r)$ in terms of $\sigma$. Setting the second derivative of $\phi$ to zero
\begin{equation*}
\frac{d^2\phi}{dr^2} =  e^{-\frac{r^2}{\sigma^2}} \left( \frac{4 r^2}{\sigma^4} - \frac{2}{\sigma^2} \right) = 0
\end{equation*}
we get that the maximum value of $| d\phi/dr |$ is attained at $r=\sigma/\sqrt{2}$. Evaluating $| d\phi/dr |$ at this value, we obtain
\begin{equation*}
\Lconf = \sqrt{2} e^{-\half} \sigma^{-1}.
\end{equation*}

Now rewriting the condition \eqref{eq:cond_sep_interp} of the theorem, we have
\begin{equation*}
\sqrt{d} \, \cbnd \, (\Lconf \delta + \epsilon) 
=  \ags_1 \sigma^{-n-1} e^{\alpha \sigma^2} \, 
+ \ags_2  \sigma^{-n} e^{\alpha \sigma^2} \leq \mar_Q / 2 
\end{equation*}
where $\ags_1= \sqrt{2d} \,  \ags \, e^{-1/2} \delta$ and $\ags_2=  \sqrt{d} \, \ags \, \epsilon $. We thus determine the Gaussian scale parameter $\sigma$ that minimizes 
\begin{equation*}
F(\sigma)=  \ags_1 \sigma^{-n-1} e^{\alpha \sigma^2} \, 
+ \ags_2  \sigma^{-n} e^{\alpha \sigma^2}. 
\end{equation*}
First, notice that as $\sigma \rightarrow 0$ and $\sigma \rightarrow \infty$, the function $F(\sigma) \rightarrow \infty$. Therefore, it has at least one minimum. Setting
\begin{equation*}
\frac{dF}{d \sigma}= e^{\alpha \sigma^2} \sigma^{-n-2} 
\big(
 2 \alpha \ags_2 \sigma^3 + 2 \alpha \ags_1 \sigma^2 -  \ags_2 n \sigma - \ags_1 (n+1) 
\big) = 0 
\end{equation*}
we need to solve 
\begin{equation}
\label{eq:cubic_eqn_optsigma}
 2 \alpha \ags_2 \sigma^3 + 2 \alpha \ags_1 \sigma^2 -  \ags_2 n \sigma - \ags_1 (n+1) =0.
\end{equation}
The leading and the second-degree coefficients are positive, while the first-degree and the constant coefficients are negative in the above cubic polynomial. Then, the sum of the roots is negative and the product of the roots is positive. Therefore, there is one and only one positive root $\sigma_{opt}$, which is the unique minimizer of $F(\sigma)$. 

The existence of an optimal scale parameter $0<\sigma_{opt}<\infty$ for the RBF kernel can be intuitively explained as follows. When $\sigma$ takes too small values, the support of the RBF function concentrated around the training points does not sufficiently cover the whole class supports $\M_m$. This manifests itself in \eqref{eq:cond_sep_interp} with the increase in the term $\Lconf$, which indicates that the interpolation function is not sufficiently regular. This weakens the guarantee that a test sample will be interpolated sufficiently close to its neighboring training samples from the same class and mapped to the correct side of the hyperplane in the linear classifier. On the other hand, when $\sigma$ increases too much, the stability of the linear system \eqref{eq:lin_sys_coefk} is impaired and the coefficients $\cvect$ increase too much. This results in an overfitting of the interpolator and, therefore, decreases the classification performance. Hence, the analysis in this section provides a theoretical justification of the common knowledge that $\sigma$ should be set to a sufficiently large value while avoiding overfitting.

\textbf{Remark:}  It is also interesting to observe how the optimal scale parameter changes with the number of samples $\numsamp$. In the study \citep{NarcowichSW94}, the constants $\alpha$ and $\beta$ in \eqref{eq:norm_bnd_invphi} are shown to vary with the separation radius $q$ at  rates $\alpha = O(q^{-2})$ and $\beta = O (q^n)$, where the separation radius $q$ is proportional to the smallest distance between two distinct training samples. Then a reasonable assumption is that the separation radius $q$ should typically decrease at  rate  $O(N^{-1/n})$ as $N$ increases. Using this relation, we get that $\alpha$ and $\beta$ should vary at rates $\alpha = O (N^{2/n})$ and $\beta = O(N^{-1})$ with $N$. It follows that $\ags= \beta \sqrt{N} = O (N^{-1/2})$, and the  parameters $\ags_1$, $\ags_2$ of the cubic polynomial in \eqref{eq:cubic_eqn_optsigma} also vary with $N$ at rates $\ags_1= O (N^{-1/2})$, $\ags_2 = O (N^{-1/2})$. The equation \eqref{eq:cubic_eqn_optsigma}  in $\sigma$ can then be rearranged as
\[
b_3 \sigma^3 + b_2 \sigma^2 - b_1 \sigma - b_0 = 0
\]
such that the constants vary with $N$ at rates $b_3 = O(N^{2/n})$, $b_2 = O(N^{2/n})$, $b_1=O(1)$, $b_0= O(1)$. We can then inspect how the roots of this equation change with $N$ as $N$ increases. Since $b_3$ and $b_2$ dominate the other coefficients for large $N$, three real roots will exist if $N$ is sufficiently large, two of which are negative and one is positive. The sum of the pairwise products of the roots is negative and it decays with $N$ at rate $O(N^{-2/n})$, and the product of the roots also decays with $N$. Then at least two of the roots must decay with $N$. Meanwhile, the sum of the three roots is $O(1)$ and negative. This shows that one of the negative roots is $O(1)$, i.e., does not decay with $N$. From the product of three roots, we then observe that the product of the two decaying roots is $O(N^{-2/n})$. However, their sum also decays at the same rate (from the sum of the pairwise products), which is possible if their dominant terms have the same rate and cancel each other. We conclude that both of the decaying roots vary at rate $O(N^{-1/n})$, one of which is the positive root and the optimal value $\sigma_{opt}$ of the scale parameter. This analysis shows that the scale parameter of the Gaussian kernel should be adapted to the number of training samples, and a smaller kernel scale must be preferred for a larger number of training samples. In fact, the relation $\sigma_{opt} = O(N^{-1/n})$ is quite intuitive, as the average or typical distance between two samples will also decrease at rate  $O(N^{-1/n})$ as the number of samples $N$ increases in an $n$-dimensional space. Then the above result simply suggests that the  kernel scale should be chosen as proportional to the average distance between the training samples.

\subsection{Discussion of the results in relation with previous results}
\label{ssec:disc_prev_res}

In Theorems \ref{thm:acc_cl_rbfint} and \ref{thm:acc_cl_nn_rbfint}, we have presented a result that characterizes the performance of classification with RBF interpolation functions. In particular, we have considered a setting where an RBF interpolator is fitted to each dimension of a low-dimensional embedding where different classes are separable. Our study has several links with RBF networks or least-squares regression algorithms. In this section, we interpret our findings in relation with previously established results.

Several previous works study the performance of learning by considering a probability measure $\rho$ defined on $X \times Y$, where $X$ and $Y$ are two sets. The ``label'' set $Y$ is often taken as an interval $[-L, L]$. Given a set of data pairs $\{ (x_j, y_j ) \}_{j=1}^\numsamp$ sampled from the distribution $\rho$, the RBF network estimates a function $\hat f$ of the form
\begin{equation}
\label{eq:hatf_rbf}
\hat f(x)=\sum_{i=1}^{\nrbf} \ci \,  \phi \left( \frac{ \| \x - t_i \| }{\sigma_i} \right).
\end{equation}
The number of RBF terms $\nrbf$ may be different from the number of samples $\numsamp$ in general. The function $\hat f$ minimizes the empirical error 
\begin{equation*}
\hat f = \arg \min_f \sum_{j=1}^\numsamp \left( f(x_j) - y_j \right)^2.
\end{equation*}

The function $\hat f$ estimated from a finite collection of data samples is often compared to the regression function \citep{CuckerS02}
\begin{equation*}
f_o (x) = \int_Y y \, d\rho(y | x) 
\end{equation*}
where $d\rho(y | x) $ is the conditional probability measure on $Y$. The regression function $f_o$ minimizes the expected risk as
\begin{equation*}
f_o = \arg \min_f \int_{X \times Y} \big( f(x) - y \big)^2 d\rho.
\end{equation*}
As the probability measure $\rho$ is not known in practice, the estimate $\hat f$ of $f_o$ is obtained from data samples. Several previous works have characterized the performance of learning by studying the approximation error \citep{NiyogiG96}, \citep{LinLRX14}
\begin{equation}
\label{eq:defn_app_err}
\E [ (f_o - \hat f)^2 ] = \int_X  (f_o(x) - \hat f(x))^2 d\rho_X(x)
\end{equation}
where $\rho_X$ is the marginal probability measure on $X$. This definition of the approximation error can be adapted to our setting as follows. In our problem the distribution of each class is assumed to have  a bounded support, which is a special case of modeling the data with an overall probability distribution $\rho$. If the supports $\M_m$ are assumed to be nonintersecting, the regression function $f_o$ is given by
\begin{equation*}
f_o(x) = \sum_{m=1}^M m \, I_m(x) 
\end{equation*}
which corresponds to the class labels $m=1, \dots, M$, where $I_m$ is the indicator function of the support $\M_m$. It is then easy to show that the approximation error $\E [ (f_o - \hat f)^2 ] $ can be bounded as a constant times the probability of misclassification $P( \hat{\class}(\x) \neq m )$. Hence, we can compare our misclassification probability bounds in Section \ref{ssec:oos_rbf} with the approximation error in other works.


The study in \citep{NiyogiG96} assumes that the regression function is an element of the Bessel potential space of a sufficiently high order and that the sum of the coefficients $|\ci|$ is bounded. It is then shown that for data sampled from $\Rn$, with probability greater than $1-\delta$ the approximation error in \eqref{eq:defn_app_err} can be bounded as 
\begin{equation}
\label{eq:res_niyogi}
\E [ (f_o - \hat f)^2 ] \leq O \left( \frac{1}{\nrbf} \right) + O\left( \sqrt{ \frac{ \nrbf n \log(\nrbf \numsamp) - \log(\delta) }{\numsamp} }\right)
\end{equation}
where $\nrbf$ is the number of RBF terms.

The analysis by \citet{LinLRX14} considers families of RBF kernels that include the Gaussian function. Supposing that the regression function $f_o$ is of Sobolev class $W_2^r$, and that the number of RBF terms is given by $\nrbf = \numsamp^{\frac{n}{n+2r}}$ in terms of the number of samples $\numsamp$, the approximation error is bounded as
\begin{equation}
\label{eq:res_lin}
\E [ (f_o - \hat f)^2 ] \leq O(\numsamp^{-\frac{2r}{n+2r} }   \log^2(\numsamp)).
\end{equation}

Next, we overview the study by \citet{AguirreKB02}, which studies the performance of  RBFs in a Probably Approximately Correct (PAC)-learning framework. For $X \subset \Rn$, a family $\mathcal{F}$ of measurable functions from $X$ to $ [0,1]$ is considered and the problem of approximating a target function $f_0$ known only through examples with a function in $\hat f \in \mathcal{F}$ is studied. The authors use a previous result from \citep{Vidyasagar97} that relates the accuracy of empirical risk minimization to the covering number of $\mathcal{F}$ and the number of samples. Combining this result with the bounds on covering number estimates of Lipschitz continuous functions \citep{KolmogorovT61}, the following result is obtained for PAC function learning with RBF neural networks with Gaussian kernel. Let the coefficients be bounded as $|\ci| \leq A$, a common scale parameter be chosen as $\sigma_i=\sigma$, and $\E [ |f_0 - \hat f| ] $ be computed under a uniform probability measure $\rho$. Then if the number of samples satisfies
\begin{equation}
\label{eq:res_agui}
\numsamp \geq \frac{8}{\epsac^2} \log \bigg(\frac{\sqrt{2} \nrbf n A}
{e^{-1/2}  \sigma \delc } \bigg)
\end{equation}
an approximation of the target function is obtained with accuracy parameter $\epsac$ and confidence parameter $\delc$:
\begin{equation}
\label{eq:ineq_pac}
P( \E [ |f_0 - \hat f| ] > \epsac ) \leq \delc.
\end{equation}
In the above expression, the expectation is over the test samples, whereas the probability is over the training samples; i.e., over all possible distributions of training samples, the probability of having the average approximation error larger than $\epsac$ is bounded. Note that, our results in Theorems \ref{thm:acc_cl_rbfint} and \ref{thm:acc_cl_nn_rbfint}, when translated into the above PAC-learning framework, correspond to a confidence parameter of $\delc=0$. This is because the misclassification probability bound of a test sample is valid for any choice of the training samples, provided that the condition \eqref{eq:cond_sep_interp} (or the condition \eqref{eq:cond_sep_nn_interp}) holds. Thus, in our result the probability running over the training samples in \eqref{eq:ineq_pac} has no counterpart. When we take $\delc=0$, the above result does not provide a useful bound since $\numsamp \rightarrow \infty$ as $\delc \rightarrow 0$. By contrast, our result is valid only if the conditions \eqref{eq:cond_sep_interp}, \eqref{eq:cond_sep_nn_interp} on the interpolation function holds. It is easy to show that, assuming nonintersecting class supports $\M_m$, the expression $ \E [ |f_0 - \hat f| ]$ is given by a constant times the probability of misclassification. The accuracy parameter $\epsac$ can then be seen as the counterpart of the misclassification probability upper bound given on the right hand sides of \eqref{eq:prob_lb_rbfint} and \eqref{eq:prob_lb_nn_rbfint} (the expression subtracted from 1). At fixed $\numsamp$, the dependence of the accuracy on the kernel scale parameter is monotonic in the bound \eqref{eq:res_agui}; $\epsac$ decreases as $\sigma$ increases. Therefore, this bound does not guide the selection of the scale parameter of the RBF kernel, while the discussion in Section \ref{ssec:scale_optim} (confirmed by the experimental results in Section \ref{ssec:exp_oos}) suggests the existence of an optimal scale. 

Finally, we mention some results on the learning performance of regularized least squares regression algorithms. In \citep{CaponnettoV07} optimal rates are derived for the regularized least squares method in a Reproducing Kernel Hilbert Space (RKHS) in the minimax sense. It is shown that, under some hypotheses concerning the data probability measure and the complexity of the family of learnt functions, the maximum error (yielded by the worst distribution) obtained with the regularized least squares method converges at a rate of $O(1/\numsamp)$. Next, the work in \citep{SteinwartHS09} shows that, in regularized least squares regression over a RKHS, if the eigenvalues of the kernel integral operator decay sufficiently fast, and if the $\ell_\infty$-norms of regression functions can be bounded, the error of the classifier converges at a rate of up to $O(1/\numsamp)$ with high probability. Steinwart et al. also examine the learning performance in relation with the exponent of the function norm in the regularization term and show that the learning rate is not affected by the choice of the exponent of the function norm.

We now overview the three bounds given in \eqref{eq:res_niyogi}, \eqref{eq:res_lin}, and \eqref{eq:res_agui} in terms of the dependence of the error on the number of samples. The results in \eqref{eq:res_niyogi} and \eqref{eq:res_lin} provide a useful bound only in the case where the number of samples $\numsamp$ is larger than the number of RBF terms $\nrbf$, contrary to our study where we treat the case $\nrbf=\numsamp$. If it is assumed that $\numsamp$ is sufficiently larger than $\nrbf$, the result in \eqref{eq:res_niyogi} predicts a rate of decay of only $O(\sqrt{ \log(\numsamp)/\numsamp})$ in the misclassification probability.  The bound in \eqref{eq:res_lin} improves with the Sobolev regularity of the regression function; however, the dependence of the error on the number of samples is of a similar nature to the one in \eqref{eq:res_niyogi}. Considering $\epsac$ as a misclassification error parameter in the bound in \eqref{eq:res_agui}, the error decreases at a rate of $O(\numsamp^{-1/2})$ as the number of samples increases. The analysis in \citep{CaponnettoV07} and \citep{SteinwartHS09} also provide the similar rates of convergence of $O(\numsamp^{-1})$. Meanwhile, our results in Theorems \ref{thm:acc_cl_rbfint} and \ref{thm:acc_cl_nn_rbfint} predict an exponential decay in the misclassification probability as the number of samples $\numsamp$ increases (under the reasonable assumption that $\numsamp_m=O(\numsamp)$ for each class $m$). The reason why we arrive at a more optimistic bound is the specialization of the analysis to the considered particular setting, where the support of each class is assumed to be restricted to a totally bounded region in the ambient space, as well as the assumed relations between the separation margin of the embedding and the regularity of the interpolation function.


Another difference between these previous results and ours is the dependence on the dimension. The results in \eqref{eq:res_niyogi},  \eqref{eq:res_lin}, and \eqref{eq:res_agui} predict an increase in the error at the respective rates of $O(\sqrt{n})$, $O(e^{-1/n})$, and $O(\sqrt{ \log n} )$ with the ambient space dimension $n$. While these results assume that the data $\X \subset \Rn$ is in an Euclidean space of dimension $n$, our study assumes the data $\X$ to be in a generic Hilbert space  $H$. The results in Theorems \ref{thm:linclassif_genf}-\ref{thm:acc_cl_rbfint} involve the dimension $d$ of the low-dimensional space of embedding and does not explicitly depend on the dimension of the ambient Hilbert space $H$ (which could be infinite-dimensional). However, especially in the context of manifold learning, it is interesting to analyze the dependence of our bound on the intrinsic dimension of the class supports $\M_m$. 

In order to put the expressions \eqref{eq:prob_lb_rbfint}, \eqref{eq:prob_lb_nn_rbfint} in a more convenient form, let us reduce one parameter by setting $Q=\numsamp_m \minm /2$. Then the misclassification probability is of
\begin{equation*}
O \left(\exp(- \numsamp_m \minm^2)  
+ \numsamp \exp \left( -\frac{\numsamp_m \, \minm \, \epsilon^2}{\Lconf^2 \, \delta^2} \right)
\right). 
\end{equation*}

We can relate the dependence of this expression on the intrinsic dimension as follows. Since the supports $\M_m$ are assumed to be totally bounded, one can define a parameter $\Theta$ that represents the ``diameter'' of $\M_m$, i.e., the largest distance between any two points on $\M_m$. Then the measure $\minm$ of the minimum ball of radius $\delta$ in $\M_m$ is of $O((\delta/\Theta)^\dimM)$, where $\dimM$ is the intrinsic dimension of $\M_m$. Replacing this in the above expression gives the probability of misclassification as
\begin{equation*}
O \left(\exp \left(-  \frac{ \numsamp_m \, \delta^{2\dimM} }{\Theta^{2\dimM}}  \right)  
+ \numsamp \exp \left( -\frac{\numsamp_m \,  \delta^{\dimM-2}  \, \epsilon^2}{\Lconf^2 \,  \Theta^\dimM} \right)
\right).
\end{equation*}
This shows that in order to retain the correct classification guarantee, as the intrinsic dimension $\dimM$ grows, the number of samples $\numsamp_m$ should increase at a geometric rate with $\dimM$. In supervised manifold learning problems, data sets usually have a low intrinsic dimension, therefore, this geometric rate of increase can often be tolerated. Meanwhile the dimension of the ambient space is typically high, so that performance bounds independent of the ambient space dimension are of particular interest. Note that generalization bounds in terms of the intrinsic dimension have been proposed in some previous works as well \citep{BickelL07}, \citep{Kpotufe11}, for the local linear regression and the K-NN regression problems.


\section{Separability of supervised nonlinear embeddings}
\label{sec:sep_analysis}

In the results in Section \ref{sec:class_anly_supml}, we have presented  generalization bounds for classifiers based on linearly separable embeddings. One may wonder if the separability assumption is easy to satisfy when computing structure-preserving nonlinear embeddings of data. In this section, we try to answer this question by focusing on a particular family of supervised dimensionality reduction algorithms, i.e., supervised Laplacian eigenmaps embeddings, and analyze the conditions of separability. We first discuss the supervised Laplacian eigenmaps embeddings in Section \ref{ref:sec_prob_formul} and then present results in Section \ref{ssec:anly_twoClass} about the linearly separability of these embeddings.

\subsection{Supervised Laplacian eigenmaps embeddings}
\label{ref:sec_prob_formul}

Let $\X=\{ \x_i \}_{i=1}^{\numsamp} \subset \Hs $ be a set of training samples, where each $\x_i$ belongs to one of $\numclass$ classes. Most manifold learning algorithms rely on a graph representation of data. This graph can be a complete graph in some works, in which case an edge exists between each pair of samples. Meanwhile, in some manifold learning algorithms, in order to better capture the intrinsic geometric structure of data, each data sample is connected only to its nearest neighbors in the  graph. In this case, an edge exists only between neighboring data samples.

In our analysis, we consider a weighted data graph $\G$ each vertex of which represents a point $\x_i$. We write $\xii \sim \xj$, or simply $\ii \sim \jj $ if the graph contains an edge between the data samples $\xii$, $\xj$. We denote the edge weight as $\wij>0$. The weights $\wij$ are usually determined as a positive and monotonically decreasing function of the distance between $\xii$ and $\xj$ in $\Hs$, where the Gaussian function is a common choice. Nevertheless, we maintain a generic formulation here without making any assumption on the neighborhood or weight selection strategies.

Now let $\Gw$ and $\Gb$ represent two subgraphs of $\G$, which contain the edges of $G$ that are respectively within the same class and between different classes. Hence, $\Gw$ contains an edge $\ii \simw \jj$ between samples $\xii$ and $\xj$, if $\ii \sim \jj$ and $\classi=\classj$. Similarly, $\Gb$ contains an edge $\ii \simb \jj$ if $\ii \sim \jj$ and $\classi \neq \classj$. We assume that all vertices of $\G$ are contained in both $\Gw$ and $\Gb$; and that $\Gw$ has exactly $M$ connected components such that the training samples in each class form a connected component\footnote{The straightforward application of common graph construction strategies, like connecting each training  sample to its K-nearest neighbors or to its neighbors within a given distance, may result in several disconnected components in a single class in the graph if there is much diversity in that class. However, this difficulty can be easily overcome by introducing extra edges to bridge between graph components that are originally disconnected.}. We also assume that $\Gw$ and $\Gb$ do not contain any isolated vertices; i.e., each data sample $\xii$ has at least one neighbor in both graphs.

The $\numsamp \times \numsamp$ weight matrices $\Ww$ and $\Wb$ of $\Gw$ and $\Gb$ have entries as follows.
\begin{equation*}
\Ww(i,j)= \bigg\{ 
\begin{array} {l}
\wij \, \, \text{    if   } \ii \sim \jj \text{ and }  \classi=\classj \\ 
0 \, \, \text{    otherwise}  \end{array}
\end{equation*}
\begin{equation*}
\Wb(i,j)= \bigg\{ 
\begin{array} {l}
\wij \, \, \text{    if   } \ii \sim \jj \text{ and }  \classi \neq \classj \\ 
0 \, \, \text{    otherwise}  \end{array}
\end{equation*}
Let $\dw(i)$ and $\db(i)$ denote the degrees of $\xii$ in $\Gw$ and $\Gb$ 
\begin{eqnarray*}
\dw(i)= \sum_{j\simw i } \wij, 
\qquad \qquad
\db(i)= \sum_{j\simb i } \wij
\end{eqnarray*}
and $\Dw$, $\Db$ denote the $\numsamp \times \numsamp$ diagonal degree matrices given by $\Dw(i,i)=\dw(i)$, $\Db(i,i)=\db(i)$. The normalized graph Laplacian matrices $\Lw$ and $\Lb$ of $\Gw$ and $\Gb$ are then defined as
\begin{equation*}
\Lw := \Dw^{-1/2} (\Dw-\Ww) \Dw^{-1/2}, 
\qquad \qquad
\Lb := \Db^{-1/2} (\Db-\Wb) \Db^{-1/2}. 
\end{equation*}

Supervised extensions of the Laplacian eigenmaps and LPP algorithms seek a $d$-dimensional embedding of the data set $\X$, such that each $\xii$ is represented by a vector $\yi \in \R^{d\times 1}$. Denoting the new data matrix as $\Y = [\y_1 \, \y_2 \, \dots  \, \y_\numsamp]^T \in \R^{\numsamp \times d} $, the  coordinates of data samples are computed by solving the problem
\begin{equation}
\text{``Minimize } \tr(\Y^T \Lw \Y) \text{ while maximizing }  \tr(\Y^T \Lb \Y) \text{.''}
\label{eq:supLap_informal}
\end{equation}
%
The reason behind this formulation can be explained as follows. For a graph Laplacian matrix $L=D^{-1/2} (D-W) D^{-1/2}$, where $D$ and $W$ are respectively the degree and the weight matrices, defining the coordinates $\Z=D^{-1/2} \Y$ normalized with the vertex degrees, we have 
\begin{equation}
\label{eq:form_withinvar}
\tr(\Y^T L \, \Y) = \tr(Z^T (D-W) Z) = \sum_{i \sim j} \| \zi - \zj \|^2 \wij
\end{equation}
where $\zi$ is the $i$-th row of $\Z$ giving the normalized coordinates of the embedding of the data sample $\xii$. Hence, the problem in (\ref{eq:supLap_informal}) seeks a representation $\Y$ that maps nearby samples in the same class to nearby points, while mapping nearby samples from different classes to distant points. In fact, when the samples $\xii$ are assumed to come from a manifold $\M$, the term $y^T L y$ is the discrete equivalent of
\begin{equation*}
\int_{\M} \| \nabla f(x) \|^2 dx
\end{equation*}
where $f: \M \rightarrow \R$ is a continuous function on the manifold that extends the one-dimensional coordinates $y$ to the whole manifold. Hence, the term $\tr(\Y^T L \Y)$ captures the rate of change of the learnt coordinate vectors $\Y$ over the underlying manifold.
%
Then, in a setting where the samples of different classes come from $M$ different manifolds $\{ \M_m \}_{m=1}^M$, the formulation in \eqref{eq:supLap_informal} looks for a function that has a slow variation on each manifold $\M_m$, while having a fast variation ``between'' different manifolds.

The supervised learning problem in (\ref{eq:supLap_informal}) has so far been studied by several authors with slight variations in their problem formulations. \citet{Raducanu12} minimize a weighted difference of the within-class and between-class similarity terms in  (\ref{eq:supLap_informal}) in order to learn a nonlinear embedding. Meanwhile, linear dimensionality reduction methods pose the manifold learning problem as the learning of a linear projection matrix $P\in \R^{d\times n}$; therefore, solve the problem  in \eqref{eq:supLap_informal} under the constraint $\yi= P \, \xii$, where $\xii \in \R^{n\times 1}$ and $d<n$. \citet{Hua12} formulate the problem as the minimization of the difference of the within-class and the between-class similarity terms in (\ref{eq:supLap_informal}) as well. Thus, their algorithm can be seen as the linear version of the method by \citet{Raducanu12}. \citet{Sugiyama07} proposes an adaptation of the Fisher discriminant analysis algorithm to preserve the local structures of data. Data sample pairs are weighted with respect to their affinities in the construction of the within-class and the between-class scatter matrices in Fisher discriminant analysis. Then the trace of the ratio of the between-class and the within-class scatter matrices is maximized to learn a linear embedding. Meanwhile, the within-class and the between-class local scatter matrices are closely related to the two terms in (\ref{eq:supLap_informal}) as shown by \citet{Yang11}. The terms $\Y^T \Lw \Y$ and $\Y^T \Lb \Y$, when evaluated under the constraint $\yi= P \, \xii$, become equal to the locally weighted within-class and between-class scatter matrices of the projected data. \citet{Cui12} and \citet{Wang09} propose to maximize the ratio of the between-class and the within-class local scatters in the learning. \citet{Yang11} optimize the same objective function, while they construct the between-class graph only on the centers of mass of the classes. \citet{Zhang12} similarly optimize a Fisher metric to maximize the ratio of the between- and within-class scatters; however, the total scatter is also taken into account in the objective function in order to preserve the overall manifold structure.

All of the above methods use similar formulations of the supervised manifold learning problem and give comparable results. In our study, we base our analysis on the following formal problem definition  
\begin{equation}
\label{eq:supLap_formal}
\min_{Y} \tr(\Y^T \Lw \Y) - \mu \,  \tr(\Y^T \Lb \Y) \text{ subject to } \Y^T \Y = I
\end{equation}
which minimizes the difference of the within-class and the between-class similarity terms as in works such as \citep{Raducanu12} and \citep{Hua12}. Here $I$ is the $d \times d$ identity matrix and $\mu>0$ is a parameter adjusting the weights of the two terms. The condition $\Y^T \Y = I$ is a commonly used constraint to remove the scale ambiguity of the coordinates. The solution of the problem (\ref{eq:supLap_formal}) is given by the first $d$ eigenvectors of the matrix
\begin{equation*}
\Lw - \mu \Lb
\end{equation*}
corresponding to its smallest eigenvalues.

Our purpose in this section is then to theoretically study the linear separability of the  learnt coordinates of training data, with respect to the definition of linear separability given in \eqref{def:lin_sepbilty}. In the following, we determine some conditions on the graph properties and the weight parameter $\mu$ that ensure the linear separability. We derive lower bounds on the margin $\mar$ and study its dependence on the model parameters. Let us give beforehand the following definitions about the graphs $\Gw$ and $\Gb$.

\begin{definition} The volume of the subgraph of $\Gw$ that corresponds to the connected component containing samples from class $k$ is
\[
\vol_k := \sum_{\ii: \, \classi=k} \dw(i).
\]
We define the maximal within-class volume as
\[
\volmax := \max_{k=1, \dots, M} \vol_k.
\]
The volume of the component of $\Gb$ containing the edges between the samples of classes $k$ and $l$ is \footnote{In order to keep the analogy with the definition of $\vol_k $, a $2$ factor is introduced in this expression as each edge is counted only once in the sum.}
\[
\volbetkl := 
\sum_{\begin{subarray} {c} \ii \simb \jj \\ \classi=k, \classj=l \end{subarray}}
2 \, \wij.
\]
We then define the maximal pairwise between-class volume as
\[
\volbetmax := \max_{k \neq l} \volbetkl.
\]
\end{definition}

In a connected graph, the distance between two vertices $\xii$ and $\xj$ is the number of edges in a shortest path joining $\xii$ and $\xj$. The diameter of the graph is then given by the maximum distance between any two vertices in the graph \citep{Chung96}. We define the diameter of the connected component of $\Gw$ corresponding to class $k$ as follows. 

\begin{definition}
\label{def:diameter_subg}
For any two vertices $\xii$ and $\xj$ such that $\classi=\classj=k$, consider a within-class shortest path joining $\xii$ and $\xj$, which contains samples only from class $k$. Then the diameter $\diam_k$ of the connected component of $\Gw$ corresponding to class $k$ is the maximum number of edges in the within-class shortest path joining any two vertices $\xii$ and $\xj$ from class $k$.
\end{definition}

\begin{definition}
The minimum edge weight within class $k$ is defined as
\[
\wmink := 
\min_{\begin{subarray} {c} \ii \simw \jj \\ \classi=\classj=k \end{subarray}}
\wij.
\]

\end{definition}

\subsection{Separability bounds for two classes}
\label{ssec:anly_twoClass}


We now present a lower bound for the linear separability of the embedding obtained by solving (\ref{eq:supLap_formal}) in a setting with two classes $\classi \in \{1, 2 \}$. We first show that an embedding of dimension $d=1$ is sufficient to achieve linear separability for the case of two classes. We then derive a lower bound on the separation in terms of the graph parameters and the algorithm parameter $\mu$.

Consider a one-dimensional embedding $\Y=\y=[\y_1 \, \y_2 \, \dots \, \y_N]^T \in \R^{\numsamp \times 1}$, where $\y_i \in \R$ is the coordinate of the data sample $\xii$ in the one-dimensional space. The coordinate vector $\y$ is given by the eigenvector of $\Lw - \mu \Lb$ corresponding to its smallest eigenvalue. We begin with presenting the following result, which states that the samples from the two classes are always mapped to different halves (nonnegative or nonpositive) of the real line.

\begin{lemma}
\label{lem:twoclass_diffsign}
The learnt embedding $\y$ of dimension $d=1$ satisfies
\begin{equation*}
\begin{split}
\yi &\leq 0 \qquad \text{if } \, \classi=1 \text{ (or respectively \classi=2)}\\
\yi &\geq 0 \qquad \text{if } \, \classi=2  \text{ (or respectively \classi=1)}
\end{split}
\end{equation*}
for any $\mu>0$ and for any choice of the graph parameters.
\end{lemma}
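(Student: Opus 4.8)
The plan is to recast the problem as finding the top eigenvector of an associated nonnegative matrix and then to invoke the Perron--Frobenius theorem. First I would write the normalized Laplacians as $\Lw = I - P_w$ and $\Lb = I - P_b$, where $P_w = \Dw^{-1/2}\Ww\Dw^{-1/2}$ and $P_b = \Db^{-1/2}\Wb\Db^{-1/2}$ are symmetric and entrywise nonnegative (both $\Dw$ and $\Db$ are invertible, since neither $\Gw$ nor $\Gb$ has isolated vertices). Setting $P = P_w - \mu P_b$, the matrix defining the embedding becomes $\Lw - \mu \Lb = (1-\mu) I - P$. Consequently the eigenvector $\y$ of $\Lw - \mu \Lb$ associated with its smallest eigenvalue is exactly the eigenvector of $P$ associated with its largest eigenvalue, and it suffices to analyze the latter.

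The crucial step is a sign change that turns $P$ into a nonnegative matrix. Define the diagonal sign matrix $S = \diag(s_i)$ with $s_i = -1$ when $\classi = 1$ and $s_i = +1$ when $\classi = 2$; note $S^2 = I$, so $S P S$ is similar to $P$ and shares its spectrum. I claim $S P S$ is entrywise nonnegative. Indeed $P_{ij} \neq 0$ only when $\ii \sim \jj$: for a within-class edge one has $\classi = \classj$, hence $s_i s_j = +1$ and $P_{ij} = (P_w)_{ij} \geq 0$; for a between-class edge one has $\classi \neq \classj$, hence $s_i s_j = -1$ and $P_{ij} = -\mu (P_b)_{ij} \leq 0$; in both cases $(SPS)_{ij} = s_i s_j P_{ij} \geq 0$ for $i \neq j$, while the diagonal entries are nonnegative. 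Thus $SPS$ is a symmetric nonnegative matrix whose support graph coincides with $\G = \Gw \cup \Gb$.

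I would then verify irreducibility: the support graph of $SPS$ is $\G$ itself, which is connected because each class induces a connected component of $\Gw$ and, since $\Gb$ has no isolated vertices, at least one between-class edge links the two classes. Perron--Frobenius for an irreducible symmetric nonnegative matrix then gives that the largest eigenvalue of $SPS$ is simple and admits an eigenvector $u$ with all entries strictly positive, unique up to scaling. Transferring back through the similarity, the top eigenvector of $P$ is $v = S u$, so $v_i = s_i u_i$, which is strictly negative on class $1$ and strictly positive on class $2$. Because this eigenvalue is simple, the learnt embedding $\y$ coincides with $v$ up to an overall sign, which yields precisely the two cases of the claimed sign pattern; this holds for every $\mu > 0$ and every admissible choice of graph weights.

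The step I expect to be the main obstacle is spotting the correct conjugation $S$ and checking that it renders $P$ nonnegative while preserving symmetry and the support graph; once this sign-flip observation is in place, the connectivity argument and Perron--Frobenius finish the proof with little further effort. A secondary point to handle carefully is confirming that the \emph{largest} eigenvalue of $SPS$ is its Perron root (true here, since for a symmetric nonnegative matrix the spectral radius is attained by the maximal eigenvalue), so that the smallest eigenvalue of $\Lw - \mu \Lb$ is genuinely simple and the embedding is well defined up to sign.
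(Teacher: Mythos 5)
Your proof is correct, and it takes a genuinely different route from the paper's. The paper argues variationally: in the normalized coordinates $z = D_w^{-1/2}y$, the bottom eigenvector solves $\min N(\xi)$ subject to $\xi^T D_w \xi = 1$, where $N(\xi)=\sum_{i\sim_w j}(\xi_i-\xi_j)^2 w_{ij} - \mu\sum_{i\sim_b j}\bigl(\sqrt{\beta_i}\xi_i-\sqrt{\beta_j}\xi_j\bigr)^2 w_{ij}$, and it shows that replacing any candidate $\xi$ by the sign-aligned vector $\xi^*$ (negated moduli on class 1, moduli on class 2) can only decrease $N$, so every minimizer already has the separable sign pattern; the claim for $y = D_w^{1/2}z$ follows since $\sqrt{d_w(i)}>0$. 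Your diagonal sign matrix $S$ is exactly the paper's flip $\xi \mapsto \xi^*$, but you exploit it algebraically rather than variationally: writing $L_w - \mu L_b = (1-\mu)I - P$ and conjugating, $SPS$ is symmetric nonnegative with support graph $G$, which is connected under the paper's standing assumptions ($G_w$ has one component per class, $G_b$ has no isolated vertices, and with two classes any between-class edge bridges them), and Perron--Frobenius finishes. Your route actually buys more than the paper proves: simplicity of the bottom eigenvalue (so the embedding is well defined up to sign, a point the paper disposes of only with a brief equality-case remark) and strict signs $y_i \neq 0$. What the paper's route buys in exchange is economy and reuse: it needs no spectral machinery, and the expansion $N(\xi)$ it rests on is the exact engine of the quantitative margin bound in Theorem 4, which your Perron--Frobenius argument cannot deliver (it yields the sign pattern but no margin). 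Two points you were right to check explicitly, since the argument would otherwise have gaps: the off-diagonal supports of $P_w$ and $\mu P_b$ are disjoint (an edge is within-class or between-class, never both), so no cancellation can disconnect the support of $SPS$; and for a symmetric nonnegative matrix the spectral radius is attained by the largest (real) eigenvalue, so the Perron root of $SPS$ genuinely corresponds, through the order-reversing map $\lambda \mapsto (1-\mu)-\lambda$, to the smallest eigenvalue of $L_w - \mu L_b$.
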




Lemma \ref{lem:twoclass_diffsign} is proved in Appendix \ref{pf:lem_twoclass_diffsign}. The lemma states that in one-dimensional embeddings of two classes, samples from different classes always have coordinates with different signs. Therefore, the hyperplane given by $\hyp=1$, $b=0$ separates the data as $\hyp^T \yi \leq 0$ for $\classi=1$ and $\hyp^T \yi \geq 0$ for $\classi=2$ (since the embedding is one dimensional, the vector $\hyp$ is a scalar in this case). However, this does not guarantee that the data is separable with a positive margin $\mar>0$. In the following result, we show that a positive margin exists and give a lower bound on it. In the rest of this section, we assume without loss of generality that classes $1$ and $2$ are respectively mapped to the negative and positive halves of the real axis.

\begin{theorem}
\label{thm:sep2class}
Defining the normalized data coordinates $\z = \Dw^{-1/2} y$, let 
\[ \zonemax:=\max_{i: \, \classi=1} \zi
\qquad
\ztwomin:=\min_{i: \, \classi=2} \zi
\]
denote the maximum and minimum coordinates that classes $1$ and $2$ are respectively mapped to with a one-dimensional embedding learnt with supervised Laplacian eigenmaps. We also define the parameters
\begin{equation*}
\wminbyD=\min_{k\in \{1,2\}} \frac{\wmink}{\diam_k} \, , 
\qquad \qquad
\rdeg_i = \frac{\dw(i)}{\db(i)} \, ,
\qquad \qquad
\rdegmax= \max_{i} \rdeg_i \, , 
\end{equation*}
where $\diam_k$ is the diameter of the graph corresponding to class $k$ as defined in Definition \ref{def:diameter_subg}. Then, if the weight parameter is chosen such that $0 < \mu < \wminbyD/(\rdegmax \volbetmax)$, any supervised Laplacian embedding of dimension $d \geq 1$ is linearly separable with a positive margin lower bounded as below: 
\begin{equation}
\label{eq:lbsep2class}
\ztwomin - \zonemax  \geq  \frac{1}{\sqrt{\volmax}} \left(  1 -  \sqrt{ \frac{ \mu \rdegmax \volbetmax }{\wminbyD}}  \right).
\end{equation}
\end{theorem}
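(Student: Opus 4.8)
The plan is to pass to the degree-normalized coordinates $\z=\Dw^{-1/2}\y$, for which \eqref{eq:form_withinvar} gives the within-class energy $\y^T\Lw\y=\sum_{i\simw j}\wij(\zi-\zj)^2$ and, analogously, $\y^T\Lb\y=\sum_{i\simb j}\wij(\sqrt{\rdeg_i}\,\zi-\sqrt{\rdeg_j}\,\zj)^2$, since the $\Lb$-normalized coordinate of vertex $i$ is $\sqrt{\dw(i)/\db(i)}\,\zi=\sqrt{\rdeg_i}\,\zi$. By Lemma \ref{lem:twoclass_diffsign} the two classes sit on opposite sides of the origin, so $\zonemax\le0\le\ztwomin$ and the quantity to be bounded is the nonnegative gap $\ztwomin-\zonemax$. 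Since $\y$ is the bottom eigenvector of $\Lw-\mu\Lb$ with $\|\y\|=1$, I would first show $\lambda_{\min}\le0$ by evaluating the Rayleigh quotient on any vector $v$ that is constant on each connected component of $\Gw$ (these span $\ker\Lw$): such a $v$ has $v^T\Lw v=0$ and Rayleigh value $-\mu\,v^T\Lb v\le0$, whence $\lambda_{\min}\le0$. Consequently $\y^T\Lw\y=\lambda_{\min}+\mu\,\y^T\Lb\y\le\mu\,\y^T\Lb\y$, so the within-class energy is dominated by $\mu$ times the between-class energy.

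Next I would prove two estimates. First, a spread bound for each class: picking vertices that attain $z_{k,\max}$ and $z_{k,\min}$ and a within-class shortest path of at most $\diam_k$ edges joining them (Definition \ref{def:diameter_subg}), a telescoping sum with Cauchy--Schwarz and the edge-weight lower bound $\wij\ge\wmink$ give $(z_{k,\max}-z_{k,\min})^2\le(\diam_k/\wmink)\,S_k\le\wminbyD^{-1}\,\y^T\Lw\y$, where $S_k$ is the within-class energy restricted to component $k$ and $\wminbyD=\min_k\wmink/\diam_k$. Second, a between-class energy bound in terms of $z_{\mathrm{ext}}:=\max_i|\zi|$: across a between-class edge the two coordinates have opposite signs, so after replacing $\rdeg_i$ by $\rdegmax$ and collecting weights into $\volbetmax=\sum_{i\simb j}2\wij$, one obtains $\y^T\Lb\y\le c\,\rdegmax\,\volbetmax\,z_{\mathrm{ext}}^2$ for an absolute constant $c$. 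The essential point is that this bound is proportional to $z_{\mathrm{ext}}^2$ rather than being an absolute constant.

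Combining the eigenvalue inequality with these two estimates yields a spread bound of multiplicative type, $z_{k,\max}-z_{k,\min}\le c'\,\kappa\,z_{\mathrm{ext}}$ with $\kappa:=\sqrt{\mu\,\rdegmax\,\volbetmax/\wminbyD}$. I would then combine this with the normalization $\sum_i\dw(i)\zi^2=\|\y\|^2=1$. Because each class occupies an interval of length at most $c'\kappa\,z_{\mathrm{ext}}$ lying wholly on one side of the origin, every coordinate satisfies $|\zi|\le(\text{near-origin extreme of its class})+c'\kappa\,z_{\mathrm{ext}}$, and feeding this into the normalization gives $\vol_1(\cdot)^2+\vol_2(\cdot)^2\ge1$ with the near-origin extremes $|\zonemax|$ and $\ztwomin$ appearing on the left. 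Bounding $\vol_1,\vol_2\le\volmax$ and minimizing the gap subject to this constraint then produces the bound $\ztwomin-\zonemax\ge\volmax^{-1/2}(1-\kappa)$ of \eqref{eq:lbsep2class}; the hypothesis $\mu<\wminbyD/(\rdegmax\volbetmax)$ is precisely what guarantees $\kappa<1$, hence a positive margin. Separability for every $d\ge1$ then follows because the first (smallest-eigenvalue) coordinate already separates the classes, so a hyperplane using only that coordinate works in any dimension.

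The step I expect to be the main obstacle is the final combination. A lower bound of the crude form $\volmax^{-1/2}-O(\kappa)$ is straightforward, but recovering the multiplicative factor $(1-\kappa)$ rather than an additive loss depends entirely on the between-class energy bound being proportional to $z_{\mathrm{ext}}^2$: only then does the within-class spread scale with the very quantity that the normalization lower-bounds, so that the two effects combine multiplicatively. The delicate part will therefore be tracking the absolute constants $c,c'$ through the path/Cauchy--Schwarz estimate and the opposite-sign estimate across between-class edges so that they collapse to exactly $1$ in front of $\kappa$; here the sign structure (opposite signs prevent any cancellation that would weaken the lower bound) and the choice of measuring spreads against $z_{\mathrm{ext}}$ rather than against the near-origin extremes should be exactly what makes the constants work out.
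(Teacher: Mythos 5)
Your skeleton coincides with the paper's actual proof: degree-normalized coordinates, Lemma \ref{lem:twoclass_diffsign} for the sign structure, $N(\z)=\lambdamin(\Lw-\mu\Lb)\leq 0$ so that the within-class energy is dominated by $\mu$ times the between-class energy, the shortest-path/Cauchy--Schwarz spread bound $(z_{k,max}-z_{k,min})^2 \leq (\diam_k/\wmink)\,S_k$, the normalization $\z^T \Dw \z = 1$, and separability for all $d\geq 1$ via the first coordinate. However, there is a genuine quantitative gap exactly where you anticipated one: your between-class energy bound is measured against $z_{\mathrm{ext}}=\max_i|\zi|$, and in that parametrization the constants do \emph{not} collapse to $1$. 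The best you can do is $\big(\sqrt{\rdeg_i}\zi-\sqrt{\rdeg_j}\zj\big)^2 \leq \rdegmax(|\zi|+|\zj|)^2 \leq 4\,\rdegmax\, z_{\mathrm{ext}}^2$, and since $\sum_{\ii\simb\jj}\wij = \volbetmax/2$, this gives $c=2$, hence $\suppone+\supptwo \leq 2\kappa\, z_{\mathrm{ext}}$ with $\kappa=\sqrt{\mu\rdegmax\volbetmax/\wminbyD}$. Propagating this through your final combination yields at best $\ztwomin-\zonemax \geq (1-2\kappa)/\sqrt{\volmax}$, which is positive only under the stronger hypothesis $\mu < \wminbyD/(4\,\rdegmax\volbetmax)$; the stated bound \eqref{eq:lbsep2class} is not recovered. (A smaller slip: the constant-on-components vectors do not span $\ker \Lw$ for the \emph{normalized} Laplacian --- the kernel is spanned by $\Dw^{1/2}\mathbf{1}$ restricted to each component --- though your conclusion $\lambdamin\leq 0$ survives with those vectors, or via the paper's inequality $\lambdamin(A+B)\leq\lambdamin(A)+\lambdamax(B)$ with $\lambdamax(-\mu\Lb)=0$.)

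The missing idea is the choice of yardstick: the paper measures everything against the overall support $\suppovr = \ztwomax-\zonemin$ rather than $z_{\mathrm{ext}}$. For a between-class edge the two endpoints lie in \emph{different} classes and have opposite signs, so $|\zi|+|\zj| \leq -\zonemin + \ztwomax = \suppovr$ with no factor $2$, giving $\sum_{\ii\simb\jj}\big(\sqrt{\rdeg_i}\zi-\sqrt{\rdeg_j}\zj\big)^2\wij \leq \half\,\rdegmax\volbetmax\,\suppovr^2$; combined with the path bound, $N(\z)\leq 0$ yields $\suppone+\supptwo \leq \sqrt{2(\suppone^2+\supptwo^2)} \leq \kappa\,\suppovr$ with coefficient exactly $1$ in front of $\kappa$. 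The final combination is then not an optimization over near-origin extremes but the exact identity $\ztwomin-\zonemax = \suppovr-(\suppone+\supptwo) \geq (1-\kappa)\,\suppovr$, together with $1 \leq \zonemin^2\vol_1+\ztwomax^2\vol_2 \leq \volmax(\zonemin^2+\ztwomax^2) \leq \volmax\,\suppovr^2$, where the last step again uses the opposite signs. This delivers \eqref{eq:lbsep2class} with the exact constants and under the stated condition on $\mu$; everything else in your proposal stands as written.
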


The proof of Theorem \ref{thm:sep2class} is given in Appendix \ref{pf:thm_sep2class}. The proof is based on a variational characterization of the eigenvector of $\Lw - \mu \Lb$ corresponding to its smallest eigenvalue, whose elements are then bounded in terms of the parameters of the graph such as the diameters and volumes of its connected components.  

Theorem \ref{thm:sep2class} states that an embedding learnt with the supervised Laplacian eigenmaps method makes two classes linearly separable if the weight parameter $\mu$ is chosen sufficiently small. In particular, the theorem shows that, for any $0<\maroned<{\volmax}^{-1/2}$, a choice of the weight parameter $\mu$ satisfying
\begin{equation*}
0<\mu \leq \frac{\wminbyD}{\rdegmax \, \volbetmax} \left(1-\sqrt{\volmax} \, \maroned \right)^2
\end{equation*}
guarantees a separation of $ \ztwomin - \zonemax \geq \maroned$ between classes $1$ and $2$ at $d=1$. Here, we use the symbol $\maroned$ to denote the separation in the normalized coordinates $\z$. In practice, either one of the normalized eigenvectors $\z$ or the original eigenvectors $\y$ can be used for embedding the data. If the original eigenvectors $\y$ are used, due to the relation $\y = \Dw^{1/2} \z$, we can lower bound the separation as $\ytwomin - \yonemax \geq \sqrt{\dwmin} (\ztwomin - \zonemax)$ where $\dwmin = \min_i \dw(i) $. Thus, for any embedding of dimension $d\geq 1$, there exists a hyperplane that results in a linear separation with a margin $\mar$ of at least
\begin{equation*}
\mar \geq  \sqrt{ \frac{\dwmin}{\volmax} } \left(  1 -  \sqrt{ \frac{ \mu \rdegmax \volbetmax }{\wminbyD}}  \right).
\end{equation*}

Next, we comment on the dependence of the separation on $\mu$. The inequality in  \eqref{eq:lbsep2class} shows that the lower bound on the separation $\ztwomin - \zonemax$ has a variation of
%
$
O(1-\sqrt{\mu }) 
$
%
with the weight parameter $\mu$. The fact that the separation decreases with the increase in $\mu$ seems counterintuitive at first; this parameter weights the between-class dissimilarity in the objective function. This can be explained as follows. When $\mu$ is high, the algorithm tries to increase the distance between neighboring samples from different classes as much as possible by moving them away from the origin (remember that different classes are mapped to the positive and the negative sides of the real line). However,  since the normalized coordinate vector $\z$ has to respect the equality $\z^T \Dw \z=1$, the total squared norm of the coordinates cannot be arbitrarily large. Due to this constraint, setting $\mu$ to a high value causes the algorithm to map non-neighboring samples from different classes to nearby coordinates close to the origin. This occurs since the increase in $\mu$ reduces the impact of the first term $\y^T \Lw \y$ in the overall objective and results in an embedding with a weaker link between the samples of the same class. This causes a polarization of the data and eventually reduces the separation. Hence, the $\mu$ parameter should be carefully chosen and should not take too large values.



Theorem \ref{thm:sep2class} characterizes the separation at $d=1$ in terms of the distance between the supports of the two classes. Meanwhile, it is also interesting to determine the individual distances of the supports of the two classes to the origin. In the following corollary, we present a lower bound on the distance between the coordinates of any sample and the origin. 

\begin{corollary}
\label{cor:dist_orig_2class}

The distance between the supports of the first and the second classes and the origin in a one-dimensional embedding is lower bounded in terms of the separation between the two classes as 
\begin{equation*}
\min \{ | \zonemax | , \, \,  | \ztwomin |  \} \geq \half  \frac{ \rdegmin  }{ \rdegmax }  (\ztwomin - \zonemax) 
\end{equation*}
where 
\begin{equation*}
\begin{split}
\rdegmin &= \min_i \rdeg_i, 
\qquad
\rdegmax = \max_i \rdeg_i .
\end{split}
\end{equation*}

\end{corollary}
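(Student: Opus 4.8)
The plan is to read off a balance relation between the two classes from the eigenvector equation $(\Lw - \mu\Lb)\y = \lambda\y$, where $\y=\Dw^{1/2}\z$ is the smallest eigenvector whose sign structure is fixed by Lemma~\ref{lem:twoclass_diffsign}, and then to turn that balance into the stated comparison of the two closest-point distances. Writing $\mathbf{1}_1,\mathbf{1}_2$ for the class indicator vectors, the hypothesis that $\Gw$ has exactly the two classes as its connected components gives $(\Dw-\Ww)\mathbf{1}_k=0$, so the vectors $\Dw^{1/2}\mathbf{1}_k$ span the kernel of $\Lw$. Testing the eigenvector equation against $\Dw^{1/2}\mathbf{1}_k$ therefore annihilates the within-class term and leaves
\begin{equation*}
\lambda\, s_k = -\mu\,(\Dw^{1/2}\mathbf{1}_k)^T\Lb\,\y,
\qquad
s_k := \sum_{\classi=k}\dw(i)\,\zi, \quad k\in\{1,2\},
\end{equation*}
with $s_1\le 0\le s_2$ by the sign structure of $\z$.

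First I would expand $T_k:=(\Dw^{1/2}\mathbf{1}_k)^T\Lb\,\y$, so that $\lambda s_k=-\mu T_k$, over the edges of $\Gb$. By the Laplacian quadratic-form identity, each between-class edge $i\simb j$ with $\classi=1,\classj=2$ contributes the common positive factor $\wij\,\Delta_{ij}$, where $\Delta_{ij}:=\sqrt{\rdeg_j}\,\zj-\sqrt{\rdeg_i}\,\zi\ge 0$ (since $\zi\le 0\le\zj$ across the gap), weighted by $-\sqrt{\rdeg_i}$ in $T_1$ and by $\sqrt{\rdeg_j}$ in $T_2$. Hence $T_1\le 0\le T_2$, and eliminating $\lambda$ yields $|s_1|/|s_2|=|T_1|/|T_2|$; since $T_1$ and $T_2$ differ edge-by-edge only through $\sqrt{\rdeg_i}$ versus $\sqrt{\rdeg_j}$, each lying in $[\sqrt{\rdegmin},\sqrt{\rdegmax}]$, the ratio is pinched as
\begin{equation*}
\sqrt{\frac{\rdegmin}{\rdegmax}}\;\le\;\frac{|s_1|}{|s_2|}\;\le\;\sqrt{\frac{\rdegmax}{\rdegmin}} .
\end{equation*}
This quantifies the intuition that the two classes occupy comparably sized sides of the origin, with the imbalance governed solely by the spread of the ratios $\rdeg_i=\dw(i)/\db(i)$.

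It then remains to convert this control on the degree-weighted sums $s_k$ into the claimed bound on the individual closest distances. Setting $a=|\zonemax|$ and $b=|\ztwomin|$, so that $\ztwomin-\zonemax=a+b$, it suffices to establish $\min\{a,b\}\ge(\rdegmin/\rdegmax)\max\{a,b\}$, whence the statement follows from the elementary inequality $\half(\rdegmin/\rdegmax)(a+b)\le(\rdegmin/\rdegmax)\max\{a,b\}$.

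I expect this last conversion to be the main obstacle, because $s_1,s_2$ are averages over whole classes while $a,b$ are the magnitudes of single extremal coordinates, and a weighted average can stay large even when one coordinate approaches the origin. To close the gap I would instead test the eigenvector equation against the two extremal vertices directly: at the vertex realizing $\zonemax$, the within-class neighbor average is a convex combination of coordinates no larger than $\zonemax$, while the between-class neighbor sum is bounded below through $\zj\ge\ztwomin$ and $\rdeg_j\ge\rdegmin$, yielding a pointwise inequality in which the degree ratios appear only via $\rdegmin,\rdegmax$. Symmetrizing the two such inequalities and using $\lambda\le 0$ should produce the required ratio bound; the delicate point is controlling the coefficient $1-\mu-\lambda$ that multiplies the extremal coordinate, which I would handle with the variational bound on $\lambda$ already available from the proof of Theorem~\ref{thm:sep2class}.
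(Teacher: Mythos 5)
Your final strategy---writing the coordinate-wise stationarity (eigenvector) equation at the two extremal vertices, bounding the within-class neighbor term as a convex combination dominated by $\zonemax$, and bounding the between-class sum through $\zj \geq \ztwomin$ and $\rdeg_j \geq \rdegmin$---is exactly the route the paper takes, via the Lagrangian of the constrained problem whose per-vertex condition is the equation you test. Two structural remarks first. The class-sum balance $\lambda s_k = -\mu T_k$ of your opening paragraphs is correct (the vectors $\Dw^{1/2}\mathbf{1}_k$ do annihilate $\Lw$, and the pinching of $|T_1|/|T_2|$ holds), but as you yourself concede it cannot control single extremal coordinates, and it plays no role in the eventual argument; it is a dead end that can be deleted. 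Moreover, your intermediate reduction target $\min\{a,b\} \geq (\rdegmin/\rdegmax)\max\{a,b\}$, with $a=|\zonemax|$, $b=|\ztwomin|$, is \emph{stronger} than needed and is not what the pointwise argument delivers: the extremal-vertex inequalities each give directly $a, b \geq \half(\rdegmin/\rdegmax)(a+b)$, which is the corollary as stated, whereas from these one only gets $\min\{a,b\} \geq \half(\rdegmin/\rdegmax)\max\{a,b\}$, a factor $2$ short of your intermediate claim. So the detour should be dropped in favor of concluding directly.

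The genuine gap sits exactly where you flag the ``delicate point'': control of the multiplier. You propose to use $\lambda \leq 0$ together with ``the variational bound on $\lambda$ already available from the proof of Theorem \ref{thm:sep2class}''---but that proof establishes only $\lambdamin(\Lw - \mu\Lb) \leq 0$, a one-sided bound, and one-sidedness is fatal here. Carrying out the computation at $i=\ionemax$ yields, with the paper's multiplier $\lambda = -\lambdamin(\Lw-\mu\Lb) \geq 0$, the inequality $|\zonemax| \geq \mu \, \rdegmin \, (\ztwomin - \zonemax)\, \db(\ionemax) / \big(\lambda \, \dw(\ionemax)\big)$, which is vacuous unless $\lambda$ is bounded \emph{above} in terms of $\mu$. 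The paper closes this with the complementary spectral estimate $\lambdamin(\Lw - \mu\Lb) \geq \lambdamin(\Lw) - \mu\lambdamax(\Lb) \geq -2\mu$, using the fact that eigenvalues of a normalized graph Laplacian are at most $2$; this gives $\lambda \leq 2\mu$, and it is precisely this $2\mu$ that cancels the $\mu$ in the numerator and produces the constant $\half$ in the statement. The missing ingredient is only one line, but without it your coefficient $1-\mu-\lambda$ is uncontrolled from the side that matters, and no definite constant can be extracted from your sketch.
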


Corollary \ref{cor:dist_orig_2class} is proved in Appendix \ref{pf:cor_dist_orig_2class}. The proof is based on a Lagrangian formulation of the embedding as a constrained optimization problem, which then allows us to establish a link between the separation and the individual distances of class supports to the origin. The corollary states a lower bound on the portion of the overall separation lying in the negative or the positive sides of the real line. In particular, if the vertex degrees are equal for all samples in $\Gw$ and $\Gb$ (which is the case, for instance, if all vertices have the same number of neighbors and a constant weight of $\wij=1$ is assigned to the edges), since $\rdegmin = \rdegmax$, the portions of the overall separation in the positive and negative sides of the real line will be equal. Although the statement of Theorem \ref{thm:sep2class} is sufficient to show the existence of separating hyperplanes with positive margins for the embeddings of two classes, we will see in Section \ref{ssec:anly_multClass} that the separability with a hyperplane passing through the origin as in Corollary \ref{cor:dist_orig_2class} is a desirable property for the extension of these results to a multi-class setting.

\subsection{Separability bounds for multiple classes}
\label{ssec:anly_multClass}

In this section, we study the separability of the embeddings of multiple classes with the supervised Laplacian eigenmaps algorithm. In particular, we focus on a setting with multiple classes that can be grouped into several categories. The classes in each category are assumed to bear a relatively high resemblance within themselves, whereas the resemblance between classes from different categories is weaker. This is a scenario that is likely to be encountered in several practical data classification problems. 

In the following, we study the embeddings of multiple categorizable classes. The objective matrix $\Lw - \mu \Lb$ defining the embedding is close to a block-diagonal matrix if the between-category similarities are relatively low. Building on this observation, we present a result that links the separability of the overall embedding to the separability of the embeddings of each individual category with the same algorithm. Especially in a setting with many classes, this simplifies the problem for multiple classes and makes it possible to deduce information for the overall separation by studying the separation of the individual categories, which is easier to analyze. 

We consider data samples $\X = \{ \xii \}_{i=1}^\numsamp$ belonging to $\numclass$ different classes that can be categorized into $\numcat$ groups. For the purpose of our theoretical analysis, let us focus for a moment on the individual categories and consider the embedding of the samples in each category $\icat$ with the supervised Laplacian eigenmaps algorithm if the data graph was constructed only within the category $\icat$. Let $\Y^\icat$ be the $d^\icat$-dimensional embedding of category $\icat$. Assume that $\Y^\icat$ is separable with margin $\marc$. Then for any two classes $k, l$ in category $\icat$, there exists a hyperplane $\hyp_{kl}$ such that
\begin{equation}
\label{eq:cond_sep_nooff}
\begin{split}
\hyp_{kl}^T \, \y^\icat_i  &\geq \marc/2 \quad \,\,\,\, \text{     if  } \classi=k\\
\hyp_{kl}^T \, \y^\icat_i  &\leq - \marc/2 \quad \text{  if  } \classi=l
\end{split}
\end{equation}
where $\y^\icat_i$ is the $i$-th row of $\Y^\icat$ defining the coordinates of the $i$-th data sample in category $\icat$. Note that an offset of $b_{kl}=0$ is assumed here, i.e., the classes in each category are assumed to be separable with hyperplanes passing through the origin. While this is mainly for simplifying the analysis, the studied supervised Laplacian eigenmaps algorithm in fact computes embeddings having this property in practice (the theoretical guarantee for the two-class setting being provided in Corollary \ref{cor:dist_orig_2class}). 

Now let $\Lall = \Lw - \mu \Lb$ denote the $\numsamp \times \numsamp$ objective matrix defining the embedding of $\X$ with supervised Laplacian eigenmaps. Also, let $\Lcat = \Lwc - \mu \Lbc$ denote the block-diagonal objective matrix where the within-class and the between-class Laplacians $\Lwc$ and $\Lbc$ are obtained by restricting the graph edges to the ones within the categories. In other words, $\Lcat$ is obtained by removing the edges between all pairs of data samples belonging to different categories.

Let $\PerL=\Lall - \Lcat$ denote the component of $\Lall$ arising from the between-category data connections. In our analysis, we will treat this component $\PerL$ as a perturbation on the block-diagonal matrix $\Lcat$ and analyze the eigenvectors of $\Lall$ accordingly in order to study the separability of the embedding obtained with $\Lall$.

We will need a condition on the separation of the eigenvalues of $\Lcat$. Let $\eta$ denote the minimal separation (the smallest difference) between the eigenvalues of $\Lcat$
\begin{equation}
\label{defn:eigsep}
\eta := \min_{i\neq j}  | \lambda_i - \lambda_j |
\end{equation}
where $ \lambda_i $ are the eigenvalues of $\Lcat$ for $i=1, \dots, \numsamp$. For $\mu >0$ and a random sampling of data, the eigenvalues of $\Lcat$ are expected to be distinct\footnote{Note that the within-class and the between-class Laplacians $\Lwc$ and $\Lbc$ are normalized Laplacians; therefore, the constant vector is not an eigenvector and 0 is not a repeating eigenvalue.}; therefore, one can reasonably assume the minimal eigenvalue separation to be positive. The characterization of the behavior of the minimal separation of the eigenvalues depending on the graph properties is not within the scope of this study and remains as future work. 

We state below our main result about the separability of the embeddings of multiple categorizable classes.

\begin{theorem}
\label{thm:sep_mult_class_categ}

Let $\Lall = \Lw - \mu \Lb  \in \R^{\numsamp \times \numsamp} $ be the matrix representing the objective function of the supervised Laplacian eigenmaps algorithm with $\numclass$ classes categorizable into $\numcat$ groups. Assume that $\Lall$ is close to a block-diagonal objective matrix $\Lcat$ containing only within-category edges such that the perturbation $\PerL=\Lall - \Lcat$ is bounded as 
\begin{equation*}
\| \PerL \| < \frac{\eta }{2}
\end{equation*}
in terms of the minimal eigenvalue separation $\eta$ of the matrix $\Lcat$ defined in \eqref{defn:eigsep}. Let each category $\icat$ have a $d^\icat$-dimensional embedding $\Y^\icat$  separable with margin $\marc$ as in \eqref{eq:cond_sep_nooff}. We define the parameters
\begin{equation*}
\xi= \left(  1 - \frac{4  \| \PerL  \|^2 }{\eta^2}  \right)^{1/2}
\end{equation*}
and
\begin{equation*}
\zeta = \left( 2 - 2 \xi + 2 \sqrt{\numsamp(1-\xi^2) }  \right)^{1/2}.
\end{equation*}
Then there exists an embedding $\Y$ of dimension
$
d=\sum_{\icat=1}^\numcat d_\icat
$
consisting of the eigenvectors of the overall objective matrix $\Lall$ that is separable with a margin of at least
\begin{equation*}
\mar= \marc/\sqrt{2} - 2 \zeta
\end{equation*}
provided that $\zeta < \marc/(2\sqrt{2})$.
\end{theorem}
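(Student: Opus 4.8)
The plan is to treat $\Lall=\Lcat+\PerL$ as a perturbation of the block-diagonal matrix $\Lcat$ and to show that (i) a reference embedding built from the eigenvectors of $\Lcat$ is separable with margin $\marc/\sqrt2$, and (ii) the embedding $\Y$ built from the corresponding eigenvectors of $\Lall$ stays close to it, losing at most $2\zeta$ of margin. Since $\Lcat$ is block-diagonal with one block per category, each of its eigenvectors is supported on a single category. Collecting, for every category $\icat$, the $d^\icat$ eigenvectors that realize the within-category embedding $\Y^\icat$, I obtain a reference matrix $V\in\R^{\numsamp\times d}$ whose row for a sample $\xii$ of category $\icat$ equals $\y^\icat_i$ in the columns belonging to $\icat$ and vanishes in all other columns.

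I would first establish the separability of $V$. For two classes $k,l$ lying in the same category, the hyperplane $\hyp_{kl}$ of \eqref{eq:cond_sep_nooff}, padded with zeros on the remaining blocks, keeps a margin of $\marc$. For two classes lying in different categories $\icat\neq\jcat$, I would use the orthogonality of the category blocks: the within-category pairwise separation of \eqref{eq:cond_sep_nooff} provides, inside each block, a unit direction along which the corresponding class projects by at least $\marc/2$ (a through-origin separation from another class of the same category). Combining the two such directions as $\frac{1}{\sqrt2}(\hyp_\icat-\hyp_\jcat)$ gives a unit normal that separates the two classes with margin $\marc/\sqrt2$, the $\sqrt2$ being exactly the loss incurred by spreading the normal over two orthogonal blocks. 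Hence $V$ is linearly separable with margin at least $\marc/\sqrt2$.

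Next I would control the passage from $V$ to the embedding $\tilde V$ formed by the matching eigenvectors of $\Lall$. By Weyl's inequality the assumption $\|\PerL\|<\eta/2$ moves each eigenvalue by less than $\eta/2$, so the selected eigenvalues remain isolated, the eigenvector correspondence is well defined, and the effective spectral gap exceeds $\eta/2$. A Davis--Kahan $\sin\theta$ estimate then gives $\sin\theta_k\le 2\|\PerL\|/\eta$, i.e.\ $\langle v_k,\tilde v_k\rangle\ge\xi$ after a sign normalization. The delicate point---and the main obstacle---is to convert this column-wise (subspace) closeness into a bound on the deviation of each \emph{sample} coordinate, that is on the rows $\|\tilde V_{i,:}-V_{i,:}\|$, because Davis--Kahan controls the columns of the eigenvector matrix while separability is a statement about its rows. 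Writing $\tilde v_k=c_k v_k+w_k$ with $c_k=\langle v_k,\tilde v_k\rangle\ge\xi$ and $w_k\perp v_k$, $\|w_k\|\le\sqrt{1-\xi^2}$, and using that the rows of a matrix with orthonormal columns have norm at most $1$, the aligned part contributes the $2-2\xi$ term; the orthogonal part, whose effect on a single coordinate must be bounded in the worst case over the $\numsamp$ samples, contributes the $2\sqrt{\numsamp(1-\xi^2)}$ term. This yields the uniform estimate $\|\tilde V_{i,:}-V_{i,:}\|\le\zeta$ for every $i$.

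Finally I would transfer separability. For any reference hyperplane $\hyp$ and any sample $\xii$, the projection onto $\hyp$ changes by at most $|\hyp^{T}(\tilde V_{i,:}-V_{i,:})|\le\|\tilde V_{i,:}-V_{i,:}\|\le\zeta$ in going from $V$ to $\tilde V$. As this shift can erode the margin on both the positive and the negative side, the embedding $\tilde V$ coming from $\Lall$ is linearly separable with margin at least $\marc/\sqrt2-2\zeta$, which is positive precisely when $\zeta<\marc/(2\sqrt2)$. I expect the eigenvector-to-coordinate conversion of the third paragraph to be the hard part; the separability reduction and the Weyl/Davis--Kahan estimates are comparatively standard.
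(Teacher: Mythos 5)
Your proposal is correct and follows the same overall architecture as the paper's proof: a zero-padded reference embedding built from the block-diagonal $\Lcat$, shown separable with margin $\marc/\sqrt{2}$ via the $1/\sqrt{2}$-normalized concatenation of two within-category through-origin hyperplanes; an eigenvector-alignment bound of $\xi$; a column-to-row conversion giving a per-sample deviation at most $\zeta$; and a final margin transfer losing $\zeta$ on each side. The differences sit in the two auxiliary estimates. For the alignment you cite Weyl plus Davis--Kahan with effective gap $\eta - \|\PerL\| \geq \eta/2$, whereas the paper proves the identical bound self-containedly (Lemma \ref{lem:corr_eigvec_pert}) from the commutator estimate $\|\Lambda R - R\Lambda\| \leq 2\|\PerL\|$ for $R=\tU^T U$, which yields $\sum_{i\neq j} R_{ij}^2 \leq 4\|\PerL\|^2/\eta^2$; quantitatively the two are interchangeable. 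For the row bound---the step you rightly single out as the crux---the paper's Lemma \ref{lem:pert_row_col} again works through $R$: since the $i$-th row of $\tU$ equals $R$ applied to the $i$-th row of $U$, it bounds $v_i^T R v_i \geq \xi - \|R^{nd}\|$ and controls the off-diagonal part by $\|R^{nd}\| \leq \sqrt{\numsamp}\sqrt{1-\xi^2}$ via row/column norms, giving $\|v_i - \tilde v_i\|^2 \leq 2-2\xi+2\sqrt{\numsamp(1-\xi^2)} = \zeta^2$ exactly. Your decomposition $\tilde v_k = c_k v_k + w_k$ with the worst-case Frobenius argument $\|W_{i,:}\| \leq \|W\|_F \leq \sqrt{\numsamp(1-\xi^2)}$ instead gives $t := (1-\xi) + \sqrt{\numsamp(1-\xi^2)}$, which is not literally $\zeta$; your sketch elides this, but the gap is cosmetic: if $t \leq 2$ then $t \leq \sqrt{2t} = \zeta$, and if $t > 2$ the trivial bound $\|\tilde V_{i,:} - V_{i,:}\| \leq 2 \leq \zeta$ applies because rows of orthogonal matrices are unit vectors, so your elementary route actually lands at a bound at least as sharp as the paper's wherever either is informative. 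Two points you leave implicit that the paper makes explicit: the sign normalization is harmless because negating an eigenvector merely reflects one coordinate of the embedding, which preserves linear separability with the same margin; and the cross-category hyperplane construction requires each class to admit a through-origin separator against some class of its own category, i.e., the zero-offset form \eqref{eq:cond_sep_nooff} (implicitly, at least two classes per category), an assumption the paper shares.
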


The proof of Theorem \ref{thm:sep_mult_class_categ} is given in Appendix \ref{pf:thm:sep_mult_class_categ}. The proof is based on first analyzing the separation of the embedding corresponding to the block-diagonal component $\Lcat$ of the objective matrix, and then lower bounding the separation of the original embedding in terms of the perturbation and the separation of the eigenvalues. In brief, the theorem says that if the classes are categorizable with sufficiently low between-category edge weights, and if the individual embedding of each category makes all classes in that category linearly separable, then in the embedding computed for the overall data graph with the supervised Laplacian eigenmaps algorithm, all pairs of classes (from same and different categories) are also linearly separable. This extends the linear separability of individual categories to the separability of all classes. The margin of the overall separation decreases at a rate of $O(\sqrt{1 -  \| \PerL \|})$ as the magnitude of the non-block-diagonal component $\| \PerL \|$ of the objective matrix increases.\footnote{From the definition of the parameters $\xi$,$\zeta$, and $\mar$ in Theorem \ref{thm:sep_mult_class_categ}, we have $\xi=O(\sqrt{1-\| \PerL\|^2})$, $\zeta=O(\sqrt{1-\xi})$, $\mar=O(1-\zeta)$. It follows that $\mar = O(1-( 1- \sqrt{1-\|  \PerL\|^2} \,)^{1/2}) \approx O(\sqrt{1 - \| \PerL \|})$.}

The dimension of the separable embedding is given by the sum of the dimensions of the individual embeddings of the categories that ensure the linear separability within each category, hence, the dimension required for linear separability must be linearly proportional to the number of categories. In order to compute the exact value of the number of dimensions required for linear separability, one needs the knowledge of the number of dimensions that ensures the separability within each category. Nevertheless, the provided result is still interesting as the theoretical or numerical analysis of individual categories is often easier than the analysis of the whole data set, since the number of classes in a particular category is more limited.\footnote{For instance, we have theoretically shown in Section \ref{ssec:anly_twoClass} that one dimension is sufficient for obtaining a linearly separable embedding of two classes. While we do not provide a theoretical analysis for more than two classes, we have experimentally observed that data becomes linearly separable at two dimensions when the number of classes is three or four.} Note that, one can also interpret the theorem by considering each class as a different category. However, in this case the edges between samples of different classes must have sufficiently low weights for the applicability of the theorem, i.e., the non-block diagonal component $L^{nc}$ of the Laplacian must be sufficiently small. The examination of the general problem of embedding data with multiple non-categorizable classes and no assumptions of the edge weights between different classes seems to be a more challenging problem and remains as a future direction to study.


\section{Experimental Results}
\label{sec:exp_results}

In this section, we present results on synthetical and real data sets. We compare several supervised manifold learning methods and study their performances in relation with our theoretical results.

\begin{figure}[t]
\begin{center}
     \subfigure[Quadratic surfaces]
       {\label{fig:quadsurf_two_class}\includegraphics[height=3.5cm]{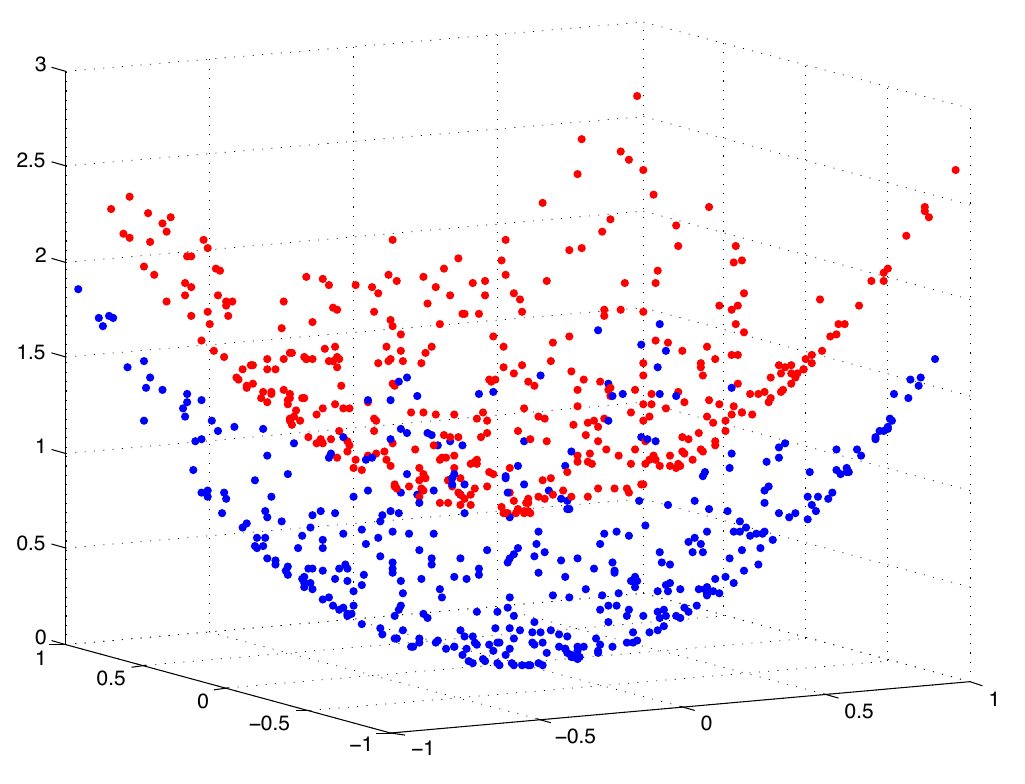}}
     \subfigure[Swissrolls]
       {\label{fig:swisssurf_two_class}\includegraphics[height=3.5cm]{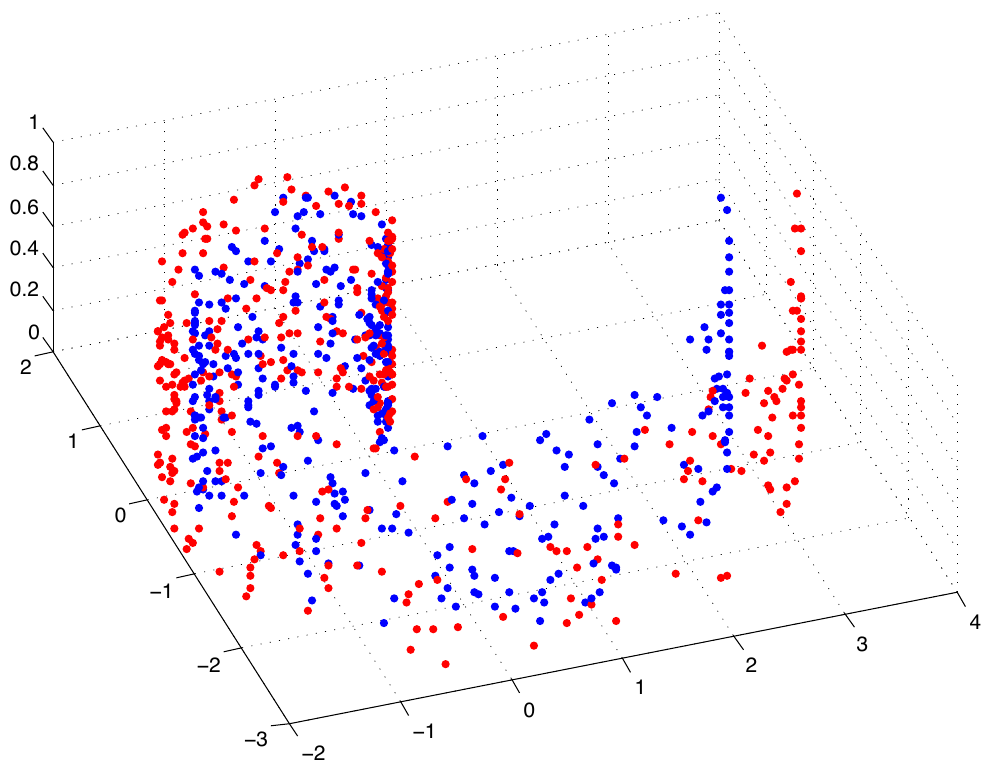}}
     \subfigure[Spheres]
       {\label{fig:sphsurf_two_class}\includegraphics[height=3.5cm]{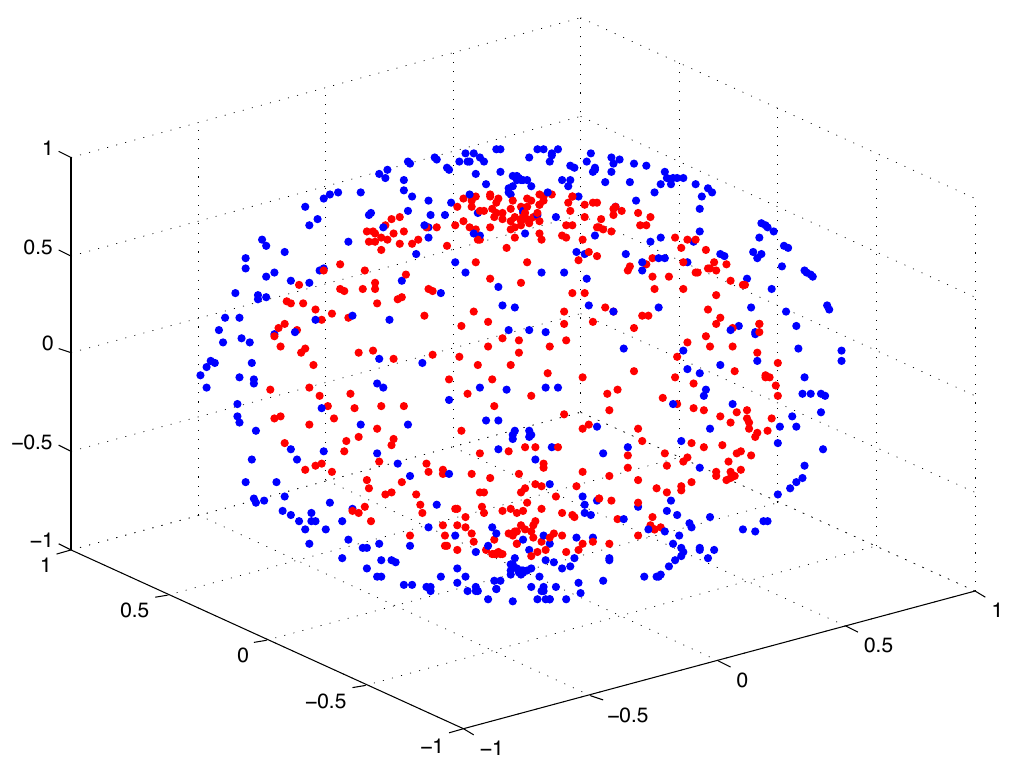}}
 \end{center}
 \caption{Data sampled from two-dimensional synthetical surfaces. Red and blue colors represent two different classes.}
 \label{fig:surfaces_two_class}
\end{figure}

\begin{figure}[]
\begin{center}
     \subfigure[1-D embedding]
       {\label{fig:quademb_1d_twoclass}\includegraphics[height=3.5cm]{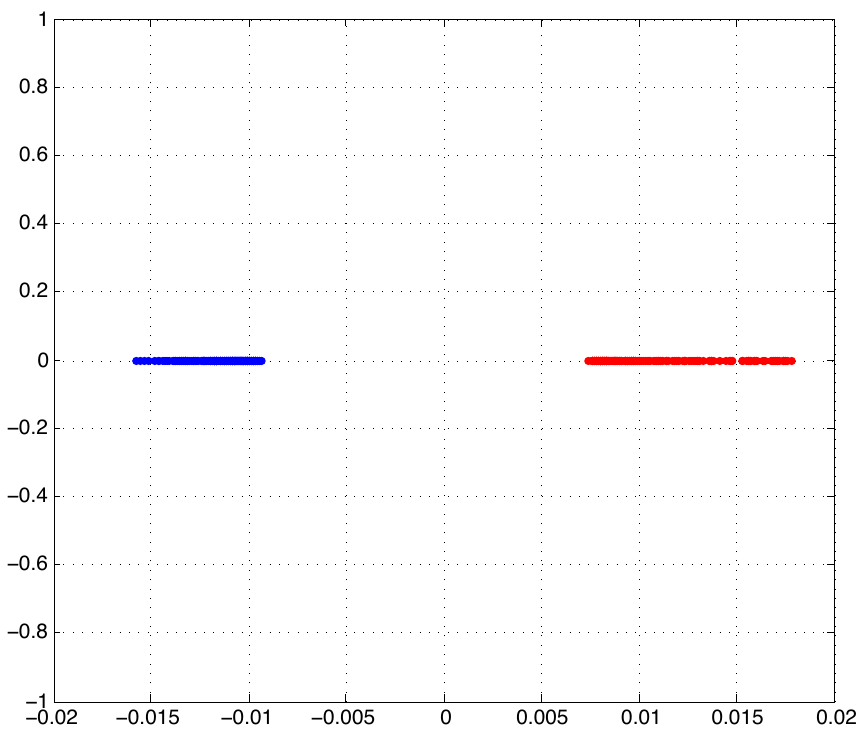}}
     \subfigure[2-D embedding]
       {\label{fig:quademb_2d_twoclass}\includegraphics[height=3.5cm]{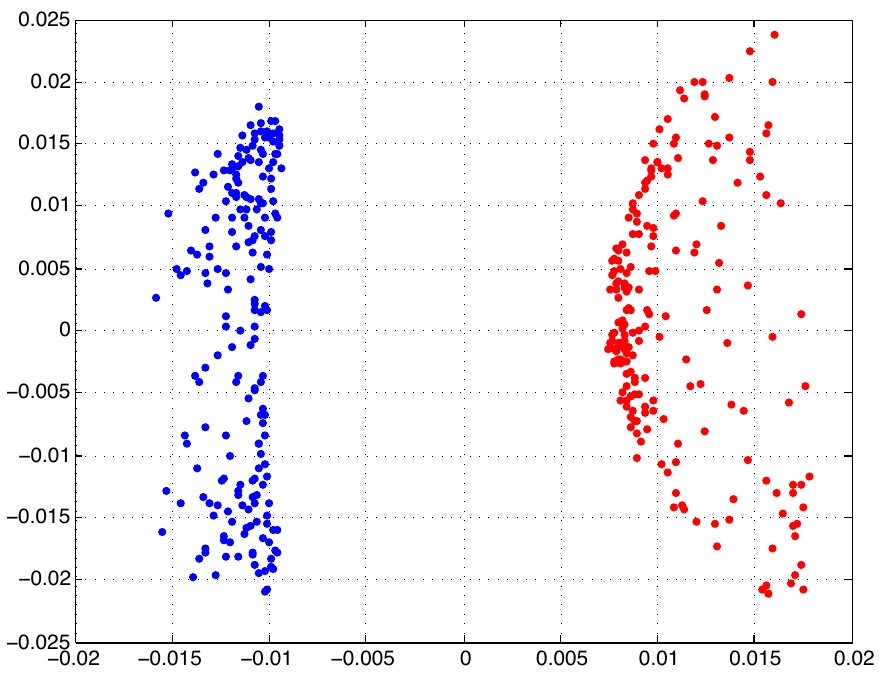}}
     \subfigure[3-D embedding]
       {\label{fig:quademb_3d_twoclass}\includegraphics[height=3.5cm]{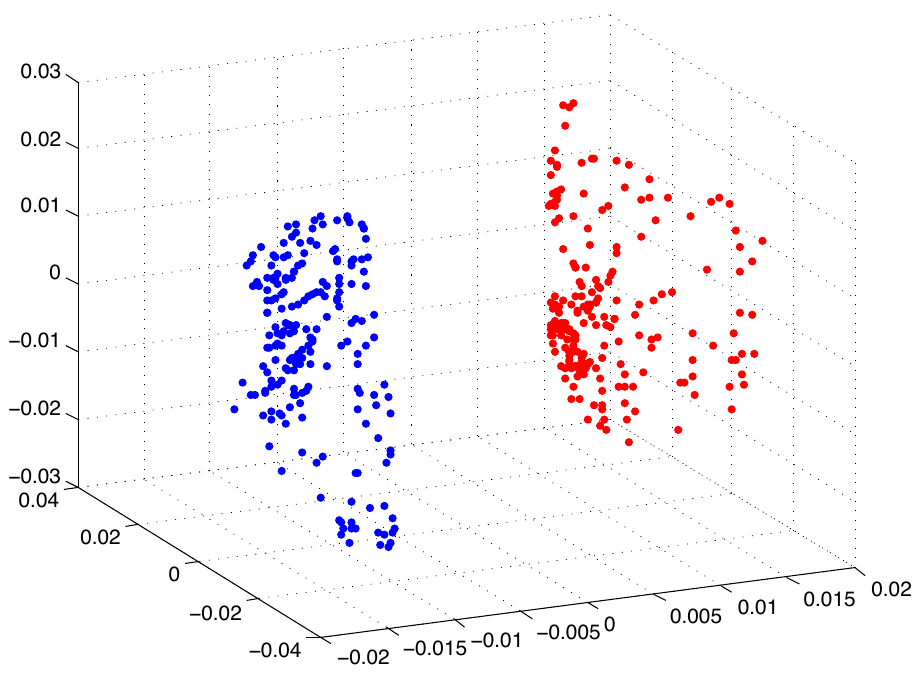}}
 \end{center}
 \caption{Supervised Laplacian embeddings of data sampled from quadratic surfaces.}
 \label{fig:embeddings_two_class}
\end{figure}

\subsection{Separability of embeddings with supervised manifold learning}

We first present results on synthetical data in order to study the embeddings obtained with supervised dimensionality reduction. We test the supervised Laplacian eigenmaps algorithm in a setting with two classes. We generate samples from two nonintersecting and linearly nonseparable surfaces in $\R^3$ that represent two different classes. We experiment on three different types of surfaces; namely, quadratic surfaces, Swiss rolls and spheres. The data sampled from these surfaces are shown in Figure \ref{fig:surfaces_two_class}. We choose $\numsamp=200$ samples from each class. We construct the graph $\Gw$ by connecting each sample to its $K$-nearest  neighbors from the same class, where $K$ is chosen between $20$ and $30$. The graph $\Gb$ is constructed similarly, where each sample is connected to its $K/5$ nearest neighbors from the other class. The graph weights are determined as a Gaussian function of the distance between the samples. The embeddings are then computed by minimizing the objective function in (\ref{eq:supLap_formal}). The one-dimensional, two-dimensional, and three-dimensional embeddings obtained for the quadratic surface are shown in Figure \ref{fig:embeddings_two_class}, where the weight parameter is taken as $\mu=0.57$ (to have a visually clear embedding for the purpose of illustration). Similar results are obtained on the Swiss roll and the spherical surface. One can observe that the data samples that were initially linearly nonseparable become linearly separable when embedded with the supervised Laplacian eigenmaps algorithm. The two classes are mapped to different (positive or negative) sides of the real line in Figure \ref{fig:quademb_1d_twoclass} as predicted by Lemma \ref{lem:twoclass_diffsign}.  The separation in the 2-D and 3-D embeddings in Figure \ref{fig:embeddings_two_class} is close to the separation obtained with the 1-D embedding.

\begin{figure}[]
\begin{center}
     \subfigure[Experimental value of the separation $\mar$]
       {\label{fig:exp_two_class}\includegraphics[height=5cm]{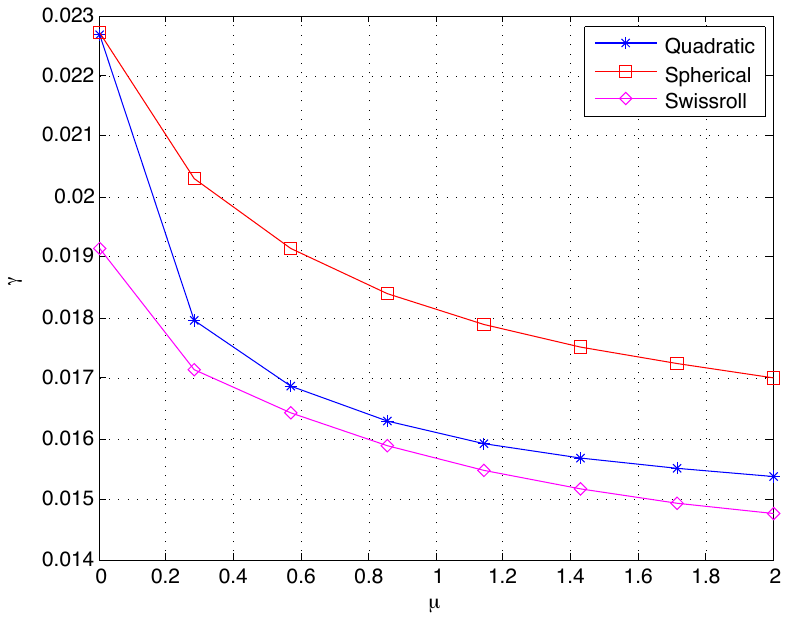}}
     \subfigure[Theoretical upper bound for $\mu$ that guarantees a separation of at least $\mar$]
       {\label{fig:theo_two_class}\includegraphics[height=5cm]{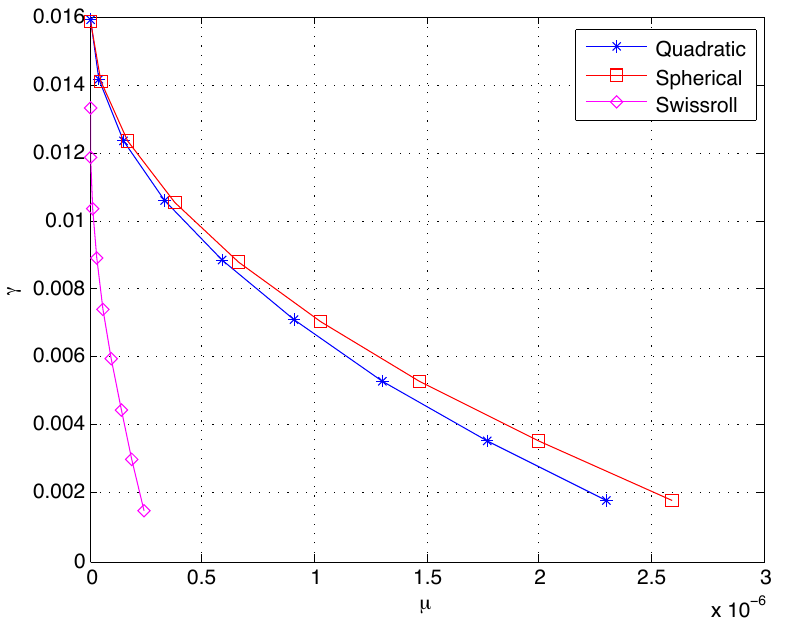}}
 \end{center}
 \caption{Variation of the separation $\mar$ between the two classes with the parameter $\mu$ for the synthetic data sets}
 \label{fig:separation_two_class}
\end{figure}

We then compute and plot the separation obtained at different values of $\mu$. Figure \ref{fig:exp_two_class} shows the experimental value of the separation $\mar = \ztwomin - \zonemax$ obtained with the 1-D embedding for the three types of surfaces. Figure \ref{fig:theo_two_class} shows the theoretical upper bound for $\mu$ in Theorem \ref{thm:sep2class} that guarantees a separation of at least $\mar$. Both the experimental value and the theoretical bound for the separation $\mar$ decrease with the increase in the parameter $\mu$. This is in agreement with  \eqref{eq:lbsep2class}, which predicts a decrease of $O(1-\sqrt{\mu})$ in the separation with respect to $\mu$. The theoretical bound for the separation is seen to decrease at a relatively faster rate with $\mu$ for the Swiss roll data set. This is due to the particular structure of this data set with a nonuniform sampling density where the sampling is sparser away from the spiral center. The parameter $\wminbyD$ then takes a small value, which consequently leads to a fast rate of decrease for the separation due to \eqref{eq:lbsep2class}. Comparing Figures \ref{fig:exp_two_class} and \ref{fig:theo_two_class}, one observes that the theoretical bounds for the separation are numerically more pessimistic than their experimental values, which is a result of the fact that our results are obtained with a worst-case analysis. Nevertheless, the theoretical bounds capture well the actual variation of the separation margin with $\mu$.

\subsection{Classification performance of supervised manifold learning algorithms}
\label{ssec:exp_oos}

\begin{figure}[t]
\begin{center}
     \subfigure[COIL-20 object data set]
       {\label{fig:compare_methods_coil20}\includegraphics[height=5cm]{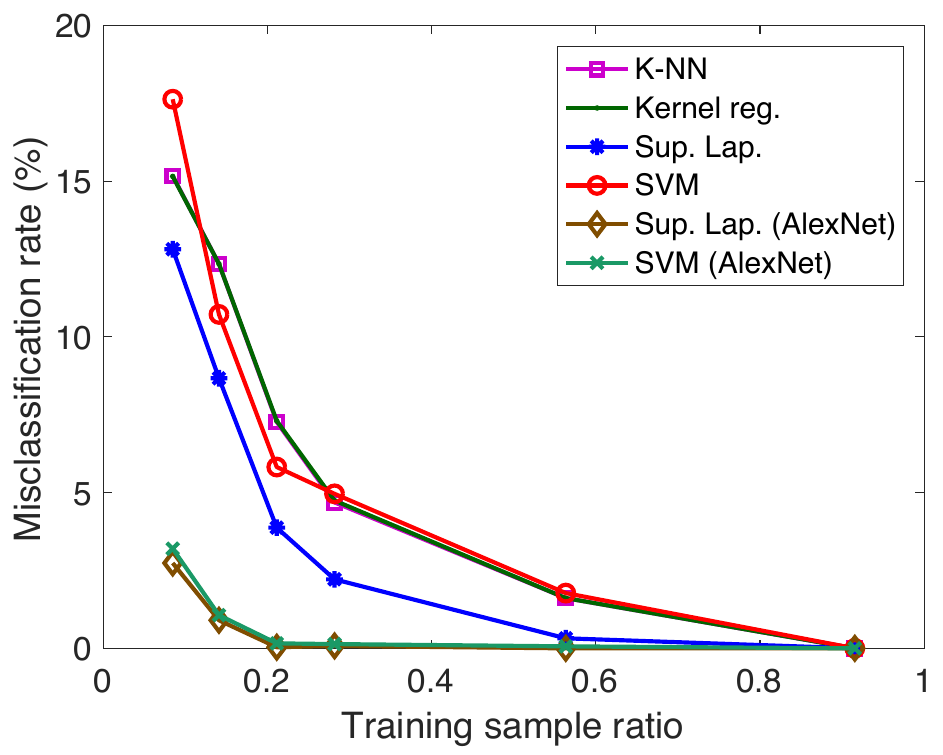}}   
     \subfigure[ETH-80 object data set]
       {\label{fig:compare_methods_eth80}\includegraphics[height=5cm]{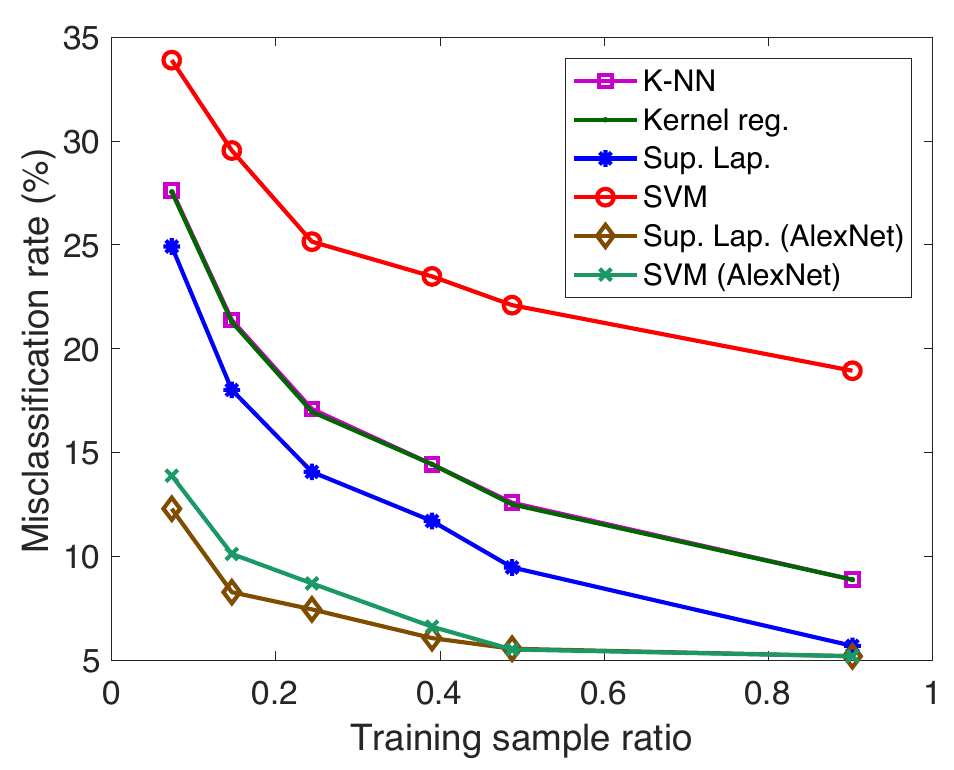}}       
     \subfigure[Yale face data set]
       {\label{fig:compare_methods_yalefull}\includegraphics[height=5cm]{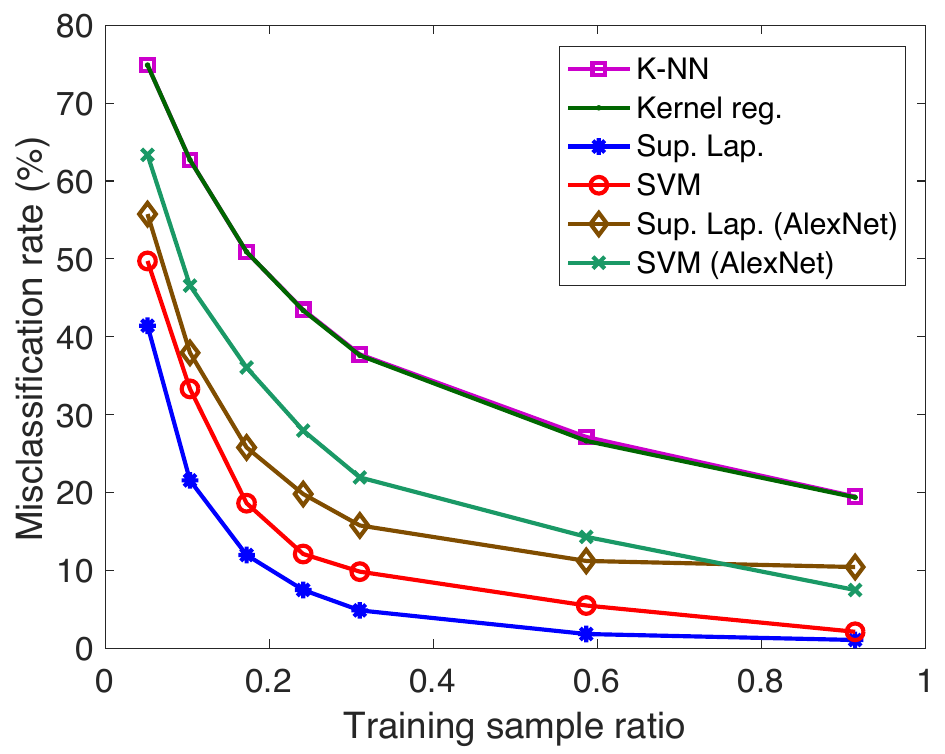}}   
     \subfigure[Reduced Yale face data set]
       {\label{fig:compare_methods_yalered}\includegraphics[height=5cm]{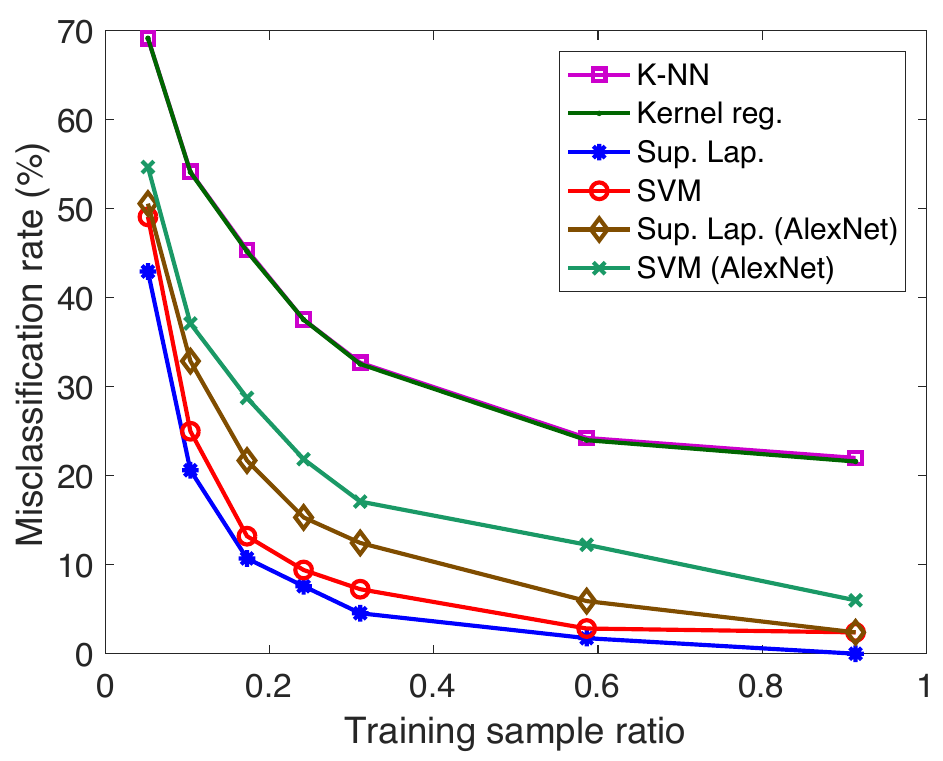}}
     \subfigure[MNIST data set]
       {\label{fig:compare_methods_mnist}\includegraphics[height=5cm]{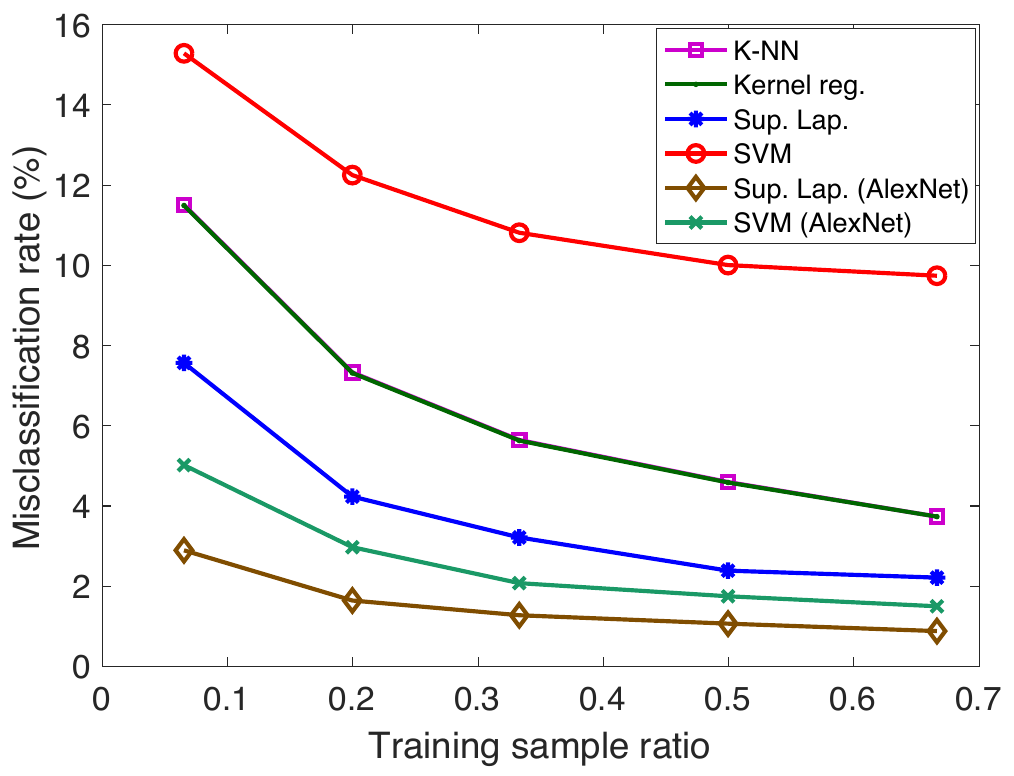}}      
 \end{center}
 \caption{Comparison of the performance of several supervised classification methods}
 \label{fig:compare_methods}
\end{figure}

We now study the overall performance of classification obtained in a setting with supervised manifold learning, where the out-of-sample generalization is achieved with smooth RBF interpolators. We evaluate the theoretical results of Section \ref{sec:class_anly_supml} on several real data sets: the COIL-20 object database \citep{NeneNM96}, the Yale face database \citep{GeBeKr01}, the ETH-80  object database \citep{LeibeS03}, and the MNIST handwritten digit database \citep{LeCunBBH98}. The COIL-20, Yale face, ETH-80, and MNIST databases contain a total of 1420,  2204, 3280, and 70046 images from 20, 38, 8, and 10 image classes respectively. The images in the COIL-20, Yale and ETH-80 data sets are converted to greyscale, normalized, and downsampled to a resolution of respectively $32 \times 32$, $20 \times 17$, and $20 \times 20$ pixels.

\subsubsection{Comparison of supervised manifold learning to baseline classifiers}

We first compare the performance of supervised manifold learning with some reference classification methods. The performances of SVM, K-NN, kernel regression, and the supervised Laplacian eigenmaps methods are evaluated and compared. Figure \ref{fig:compare_methods} reports the results obtained on the COIL-20 data set, the ETH-80 data set, the Yale data set, a subset of the Yale data set consisting  of its first 10 classes (reduced Yale data set), and the MNIST data set. The SVM, K-NN, and kernel regression algorithms are applied in the original domain and their hyperparameters are optimized with cross-validation. In the supervised Laplacian eigenmaps method, the embedding of the training images into a low-dimensional space is computed. Then, an out-of-sample interpolator with Gaussian RBFs is constructed that maps the training samples to their embedded coordinates as described in Section \ref{ssec:oos_rbf}. Test samples are mapped to the low-dimensional domain via the RBF interpolator and the class labels of test samples are estimated via nearest-neighbor classification in the low-dimensional domain. The supervised Laplacian eigenmaps and the SVM methods are also tested over an alternative representation of the image data sets based on deep learning. The images are provided as input to the pretrained AlexNet convolutional neural network proposed in \citep{KrizhevskySH12}, and the activation values at the second fully connected layer are used as the feature representations of the images. The feature representations of training and test images are then provided to the supervised Laplacian eigenmaps and the SVM methods. The plots in Figure \ref{fig:compare_methods} show the variation of the misclassification rate of test samples in percentage with the ratio of the number of training samples in the whole data set. The results are the average of 5 repetitions of the experiment with different random choices for the training and test samples.

The results in Figure \ref{fig:compare_methods} show that the best results are obtained with the supervised Laplacian eigenmaps algorithm in general. The performances of the algorithms improve with the number of training images as expected. In the COIL-20 and ETH-80 object data sets, the supervised Laplacian eigenmaps and the SVM algorithms yield significantly smaller error when applied to the feature representations of the images obtained with deep learning. Meanwhile, in the Yale face data set these two methods perform better on raw image intensity maps. This can be explained with the fact that the AlexNet model may be more successful in extracting useful features for object images rather than face images as it is trained on many common object and animal classes. It is interesting to compare Figures \ref{fig:compare_methods_yalefull} and \ref{fig:compare_methods_yalered}. While the performances of the supervised Laplacian eigenmaps and the SVM methods are closer in the reduced version of the Yale database with 10 classes, the performance gap between the supervised Laplacian eigenmaps method and the other methods is larger for the full data set with 38 classes. This can be explained with the fact that the linear separability of different classes degrades as the number of classes increases, thus causing a degradation in the performance of the classifiers in comparison. Meanwhile, the performance of the supervised Laplacian eigenmaps method is not much affected by the increase in the number of classes. The K-NN and kernel regression classifiers are seen to give almost the same performance in the plots in Figure \ref{fig:compare_methods}. The number of neighbors is set as $K=1$ for the K-NN algorithm in these experiments, where it has been observed to attain its best performance;  and the scale parameter of the kernel regression algorithm is optimized to get the best accuracy, which has turned out to take relatively small values. Hence the performances of these two classifiers practically correspond to that of the nearest-neighbor classifier in the original domain.


\subsubsection{Variation of the error with algorithm parameters and sample size}

\begin{figure}[t]
\begin{center}
     \subfigure[COIL-20 object data set]
       {\label{fig:error_N_coil}\includegraphics[height=5cm]{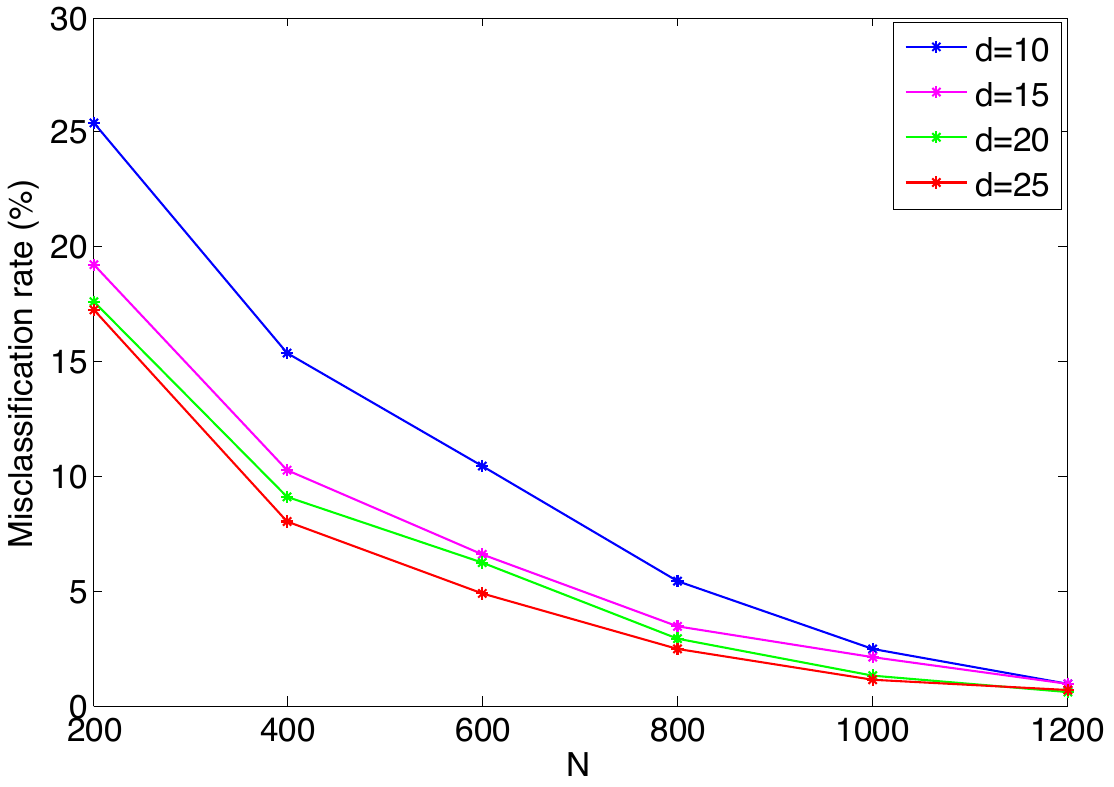}}
     \subfigure[ETH-80 object data set]
       {\label{fig:error_N_eth}\includegraphics[height=5cm]{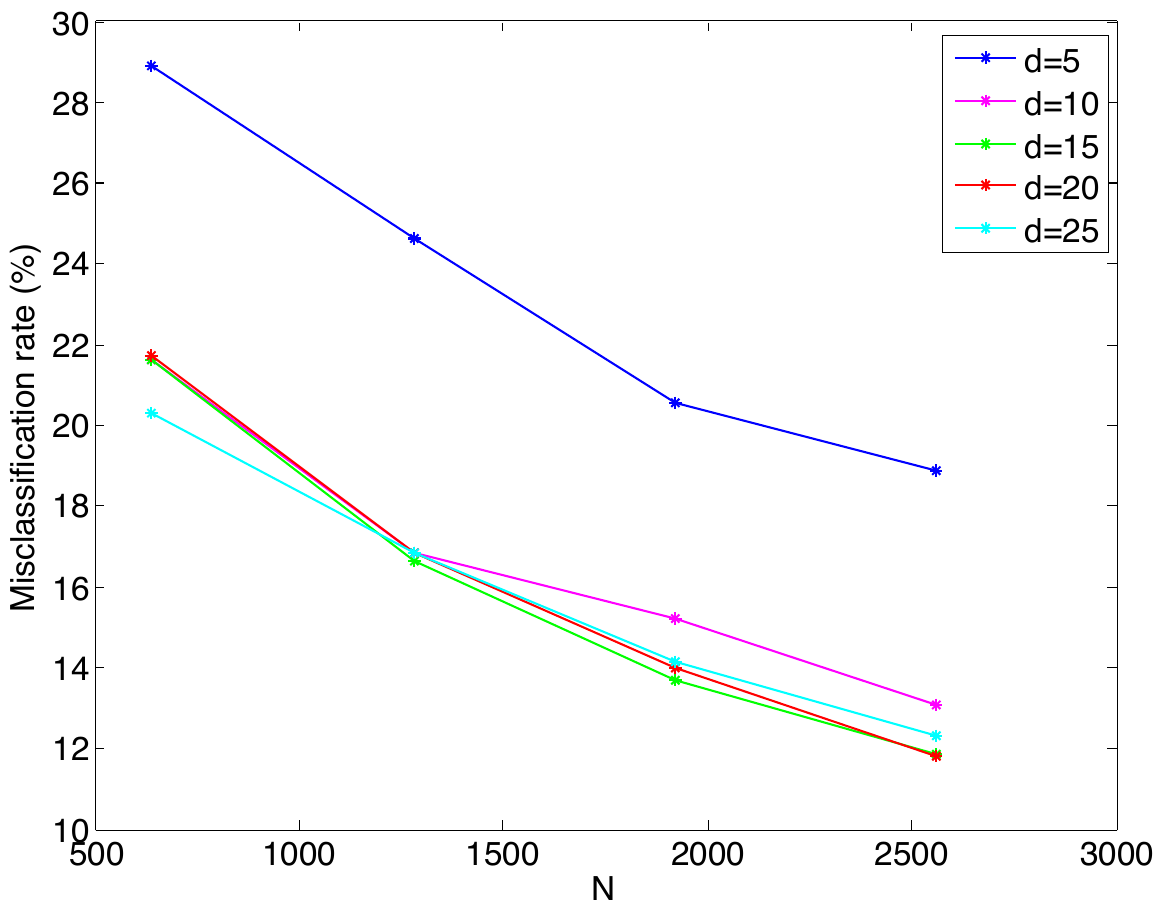}}  
     \subfigure[Yale face data set]
       {\label{fig:error_N_yale}\includegraphics[height=5cm]{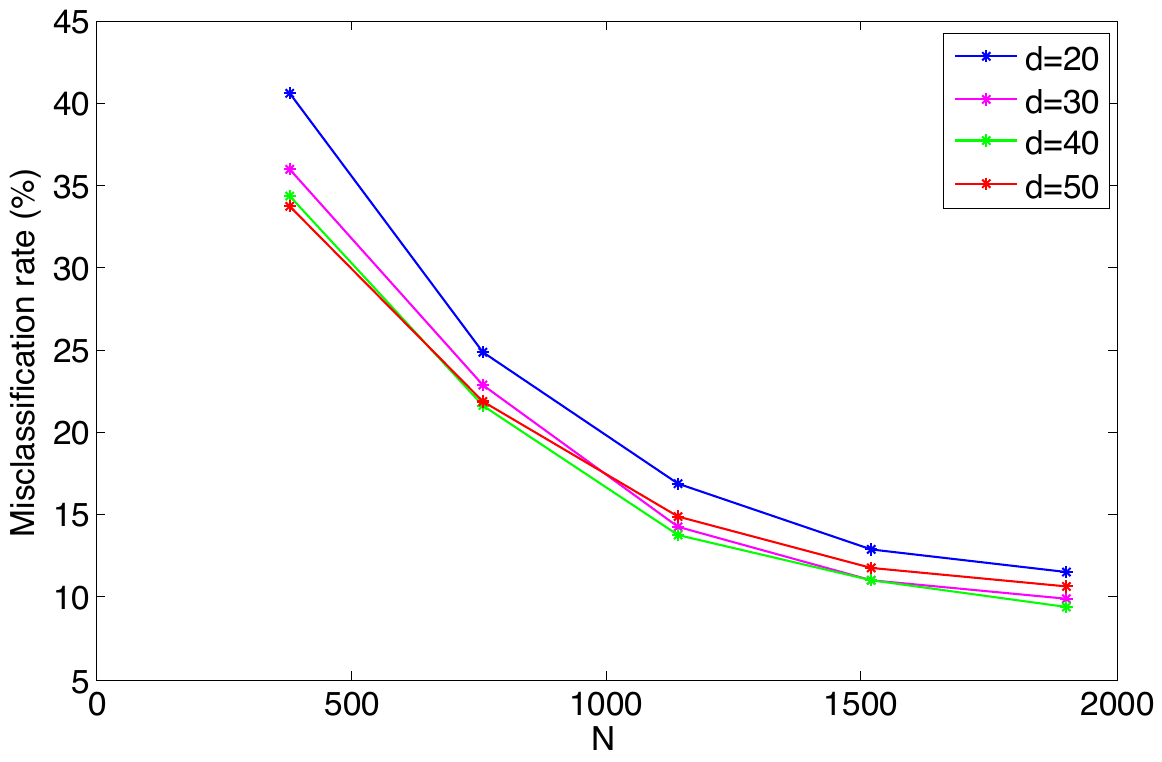}}      
 \end{center}
 \caption{Variation of the misclassification rate with the number of training samples}
 \label{fig:error_N}
\end{figure}

\begin{figure}[]
\begin{center}
     \subfigure[COIL-20 object data set]
       {\label{fig:error_d_coil}\includegraphics[height=5cm]{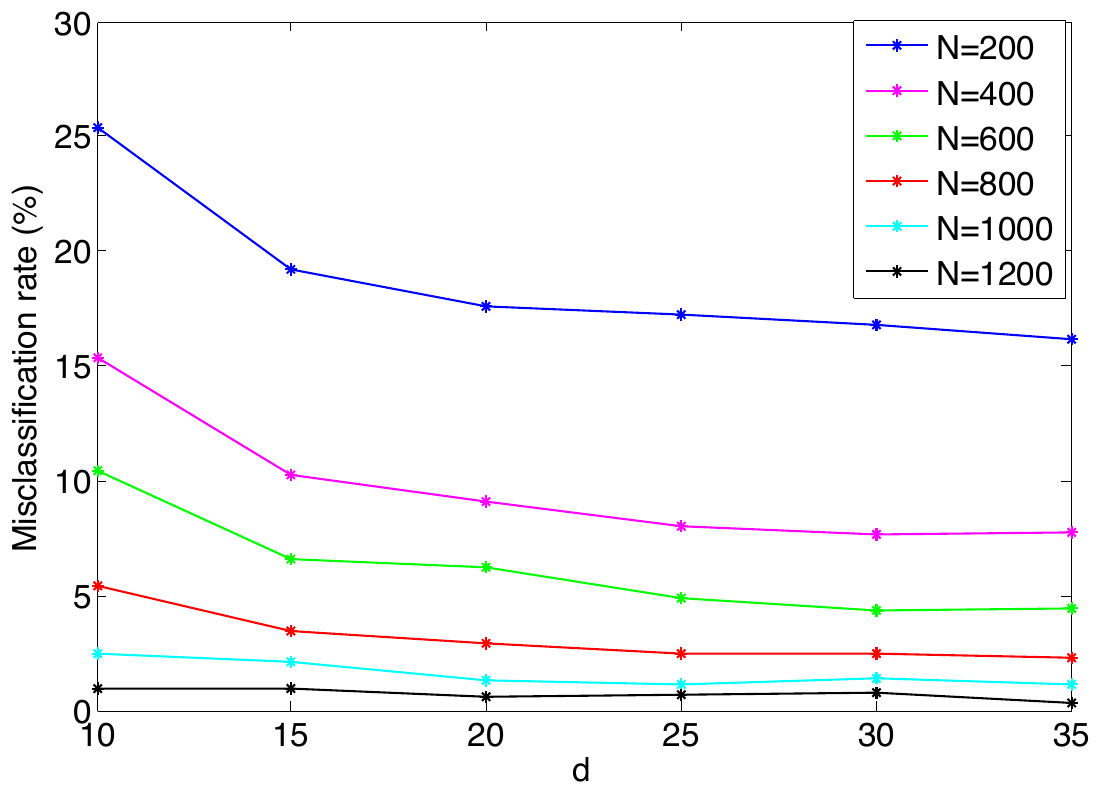}}
     \subfigure[ETH-80 object data set]
       {\label{fig:error_d_eth}\includegraphics[height=5cm]{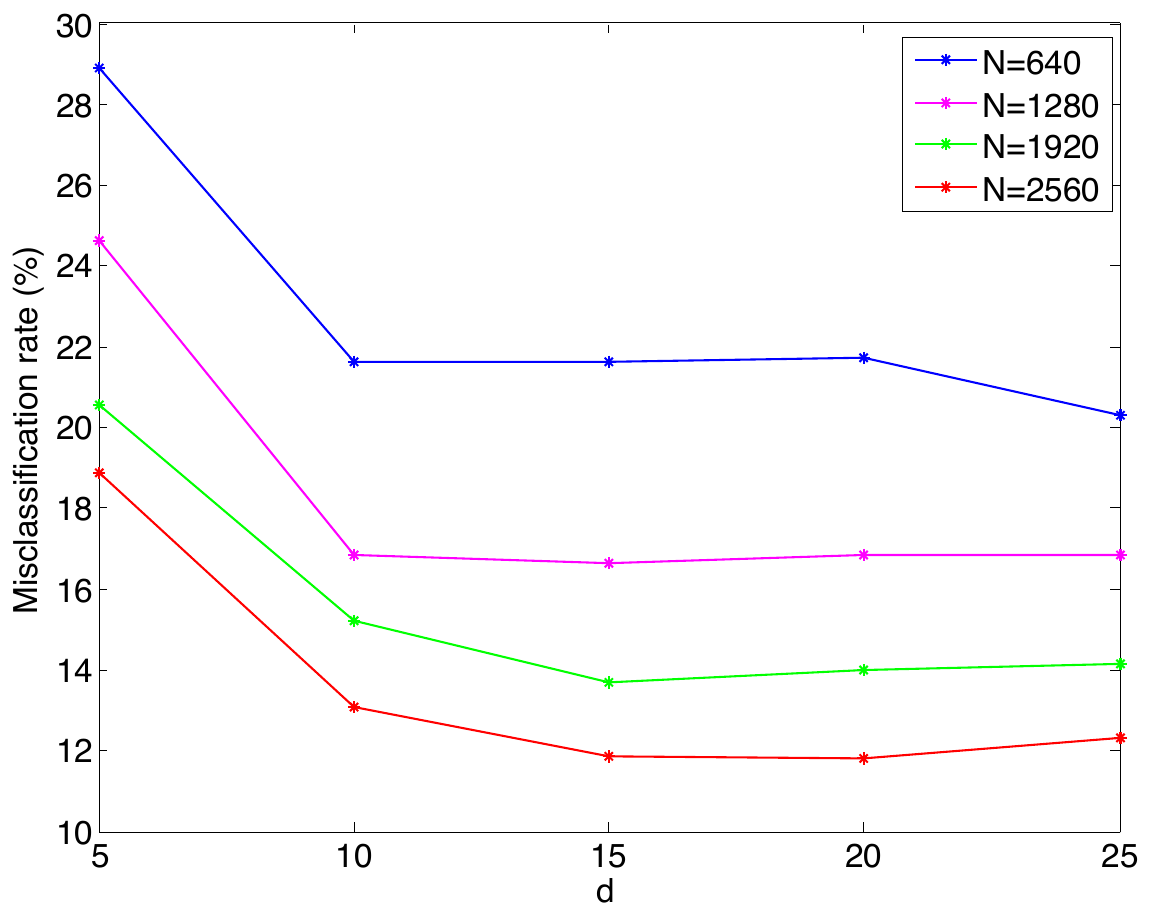}}
     \subfigure[Yale face data set]
       {\label{fig:error_d_yale}\includegraphics[height=5cm]{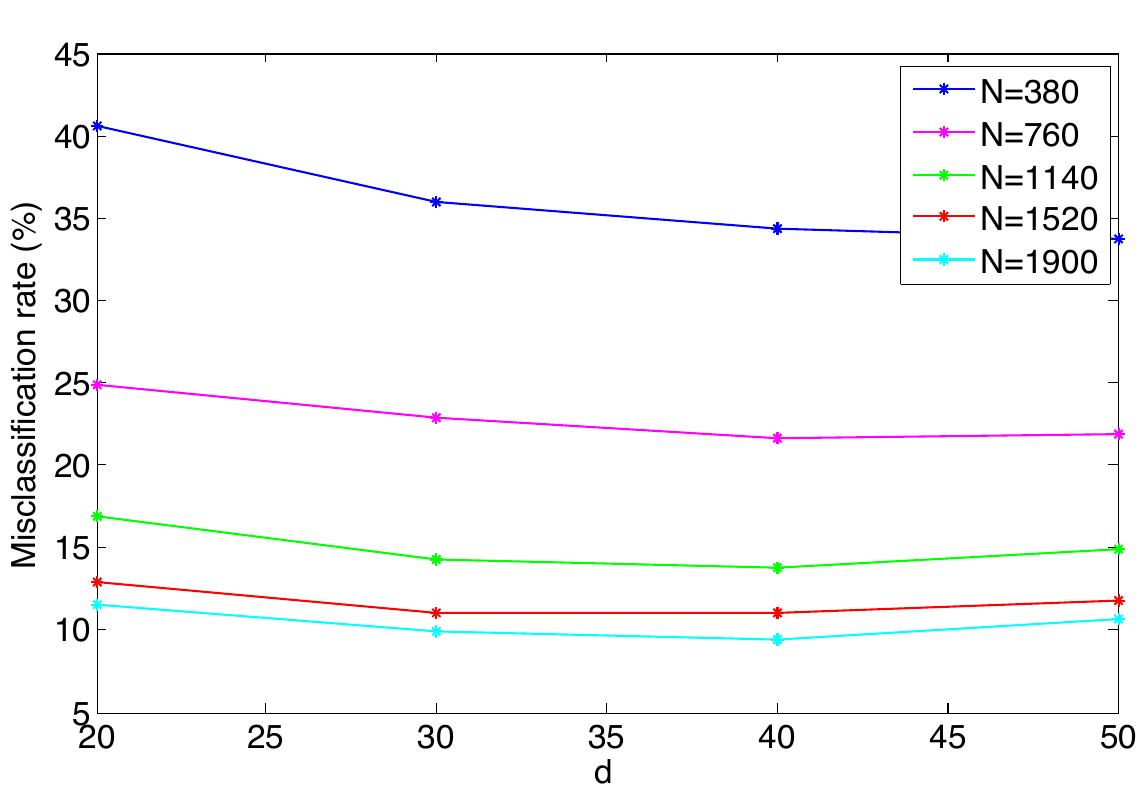}}
 \end{center}
 \caption{Variation of the misclassification rate with the dimension of the embedding}
 \label{fig:error_d}
\end{figure}

We first study the evolution of the classification error with the number of training samples. Figures \ref{fig:error_N_coil}- \ref{fig:error_N_yale} show the variation of the misclassification rate of test samples with respect to the total number of training samples $\numsamp$ for the COIL-20, ETH-80 and Yale data sets. Each curve in the figure shows the errors obtained at a different value of the dimension $d$ of the embedding. The decrease in the misclassification rate with the number of training samples is in agreement with the results in Section \ref{sec:class_anly_supml} as expected. 

The results of Figure \ref{fig:error_N} are replotted in Figure \ref{fig:error_d}, where the variation of the misclassification rate is shown with respect to the dimension $d$ of the embedding at different $\numsamp$ values. It is observed that there may exist an optimal value of the dimension that minimizes the misclassification rate. This can be interpreted in light of the conditions \eqref{eq:cond_sep_interp} and \eqref{eq:cond_sep_nn_interp} in Theorems \ref{thm:acc_cl_rbfint} and \ref{thm:acc_cl_nn_rbfint}, which impose a lower bound on the separability margin $\mar_Q$ in terms of the dimension $d$ of the embedding. In the supervised Laplacian eigenmaps algorithm, the first few dimensions are critical and effective for separating different classes. The decrease in the error with the increase in the dimension for small values of $d$ can be explained with the fact that the separation increases with $d$ at small $d$, thereby satisfying the conditions \eqref{eq:cond_sep_interp},  \eqref{eq:cond_sep_nn_interp}. Meanwhile, the error may stagnate or increase if the dimension $d$ increases beyond a certain value, as the separation does not necessarily increase at the same rate. 

We then examine the variation of the misclassification rate with the separation.  We obtain embeddings at different separation values $\mar$ by changing the parameter $\mu$ of the supervised Laplacian eigenmaps algorithm. Figure \ref{fig:error_gamma} shows the variation of the misclassification rate with the separation $\mar$. Each curve is obtained at a different value of the scale parameter $\sigma$ of the RBF kernels. It is seen that the misclassification rate decreases in general with the separation for small $\mar$ values. This is in agreement with our results, as the conditions \eqref{eq:cond_sep_interp}, \eqref{eq:cond_sep_nn_interp} require the separation to be higher than a threshold. On the other hand, the possible increase in the error at relatively large values of the separation is due to the following. These parts of the plots are obtained at very small $\mu$ values, which typically result in a deformed embedding with a degenerate geometry. The deformation of structure at too small values of $\mu$ may cause the interpolation function to be irregular and hence result in an increase in the error. The tradeoff between the separation and the interpolation function regularity is further studied in Section \ref{ssec:study_cond_exp}.

Finally, Figure \ref{fig:error_sigma} shows the relation between the misclassification error and the scale parameter $\sigma$ of the Gaussian RBF kernels. Each curve is obtained at a different value of the $\mu$ parameter. The optimum value of the scale parameter minimizing the misclassification error can be observed in most experiments. These results confirm the findings of Section \ref{ssec:scale_optim}, suggesting that there exists a unique value of $\sigma$ that minimizes the left hand side of the conditions \eqref{eq:cond_sep_interp}, \eqref{eq:cond_sep_nn_interp}, which probabilistically guarantee the correct classification of data.

\begin{figure}[t]
\begin{center}
     \subfigure[COIL-20 object data set]
       {\label{fig:error_gamma_coil}\includegraphics[height=5cm]{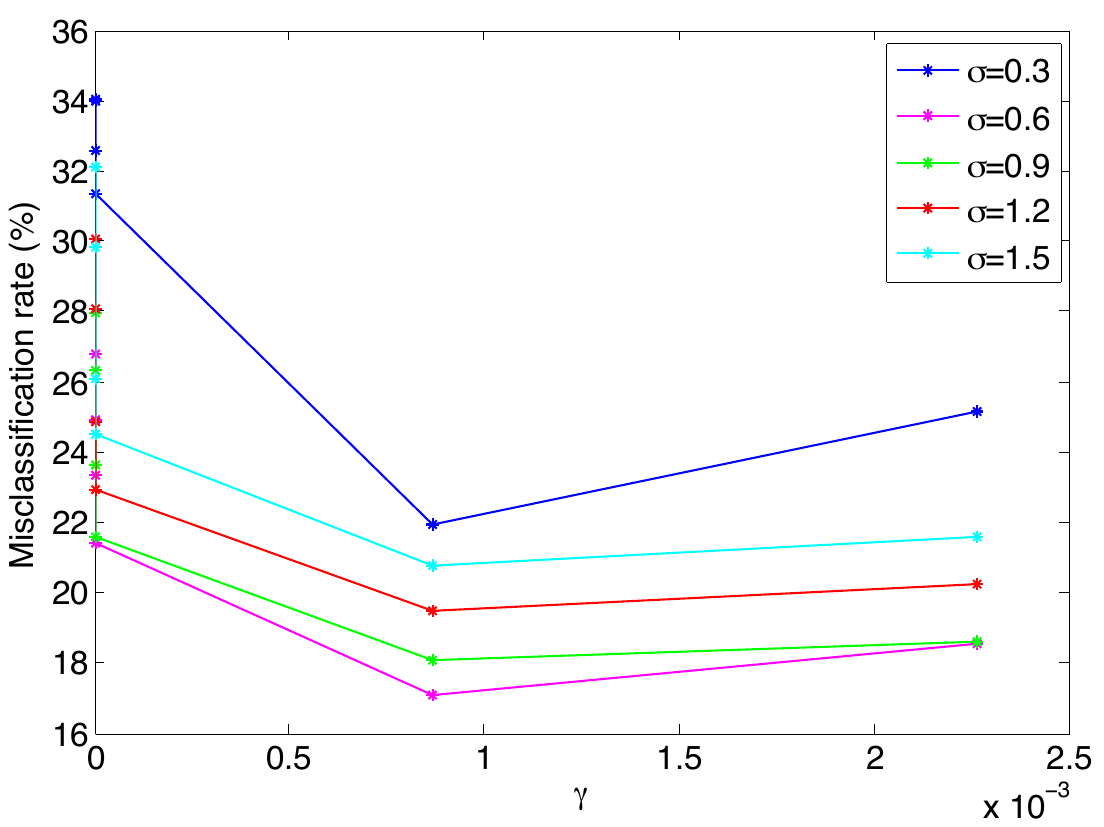}}
     \subfigure[ETH-80 object data set]
       {\label{fig:error_gamma_eth}\includegraphics[height=5cm]{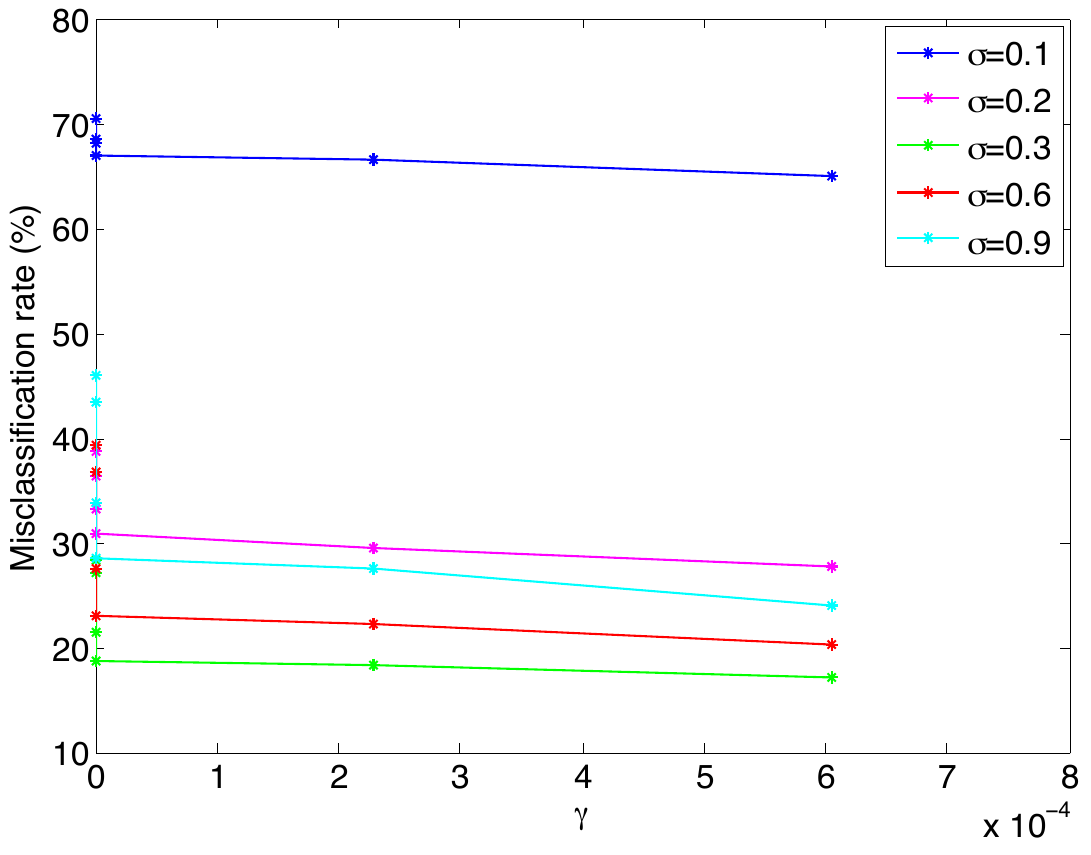}}
     \subfigure[Yale face data set]
       {\label{fig:error_gamma_yale}\includegraphics[height=5cm]{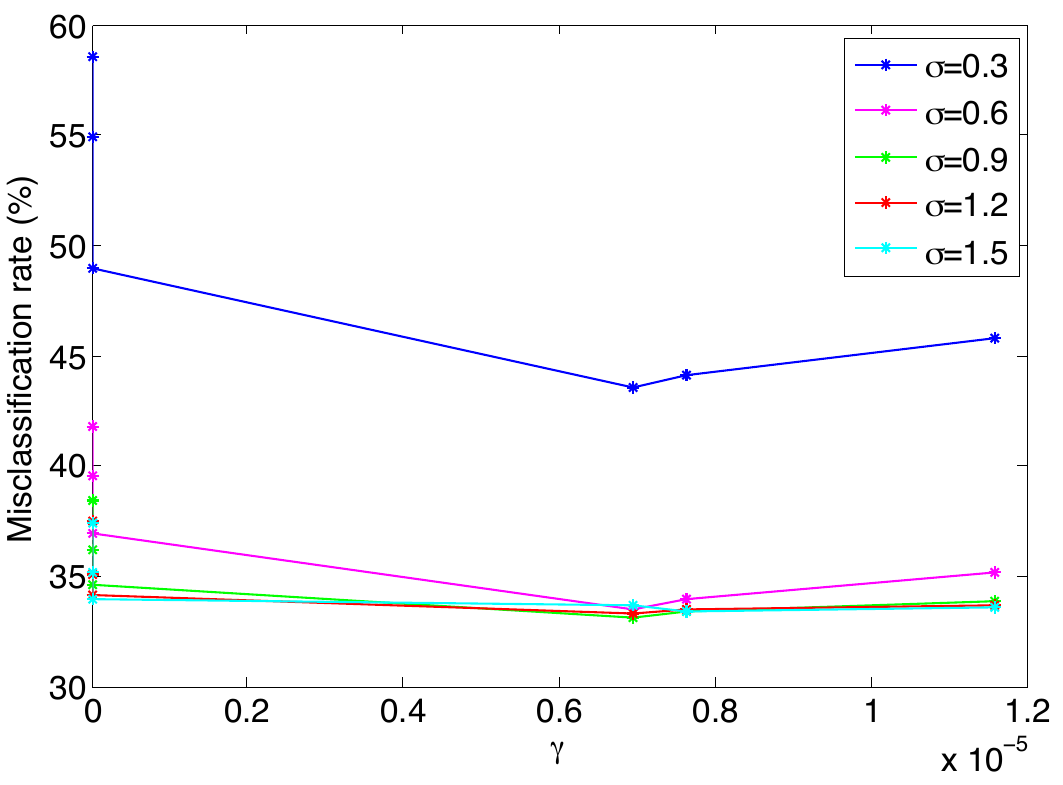}}
 \end{center}
 \caption{Variation of the misclassification rate with the separation}
 \label{fig:error_gamma}
\end{figure}

\begin{figure}[t]
\begin{center}
     \subfigure[COIL-20 object data set]
       {\label{fig:error_sigma_coil}\includegraphics[height=5cm]{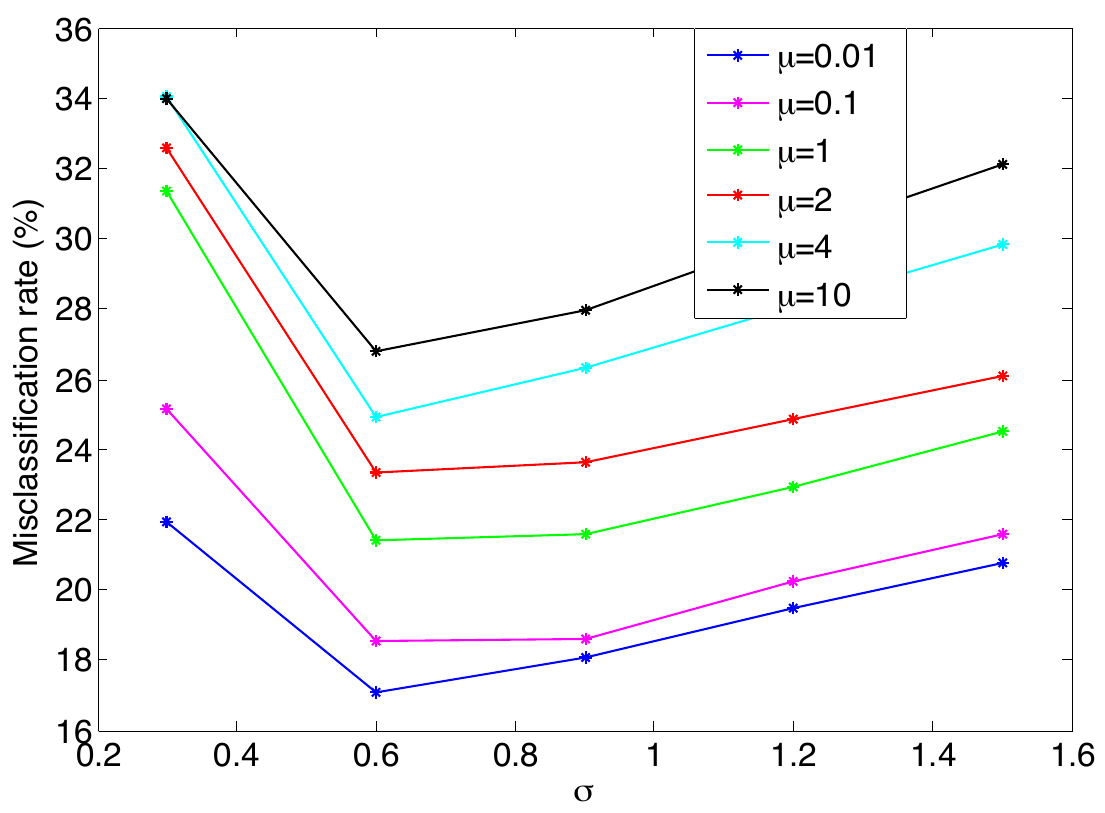}}
     \subfigure[ETH-80 object data set]
       {\label{fig:error_sigma_eth}\includegraphics[height=5cm]{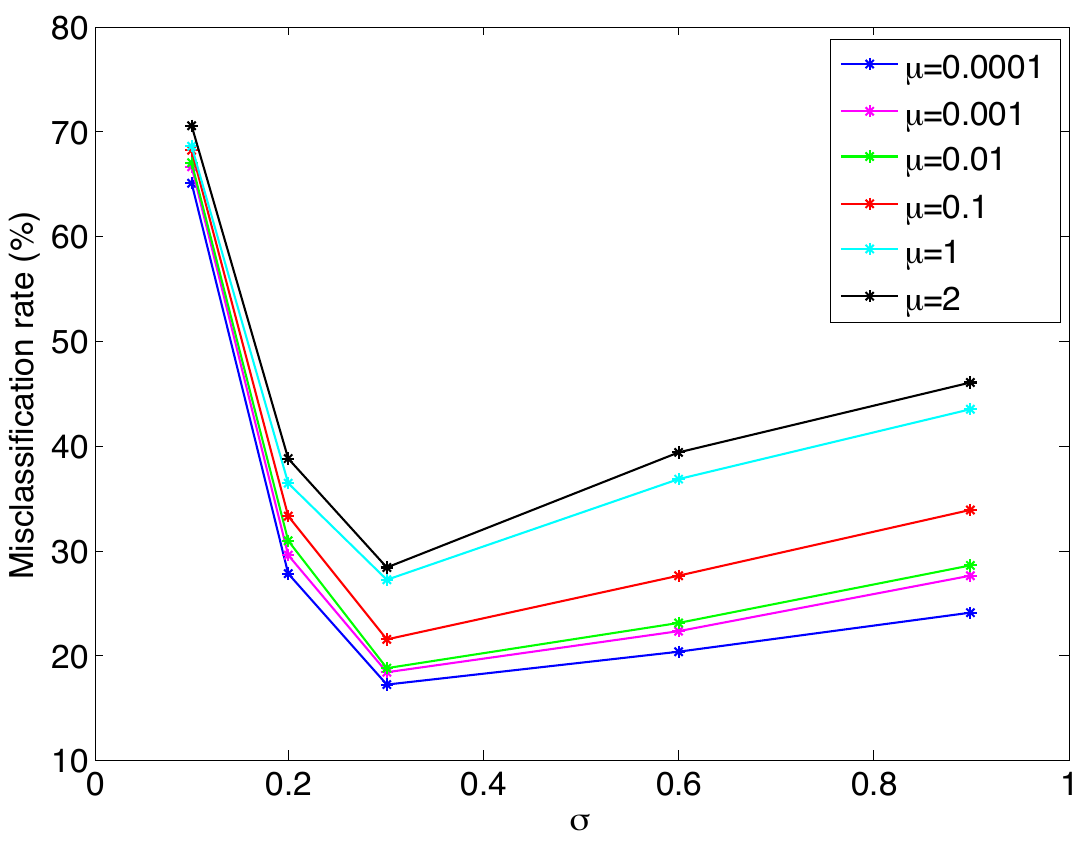}}
     \subfigure[Yale face data set]
       {\label{fig:error_sigma_yale}\includegraphics[height=5cm]{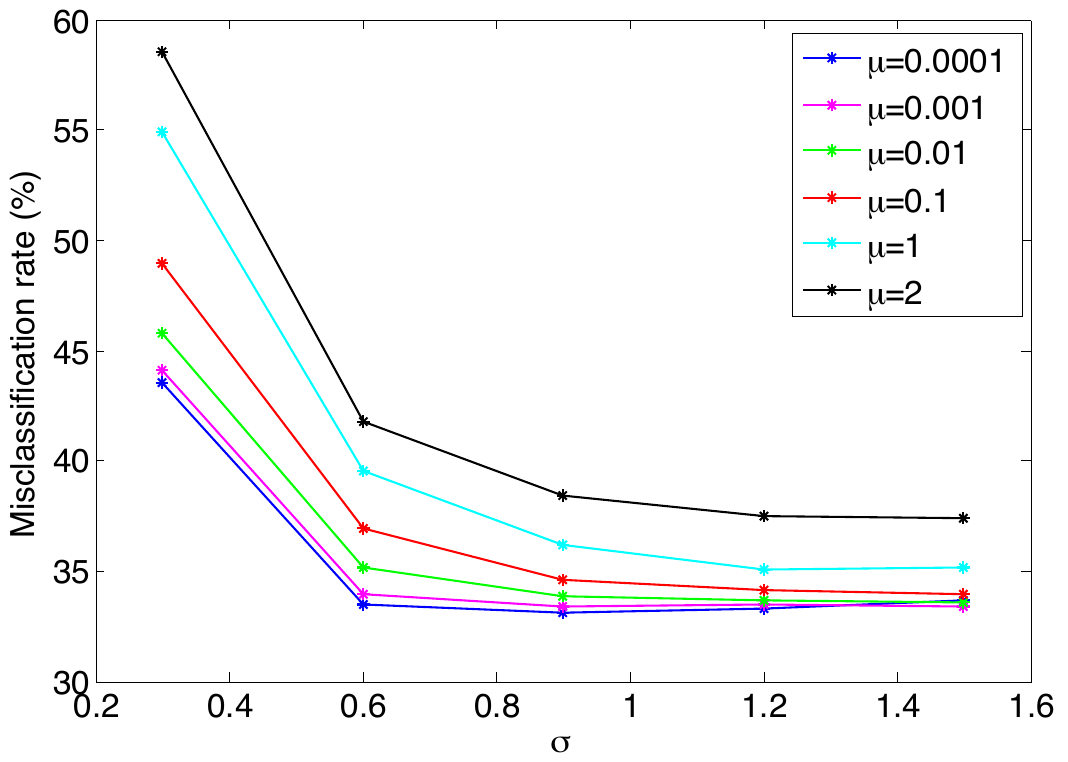}}
 \end{center}
 \caption{Variation of the misclassification rate with the scale parameter}
 \label{fig:error_sigma}
\end{figure}


\subsubsection{Performance analysis of several supervised manifold learning algorithms}
\label{ssec:study_cond_exp}

Next, we compare several supervised manifold learning methods. We aim to interpret the performance differences of different types of embeddings in light of our theoretical results in Section \ref{ssec:oos_rbf}. First, remember from Theorem \ref{thm:acc_cl_rbfint} that the condition 
\begin{equation}
\label{eq:cond_param_condition}
\sqrt{d} \, \cbnd \, (\Lconf \delta + \epsilon) \leq \mar / 2
\end{equation}
needs to be satisfied (or, equivalently the condition \eqref{eq:cond_sep_nn_interp} from Theorem \ref{thm:acc_cl_nn_rbfint}) in order for the generalization bounds to hold. This preliminary condition basically states that a compromise must be achieved between the regularity of the interpolation function, captured via the terms $\cbnd$ and $\Lconf$, and the separation $\mar$ of the embedding of training samples, in order to bound the misclassification error. In other words, increasing the separation too much in the embedding of training samples does not necessarily lead to good classification performance if the interpolation function has poor regularity. 

Hence, when comparing different embeddings in the experiments of this section, we define a condition parameter given by
\[
\frac{\sqrt{d} \cbnd \Lconf}{\mar}
\]
which represents the ratio of the left and right hand sides of \eqref{eq:cond_param_condition} (by fixing the probability parameters $\delta$ and $\epsilon$). Setting the Lipschitz constant of the Gaussian RBF kernel as $\Lconf = \sqrt{2} e^{-\half} \sigma^{-1}$ (see Section \ref{ssec:scale_optim} for details), we can equivalently define the condition parameter as
\begin{equation}
\label{eq:defn_cond_param}
\kappa = \frac{\sqrt{d} \cbnd }{\sigma \mar}
\end{equation}
and study this condition parameter for the supervised dimensionality methods in comparison. Note that a smaller condition parameter means that the necessary conditions of Theorems \ref{thm:acc_cl_rbfint} and \ref{thm:acc_cl_nn_rbfint} are more likely to be satisfied, hence hinting at the expectation of a better classification accuracy.

We compare the following supervised embeddings:

\begin{itemize}
\item Supervised Laplacian eigenmaps embedding obtained by solving \eqref{eq:supLap_formal}:
\[
\min_{Y} \tr(\Y^T \Lw \Y) - \mu \,  \tr(\Y^T \Lb \Y) \text{ subject to } \Y^T \Y = I
\]
\item Fisher embedding\footnote{We use a nonlinear version of the formulation in \citep{Wang09} by removing the constraint that the embedding be given by a linear projection of the data.}, obtained by solving 
\begin{equation}
\label{eq:form_fisher_embed}
\max_{Y} \frac{\tr(Y^T L_b Y)}{\tr(Y^T L_w Y)}.
\end{equation}
\item Label encoding, which maps each data sample to its label vector of the form $[ 0 \ 0 \dots 1 \dots 0]$, where the only nonzero entry corresponds to its class.
\end{itemize}

The label encoding method is included in the experiments to provide a reference, which can also be regarded as a degenerate supervised manifold learning algorithm that provides maximal separation between data samples from different classes. In all of the above methods the training samples are embedded into the low-dimensional domain, and test samples are mapped via Gaussian RBF interpolators and assigned labels via nearest neighbor classification in the low-dimensional domain. The scale parameter $\sigma$ of the RBF kernel is set to a reference value in each dataset within the typical range $[0.5, 1]$ where the best accuracy is attained. We have fixed the weight parameter as $\mu=0.01$ in all setups, and set the dimension of the embedding as equal to the number of classes. In order to study the properties of the interpolation function in relation with the condition parameter in \eqref{eq:defn_cond_param}, we also test the supervised Laplacian eigenmaps and the label encoding methods under RBF interpolators with high scale parameters, which are chosen as a few times the reference $\sigma$ value giving the best results. Finally, we also include in the comparisons a regularized version of the supervised Laplacian eigenmaps embedding by controlling the magnitude of the interpolation function.

The results obtained on the COIL-20, ETH-80, Yale and reduced Yale data sets are reported respectively in Figures \ref{fig:error_cond_COIL20}-\ref{fig:error_cond_yalered}. In each figure, panel (a) shows the misclassification rates of the embeddings and panel (b) shows the condition parameters of the embeddings at different total number of training samples ($N$). The logarithm of the condition parameter is plotted for ease of visualization.

\begin{figure}[t]
\begin{center}
     \subfigure[]
       {\label{fig:error_embeddings_coil20}\includegraphics[height=5cm]{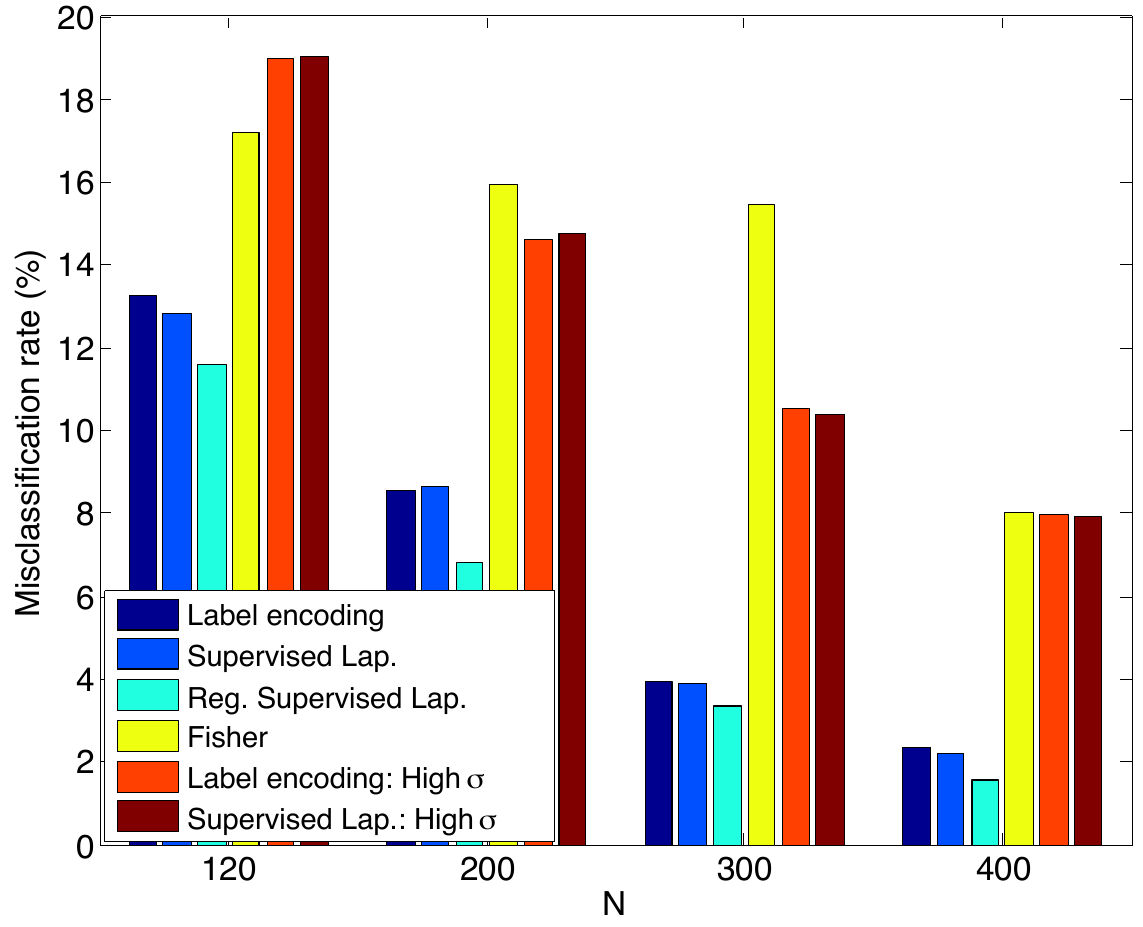}}
     \subfigure[]
       {\label{fig:cond_embeddings_coil20}\includegraphics[height=5cm]{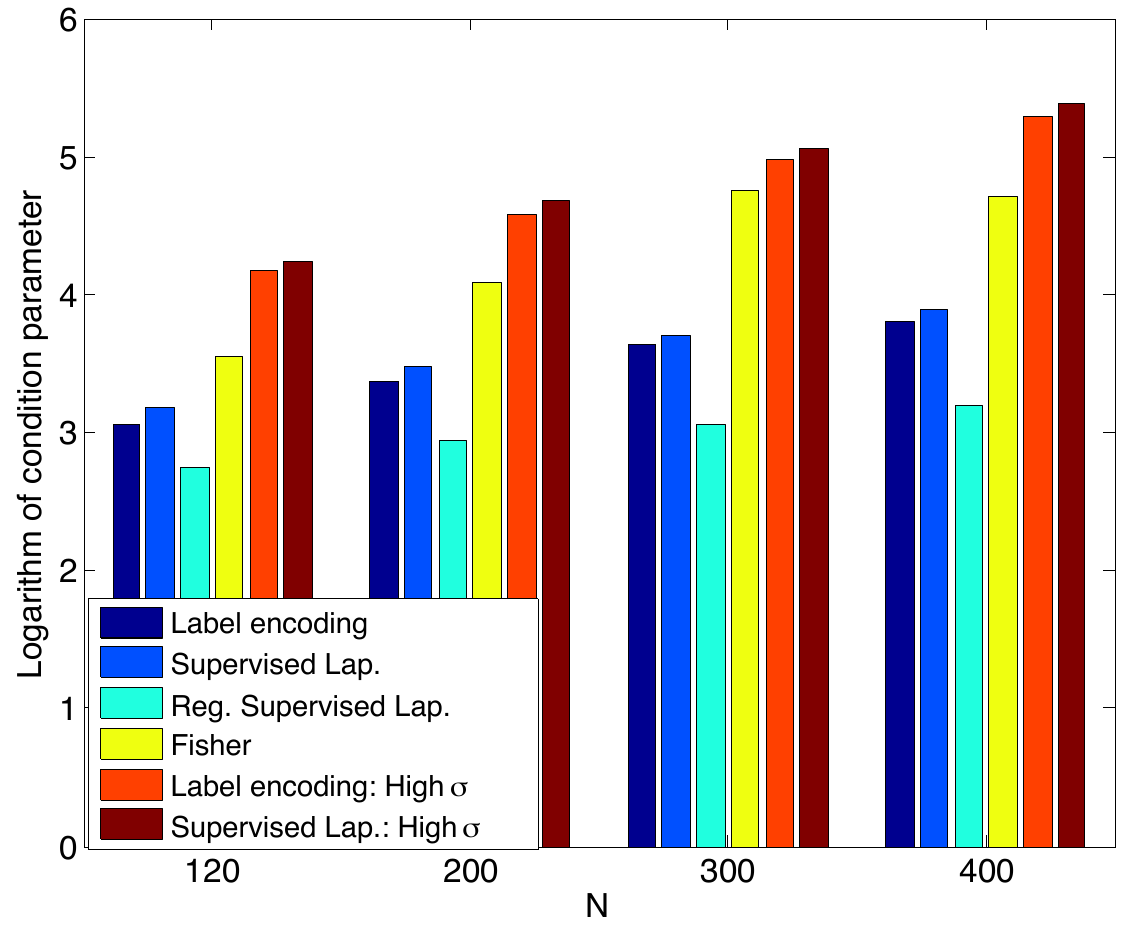}}
 \end{center}
 \caption{Misclassification rates and the condition parameters of the embeddings for the COIL-20 object data set}
 \label{fig:error_cond_COIL20}
\end{figure}

\begin{figure}[]
\begin{center}
     \subfigure[]
       {\label{fig:error_embeddings_eth80}\includegraphics[height=5cm]{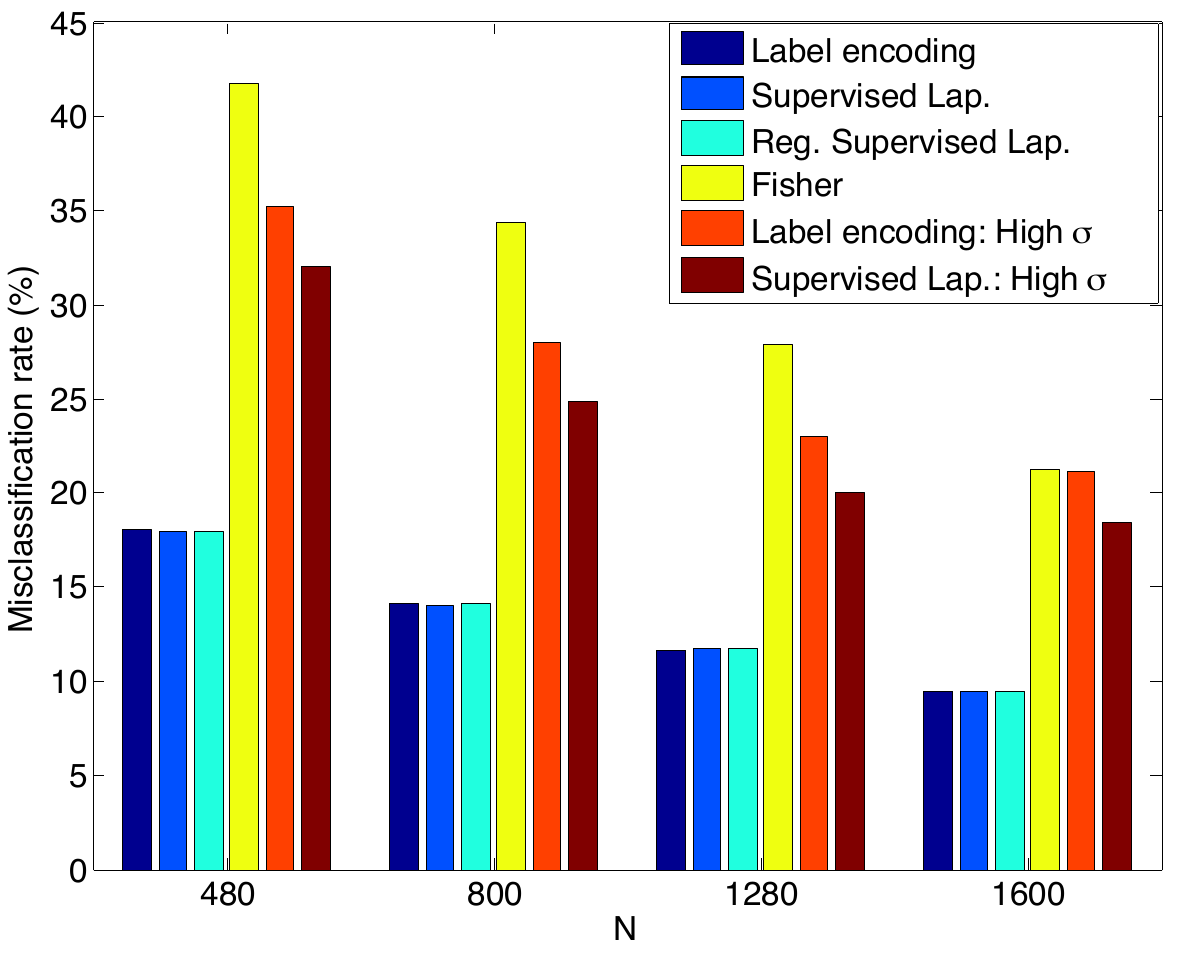}}
     \subfigure[]
       {\label{fig:cond_embeddings_eth80}\includegraphics[height=5cm]{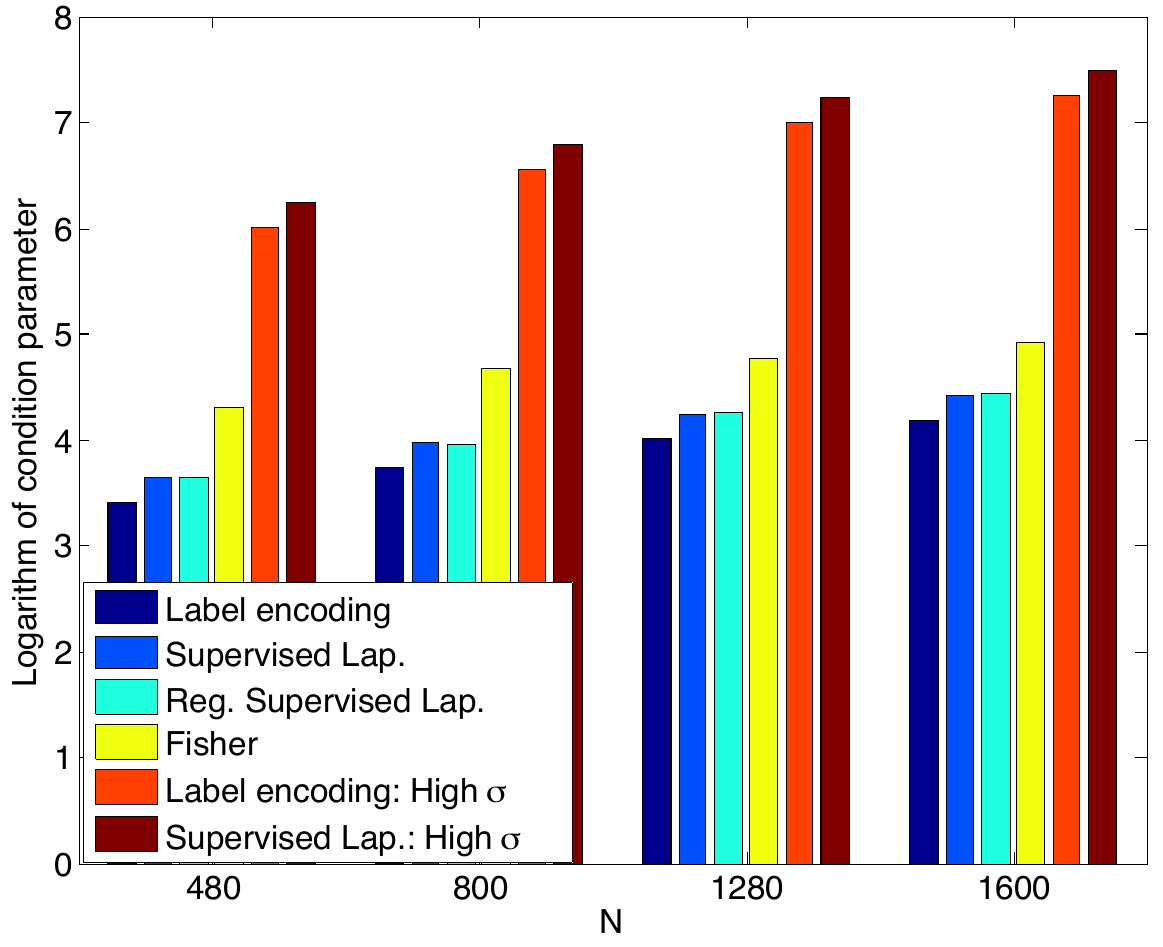}}
 \end{center}
 \caption{Misclassification rates and the condition parameters of the embeddings for the ETH-80 object data set}
 \label{fig:error_cond_ETH80}
\end{figure}

\begin{figure}[]
\begin{center}
     \subfigure[]
       {\label{fig:error_embeddings_yalefull}\includegraphics[height=5cm]{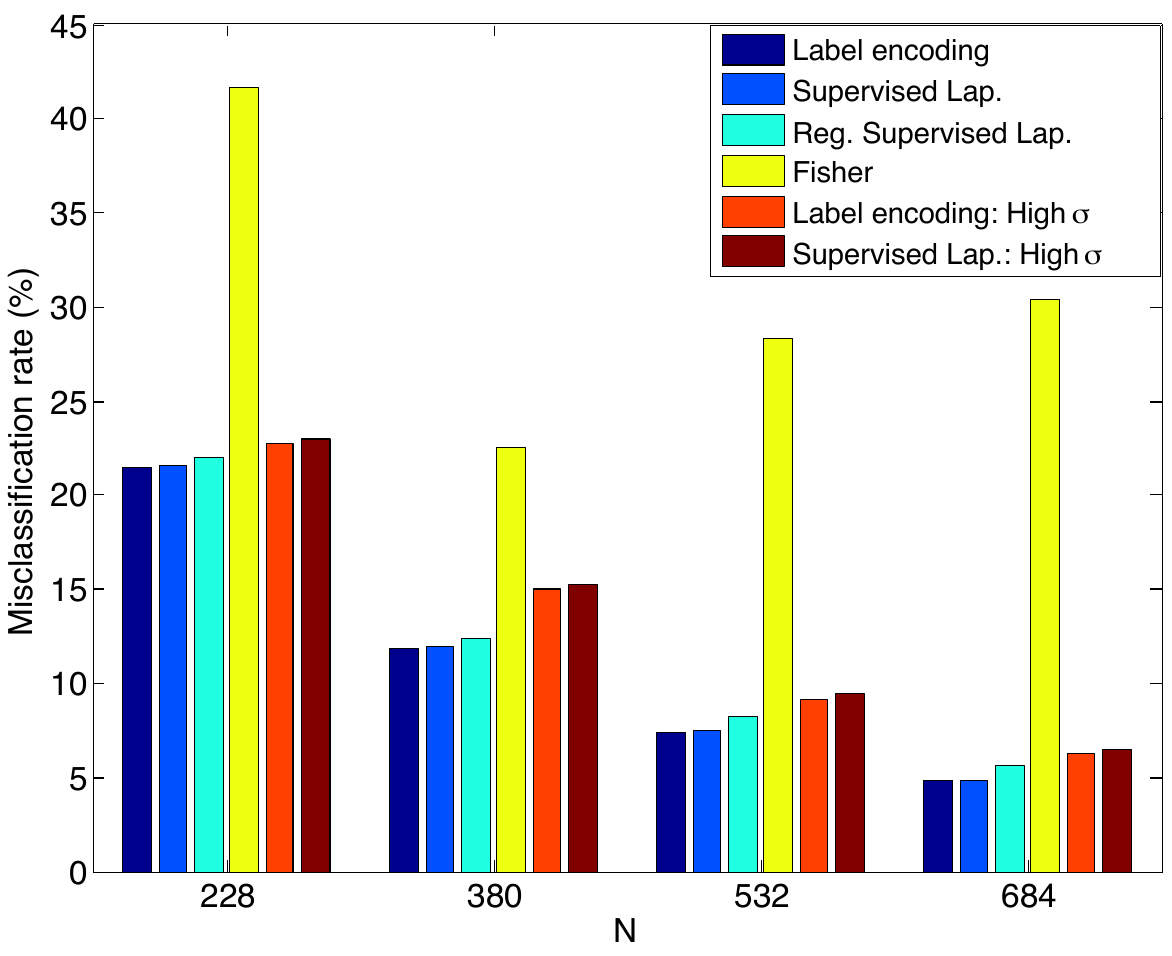}}
     \subfigure[]
       {\label{fig:cond_embeddings_yalefull}\includegraphics[height=5cm]{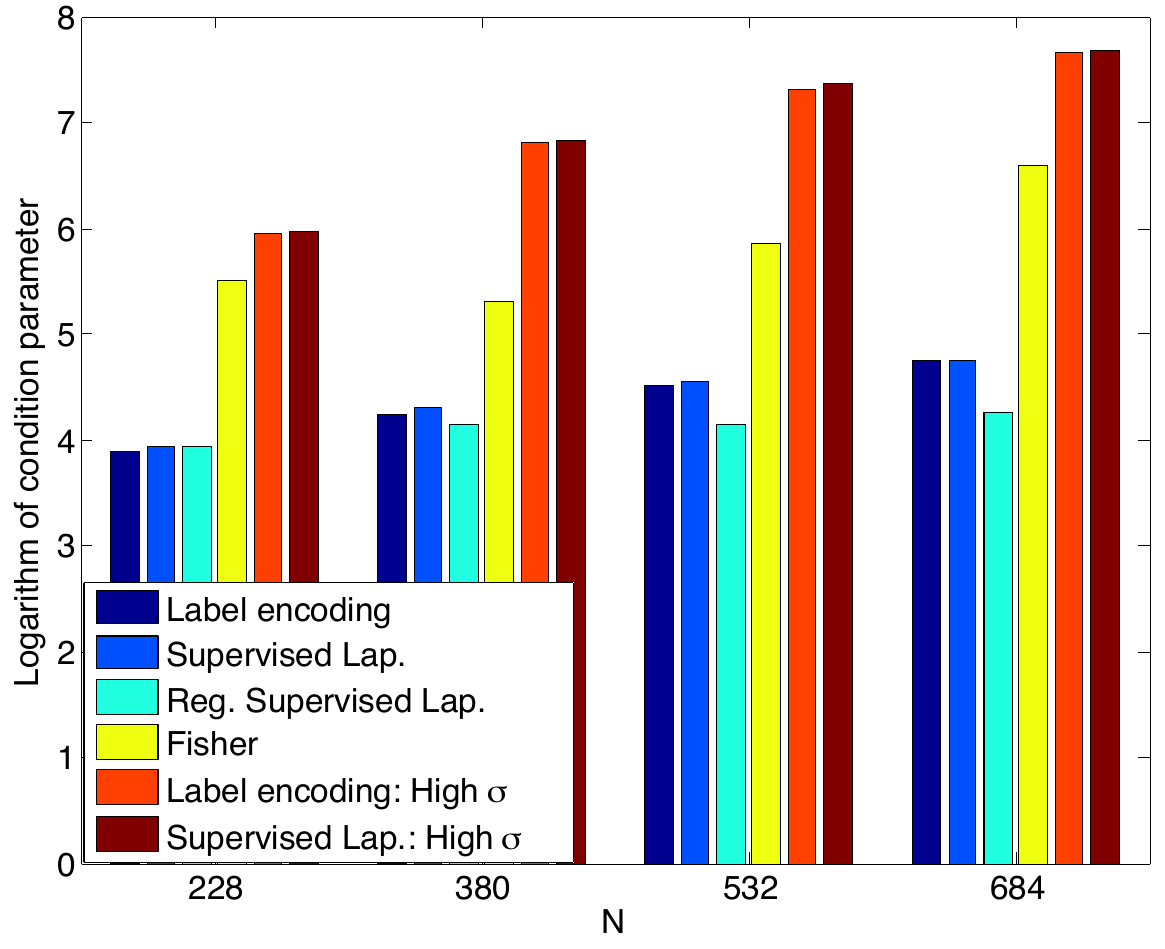}}
 \end{center}
 \caption{Misclassification rates and the condition parameters of the embeddings for the Yale face data set}
 \label{fig:error_cond_yalefull}
\end{figure}

\begin{figure}[]
\begin{center}
     \subfigure[]
       {\label{fig:error_embeddings_yalered}\includegraphics[height=5cm]{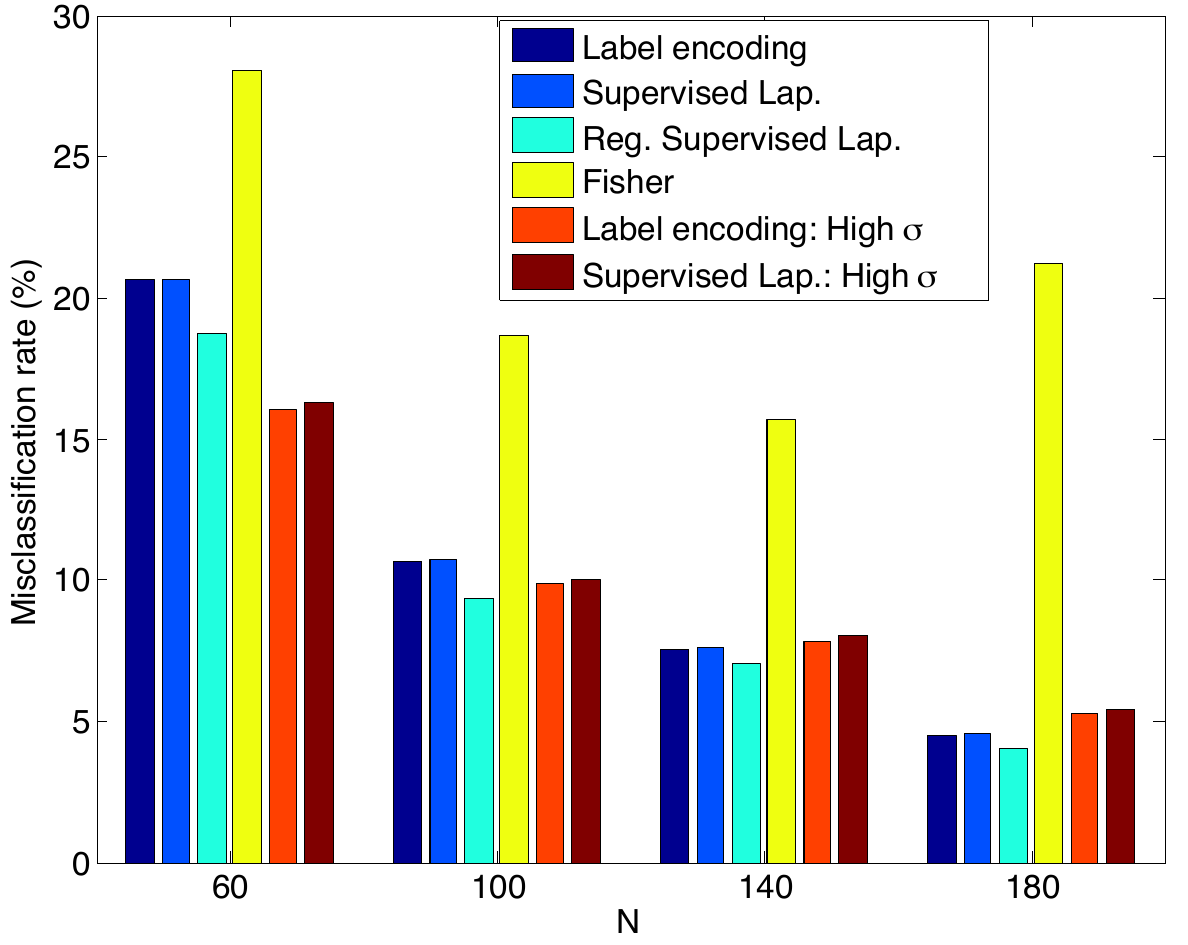}}
     \subfigure[]
       {\label{fig:cond_embeddings_yalered}\includegraphics[height=5cm]{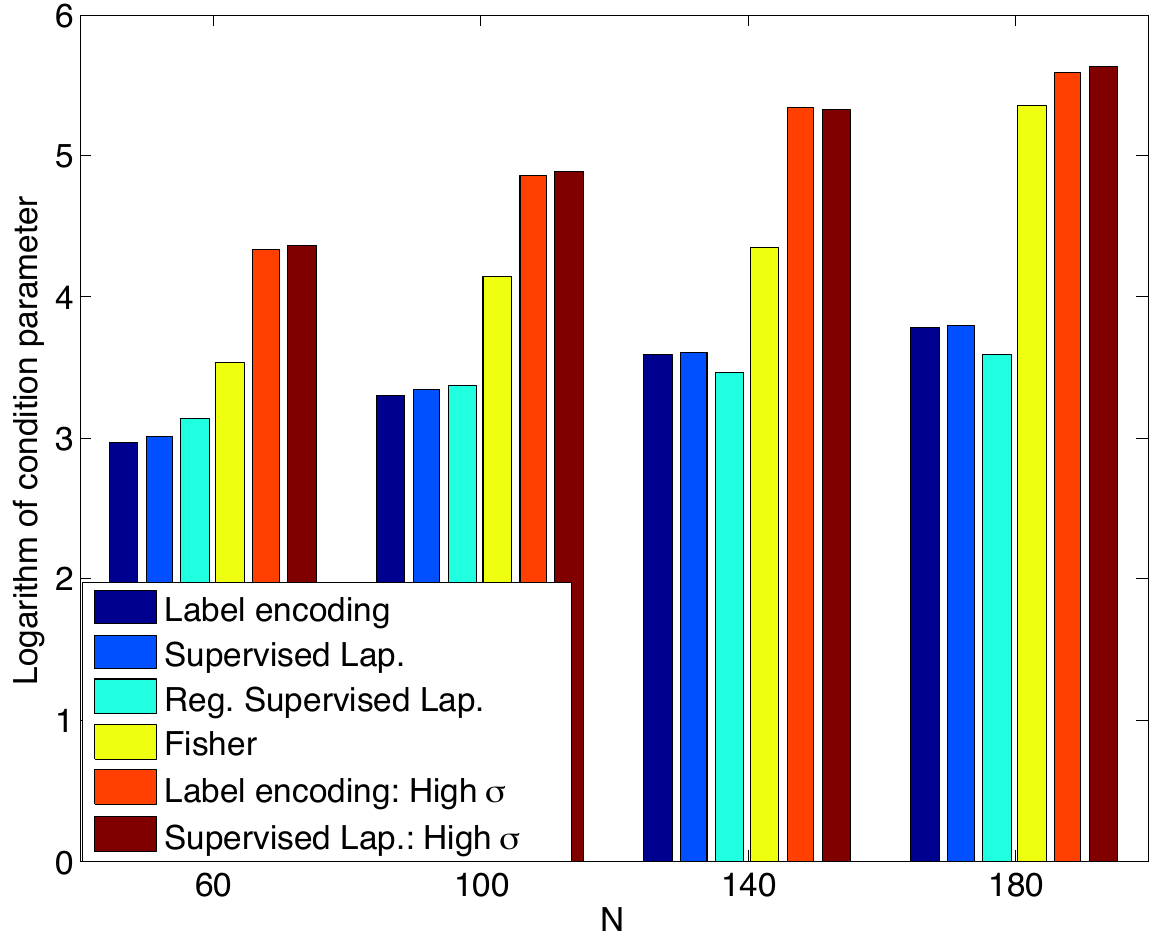}}
 \end{center}
 \caption{Misclassification rates and the condition parameters of the embeddings for the reduced Yale face data set}
 \label{fig:error_cond_yalered}
\end{figure}

The plots in Figures \ref{fig:error_cond_COIL20}-\ref{fig:error_cond_yalered} show that the label encoding, supervised Laplacian eigenmaps, and the regularized supervised Laplacian eigenmaps embeddings yield better classification accuracy than the other three methods (supervised Fisher, and the embeddings with high scale parameters) in all experiments, with the only exception of the cases $N=60$ and $N=100$ for the reduced Yale data set. Meanwhile, examining the condition parameters of the embeddings, we observe that label encoding, supervised Laplacian eigenmaps, and the regularized supervised Laplacian eigenmaps embeddings always have a smaller condition parameter than the other three methods. This observation confirms the intuition provided by the necessary conditions of Theorems \ref{thm:acc_cl_rbfint} and \ref{thm:acc_cl_nn_rbfint}: A compromise between the separation and the interpolator regularity is required for good classification accuracy. The increase in the condition parameter as $N$ increases is since the coefficient bound  $\cbnd$ involves a summation over all training samples. The reason why the embeddings with high $\sigma$ parameters yield better classification accuracy than the other ones in the cases $N=60$ and $N=100$ for the reduced Yale data set is that a larger RBF scale helps better cover up the ambient space when the number of training samples is particularly low.

In the COIL-20 and the reduced Yale data sets, the best classification accuracy is obtained with the regularized supervised Laplacian eigenmaps method, while this is also the method having the smallest condition number, except for the smallest two values of $N$ in the reduced Yale data set. In the ETH-80 and Full Yale data sets, the classification accuracy of label encoding attains that of the supervised Laplacian eigenmaps method. The condition parameter of the label encoding embedding is relatively small in these two data sets; in fact, in ETH-80 the label encoding embedding has the smallest condition number among all methods. This may be useful for explaining why this simple classification method has quite favorable performance in this data set. Likewise, if we leave aside the versions of the methods with high-scale interpolators, the Fisher embedding has the highest misclassification rate compared to label encoding, the supervised Laplacian, and the regularized supervised Laplacian embeddings, while it also has the highest condition parameter among these methods. \footnote{The formulation in \eqref{eq:form_fisher_embed} has been observed to give highly polarized embeddings in \citep{VuralG16}, where the samples of only few classes stretch out along each dimension and all the other classes are mapped close to zero.} 

To conclude, the results in this section suggest that the experimental findings are in agreement with the main results in Section \ref{ssec:oos_rbf}, justifying the pertinence of the conditions \eqref{eq:cond_sep_interp} and \eqref{eq:cond_sep_nn_interp} to classification accuracy, hence suggesting that a balance must be sought between the separability margin of the embedding and the regularity of the interpolation function in supervised manifold learning.

\section{Conclusions}
\label{sec:conc}

Most of the current supervised manifold learning algorithms focus on learning representations of training data, while the generalization properties of these representations have not been understood well yet. In this work, we have proposed a theoretical analysis of the performance of supervised manifold learning methods. We have presented generalization bounds for nonlinear supervised manifold learning algorithms and explored how the classification accuracy relates to several setup parameters such as the linear separation margin of the embedding, the regularity of the interpolation function, the number of training samples, and the intrinsic dimensions of the class supports (manifolds). Our results suggest that embeddings of training data with good generalization capacities must allow the construction of sufficiently regular interpolation functions that extend the mapping to new data. We have then examined whether the assumption of linear separability is easy to satisfy for structure-preserving supervised embedding algorithms. We have taken the supervised Laplacian eigenmaps algorithms as reference, and showed that these methods can yield linearly separable embeddings. Providing insight about the generalization capabilities of supervised dimensionality reduction algorithms, our findings can be helpful in the classification of low-dimensional data sets.



\acks{We would like to thank Pascal Frossard and Alhussein Fawzi for the helpful discussions that contributed to this study.}


\appendix


\section{Proof of the results in Section \ref{sec:class_anly_supml} }

\subsection{Proof of Theorem \ref{thm:emb_smooth_intp}}
\label{pf:thm:emb_smooth_intp}

\begin{proof}
Given $\x$, let $\x_i \in \X$ be the nearest neighbor of $\x$ in $\X$ that is sampled from $\pmes_m$
\begin{equation*}
i = \arg \min_{j} \| \x - \x_j  \|  \, \text{ s.t. } \, \x_j \sim \pmes_m.
\end{equation*}
Due to the separation hypothesis, 
\begin{equation*}
\hyp_{mk}^T \, \yi + b_{mk}  > \mar/2,  \quad  \forall k=1, \dots, M-1. 
\end{equation*}
We have 
\begin{equation*}
\begin{split}
\hyp_{mk}^T \, f(\x) + b_{mk} &= \hyp_{mk}^T \, f(\x_i) + b_{mk} +  \hyp_{mk}^T \, (f(\x) - f(\x_i)) \\
  & \geq \hyp_{mk}^T \, \y_i + b_{mk} - \left |  \hyp_{mk}^T \, (f(\x) - f(\x_i))  \right | \\
  & >  \mar/2  - \| f(\x) - f(\x_i)  \| 
     \, \, \geq \, \,  \mar/2  - \Lcon \| \x - \x_i \|.
\end{split}
\end{equation*}
Then if the condition $\Lcon \| \x - \x_i \| \leq \mar/2$ is satisfied, from the above inequality we have $\hyp_{mk}^T \, f(\x) + b_{mk} > 0 $ for all $k=1, \dots, M-1$. This gives $\hat{\class}(\x) = m$ and thus ensures that $\x$ is classified correctly.

In the sequel, we lower bound the probability that the distance $\| \x - \x_i \|$ between $\x$ and its nearest neighbor from the same class is smaller than $\mar/2$. We employ the following result by \citet{KulkarniP95}. It is demonstrated in the proof of Theorem 1 in \citep{KulkarniP95} that, if $\X$ contains at least $\numsamp_m$ samples drawn i.i.d.~from $\pmes_m$ such that $\numsamp_m  \geq \cover(\epsilon/2, \M_m) $ for some $\epsilon >0$, then the probability of $\| \x - \x_i \|$ being larger than $\epsilon$ can be upper bounded in terms of the covering number of $\M_m$ as
\begin{equation*}
P(\| \x - \x_i \| > \epsilon ) \leq \frac{\cover(\epsilon/2, \M_m)}{2 \numsamp_m}.
\end{equation*}
Therefore, for any $\epsilon$ such that $\epsilon \leq \mar/(2L)$ and $\numsamp_m  \geq \cover(\epsilon/2, \M_m) $, with probability at least $1 - \cover(\epsilon/2, \M_m) / (2 \numsamp_m)$, we have
\begin{equation*}
\| \x - \x_i \| \leq \epsilon  \leq \mar/(2L)
\end{equation*}
thus, the class label of $\x$ is correctly estimated as $\hat{\class}(\x) = m$ due to the above discussion.
\end{proof}

\subsection{Proof of Lemma \ref{lem:fdev_neigh_genf}}
\label{pf:lem_fdev_neigh_genf}
\begin{proof}
We first bound the deviation of $f(x)$ from the sample average of $f$ in the neighborhood of $\x$ as
\begin{equation}
\label{eq:f_dev_decom_genf}
\left \| f(x) - \frac{1}{\Knb} \sum_{\x_j \in \nbdm} f(\x_j) \right \|   
\leq \left  \|  f(x) - m_f   \right \|  
+ \left \|   \frac{1}{\Knb} \sum_{\x_j \in \nbdm} f(\x_j) - m_f  \right \|
\end{equation}
where $m_f$ is the conditional expectation of $f(u)$, given $u \in B_{\delta}(x)$
\begin{equation*}
m_f = \E_{u} \big[ f(u) \given u \in \Bx \big ] 
 =  \frac{1}{\pmes_m \big(\Bx \big)}  \int_{\Bx}  f(u) \, \,  d \pmes_m(u ) .
\end{equation*}

The first term in \eqref{eq:f_dev_decom_genf} can be bounded as
\begin{equation}
\label{eq:first_term_lem_genf}
\begin{split}
\|  f(x) - m_f    \|  &= \left \|   \frac{1}{\pmes_m \big(\Bx \big)}  \int_{\Bx} \big( f(x) - f(u) \big) \, d \pmes_m(u )  \right  \| \\
&\leq   \frac{1}{\pmes_m \big(\Bx \big)}    \int_{\Bx} \| f(x) - f(u) \|  \, d \pmes_m(u ) 
\leq  \frac{1}{\pmes_m \big(\Bx \big)}    \int_{\Bx}  \Lcon \| x - u \| \, d \pmes_m(u ) \\
&\leq  \frac{1}{\pmes_m \big(\Bx \big)}  \int_{\Bx}  \Lcon \delta  \, d \pmes_m(u )  = \Lcon \delta
\end{split}
\end{equation}
where the second inequality follows from the fact that $f$ is Lipschitz continuous on the support $\M_m$, where the measure $\pmes_m$ is nonzero.

The second term in \eqref{eq:f_dev_decom_genf} is given by
\begin{equation}
\label{eq:norm_fdev_mean_genf}
\left \|    \frac{1}{\Knb} \sum_{\x_j \in \nbdm} f(\x_j)  - m_f  \right \| 
= \left(    \sum_{k=1}^d   \bigg |     
\frac{1}{\Knb}  \sum_{\x_j \in \nbdm} f^k(\x_j)  - m_f^k
   \bigg |^2   \right)^{1/2}
\end{equation}
where $m_f^k$ denotes the $k$-th component of $m_f$, for $k=1, \dots, d$. Consider the random variables $f^k(\x_j)$. Defining  
\begin{equation*}
f_{\min}^k = \inf_{u \in \Bx } f^k(u), 
\qquad \quad
f_{\max}^k = \sup_{u \in \Bx } f^k(u), 
\end{equation*}
it follows that $f_{\max}^k - f_{\min}^k  \leq 2 \Lcon  \delta$ due to the Lipschitz continuity of $f$. Then from Hoeffding's inequality, we have
\begin{equation*}
P \left(   \bigg |  \frac{1}{\Knb}  \sum_{\x_j \in \nbdm} f^k(\x_j)  - m_f^k    \bigg |  \geq \epsilon   \right)
\leq 2 \exp \left(     -\frac{2 \Knb \epsilon^2}{ ( f_{\max}^k - f_{\min}^k)^2}   \right)
\leq 2 \exp \left(     -\frac{\Knb \epsilon^2}{ 2 \Lcon^2  \delta^2}   \right).
\end{equation*}
From the union bound, we get that with probability at least $ 1- 2 d \exp \left(     -\frac{\Knb \epsilon^2}{ 2 \Lcon^2  \delta^2}   \right)$, for all $k$
\begin{equation*}
 \bigg |  \frac{1}{\Knb}  \sum_{\x_j \in \nbdm} f^k(\x_j)  - m_f^k    \bigg |  \leq \epsilon, 
\end{equation*}
which yields from \eqref{eq:norm_fdev_mean_genf}
\begin{equation*}
\left \|    \frac{1}{\Knb} \sum_{\x_j \in \nbdm} f(\x_j)  - m_f  \right \| 
\leq \sqrt{d} \epsilon.
\end{equation*}
Combining this result with the bound in \eqref{eq:first_term_lem_genf}, we conclude that with probability at least $ 1- 2 d \exp \left(     -\frac{\Knb \epsilon^2}{ 2 \Lcon^2  \delta^2}   \right)$
\begin{equation*}
\left \| f(x) - \frac{1}{\Knb} \sum_{\x_j \in \nbdm} f(\x_j) \right \|   
\leq \Lcon \delta + \sqrt{d} \epsilon.
\end{equation*}
\end{proof}

\subsection{Proof of Theorem \ref{thm:linclassif_genf}}
\label{pf:thm_linclassif_genf}

\begin{proof}
Given the test sample $\x$ and a training sample $\xii$ drawn i.i.d.~with respect to $\pmes_m$, the probability that $\xii$ lies within a $\delta$-neighborhood of $\x$ is given by
\begin{equation*}
P(\xii \in \Bx) = \pmes_m (\Bx) \geq \minm.
\end{equation*}
Then, among the $\numsamp_m$ samples drawn with respect to $\pmes_m$, the probability that $\Bx$ contains at least $Q$ samples is given by
\begin{equation*}
\begin{split}
P(| \nbdm | \geq Q) &= \sum_{q=Q}^{\numsamp_m} \binom {\numsamp_m}{q}  
\bigg( \pmes_m (\Bx) \bigg)^q  \bigg(1 - \pmes_m (\Bx) \bigg)^{\numsamp_m - q} \\
&\geq
\sum_{q=Q}^{\numsamp_m} \binom {\numsamp_m}{q}  
(\minm)^q \, (1-\minm)^{\numsamp_m - q}
\end{split}
\end{equation*}
where the set $\nbdm$ is defined as in \eqref{eq:defn_NNx_lem_genf}. The last expression above is the probability of having at least $Q$ successes out of $\numsamp_m$  realizations of a Bernoulli random variable with probability parameter $\minm$. This probability can be lower bounded using a tail bound for binomial distributions. We thus have
\begin{equation*}
P(| \nbdm | \geq Q) \geq 1 - \exp \left( \frac{-2 \, (\numsamp_m \, \minm - Q)^2 }{\numsamp_m} \right)
\end{equation*}
which simply follows from interpreting $|\nbdm|$ as the sum of of $\numsamp_m$ i.i.d.~observations of a Bernoulli distributed random variable and then applying Hoeffding's inequality as shown by \citet{Herbrich99}, under the hypothesis that $\numsamp_m > Q/{\minm}$.

Assuming that $\Bx$ contains at least $Q$ samples, Lemma \ref{lem:fdev_neigh_genf} states that with probability at least
\begin{equation*}
 1- 2 d \exp \left(     -\frac{ | \nbdm | \epsilon^2}{ 2 \Lcon^2  \delta^2}   \right) \geq  1- 2 d \exp \left(     -\frac{ \Knb \epsilon^2}{ 2 \Lcon^2  \delta^2}   \right) 
\end{equation*}
the deviation between $f(x)$ and the sample average of its neighbors is bounded as
\begin{equation*}
\left \| f(x) - \frac{1}{  | \nbdm | } \sum_{\x_j \in \nbdm} f(\x_j) \right \|   
\leq \Lcon \delta + \sqrt{d} \epsilon.
\end{equation*}
Hence, with probability at least
\begin{equation*}
\begin{split}
\left(  1 - \exp \left( \frac{-2 \, (\numsamp_m \, \minm - Q)^2 }{\numsamp_m} \right) \right)
\left ( 1- 2 d \exp \left(     -\frac{\Knb \epsilon^2}{ 2 \Lcon^2  \delta^2}   \right) \right)  \\
\geq 1 - \exp \left( \frac{-2 \, (\numsamp_m \, \minm - Q)^2 }{\numsamp_m} \right) 
- 2 d \exp \left(     -\frac{\Knb \epsilon^2}{ 2 \Lcon^2  \delta^2}   \right)
\end{split}
\end{equation*}
we have 
\begin{equation}
\label{eq:ovrbnd_genfdev_mn_lincl}
\left \| f(x) - \frac{1}{| \nbdm |} \sum_{\x_j \in \nbdm} f(\x_j) \right \|   
\leq \Lcon \delta + \sqrt{d} \epsilon.
\end{equation}

The class label of a test sample $\x$ drawn from $\pmes_m$ is correctly estimated with respect to the classifier \eqref{eq:lin_classifier} if  
\begin{equation*}
\hyp_{mk}^T \, f(x) + b_{mk}  > 0,  \quad  \forall k=1, \dots, M-1, \, k\neq m.
\end{equation*}
If the condition in \eqref{eq:ovrbnd_genfdev_mn_lincl} is satisfied, for all $k\neq m$, we have
\begin{equation*}
\begin{split}
\hyp_{mk}^T \, f(x) + b_{mk}  
&= \hyp_{mk}^T \frac{1}{| \nbdm |} 
\sum_{\x_j \in \nbdm} f(\x_j)  
+ b_{mk} 
+\hyp_{mk}^T \, \left( f(x) - \frac{1}{| \nbdm |} 
\sum_{\x_j \in \nbdm} f(\x_j) 
\right) \\   
&\geq
\hyp_{mk}^T \frac{1}{| \nbdm |} 
\sum_{\x_j \in \nbdm} f(\x_j)  
+ b_{mk}
- \|  f(x) - \frac{1}{| \nbdm |} 
\sum_{\x_j \in \nbdm} f(\x_j) 
  \| \\
&>
\mar_Q/2 - \|  f(x) - \frac{1}{| \nbdm |} 
\sum_{\x_j \in \nbdm} f(\x_j)  \|
\geq 
\mar_Q/2 -   \Lcon \delta - \sqrt{d} \epsilon
\geq 0.
\end{split}
\end{equation*}
Here, we obtain the second inequality from the hypothesis that the embedding is $Q$-mean separable with margin larger than $\mar_Q$, which implies that the embedding is also $R$-mean separable with margin larger than $\mar_Q$, for $R>Q$. Then the last inequality is due to the condition \eqref{eq:cond_sep_genf_lin} on the interpolation function in the theorem. We thus get that with probability at least 
\begin{equation*}
1 - \exp \left( \frac{-2 \, (\numsamp_m \, \minm - Q)^2 }{\numsamp_m} \right) 
- 2 d \exp \left(     -\frac{\Knb \epsilon^2}{ 2 \Lcon^2  \delta^2}   \right)
\end{equation*}
$\hyp_{mk}^T \, f(x) + b_{mk}  > 0$ for all $k\neq m$, hence, the sample $\x$ is correctly classified. This concludes the proof of the theorem.
\end{proof}

\subsection{Proof of Theorem \ref{thm:error_genf_nnclass}}
\label{pf:thm_error_genf_nnclass}

\begin{proof}
%
Remember from the proof of Theorem \ref{thm:linclassif_genf} that with probability at least 
\begin{equation*}
1 - \exp \left( \frac{-2 \, (\numsamp_m \, \minm - Q)^2 }{\numsamp_m} \right) 
- 2 d \exp \left(     -\frac{\Knb \epsilon^2}{ 2 \Lcon^2  \delta^2}   \right)
\end{equation*}
the $\delta$-neighborhood $\Bx$ of a test sample $x$ from class $m$ contains at least $Q$ samples from class $m$, and
\begin{equation}
\label{eq:genf_fdev_nn_pfthm}
\left \| f(x) - \frac{1}{| \nbdm |} \sum_{\x_j \in \nbdm} f(\x_j) \right \|   
\leq \Lcon \delta + \sqrt{d} \epsilon
\end{equation}
where $\nbdm$ is the set of training samples in $\Bx$ from class $m$.

Let $\x_i, \x_j \in \nbdm$ be two training samples from class $m$ in $\Bx$. As $\| \x_i -  \x_j \|  \leq 2 \delta$, by the hypothesis on the embedding, we have $\| \y_i - \y_j  \| = \| f(\x_i) - f(\x_j)  \| \leq \Demb_{2 \delta} $, which gives
\[
\| f(\x_i )-  \frac{1}{| \nbdm |}  \sum_{\x_j \in \nbdm} f(\x_j)  \|  = \left \|    \frac{1}{| \nbdm |}  \sum_{\x_j \in \nbdm} \big( f(\x_i )-  f(\x_j) \big) \right \|
\leq    \frac{1}{| \nbdm |}   \sum_{\x_j \in \nbdm}  \| f(\x_i )-  f(\x_j) \|  \leq  \Demb_{2 \delta}.
\]
Then, for any $\x_i \in \Bx$, 
\begin{equation*}
\begin{split}
\|  f(\x) - f(\x_i)  \| = \|  f(\x) -  \frac{1}{| \nbdm |}  \sum_{\x_j \in \nbdm} f(\x_j) + \frac{1}{| \nbdm |}  \sum_{\x_j \in \nbdm} f(\x_j)  -   f(\x_i)  \|  \\
\leq   \|  f(\x) -  \frac{1}{| \nbdm |}  \sum_{\x_j \in \nbdm} f(\x_j) \|  + \Demb_{2 \delta}.
\end{split}
\end{equation*}
Combining this with \eqref{eq:genf_fdev_nn_pfthm}, we get that with probability at least 
\begin{equation*}
1 - \exp \left( \frac{-2 \, (\numsamp_m \, \minm - Q)^2 }{\numsamp_m} \right) 
- 2 d \exp \left(     -\frac{\Knb \epsilon^2}{ 2 \Lcon^2  \delta^2}   \right)
\end{equation*}
$\Bx$ will contain at least $Q$ samples $\x_i$ from class $m$ such that 
\begin{equation}
\label{eq:bnd_genf_dist_to_clneig}
\|  f(\x) - f(\x_i)  \| \leq  \Lcon \delta + \sqrt{d} \epsilon + \Demb_{2 \delta}.
\end{equation}

Now, assuming \eqref{eq:bnd_genf_dist_to_clneig},  let  $\x_i'$ be a training sample from another class (other than $m$). We have
\begin{equation*}
\| f(\x) - f(\x_i') \| \geq  \| f(\x_i)  - f(\x_i')  \|  - \|  f(\x) - f(\x_i)  \|  > \mar - ( \Lcon \delta + \sqrt{d} \epsilon + \Demb_{2 \delta})
\end{equation*}
which follows from \eqref{eq:bnd_genf_dist_to_clneig} and the hypothesis on the embedding that $ \| f(\x_i) - f(\x_i') \| > \mar$.

It follows from the condition \eqref{eq:cond_sep_genf_nn} that $\mar \geq 2 \Lcon \delta + 2 \sqrt{d} \epsilon +   2 \Demb_{2 \delta} $. Using this in the above equation, we get
\begin{equation*}
\| f(\x) - f(\x_i') \| >  \Lcon \delta + \sqrt{d} \epsilon + \Demb_{2 \delta}.
\end{equation*}
This means that the distance of $f(\x)$ to the embedding of any other sample from another class is more than $ \Lcon \delta + \sqrt{d} \epsilon + \Demb_{2 \delta}$, while there are samples from its own class within a distance of  $\Lcon \delta + \sqrt{d} \epsilon + \Demb_{2 \delta}$ to $f(\x)$. Therefore, $x$ is classified correctly with nearest-neighbor classification in the low-dimensional domain of embedding.

\end{proof}

\subsection{Proof of Lemma \ref{lem:bnd_dev_f_nborhd}}
\label{pf:lem:bnd_dev_f_nborhd}

\begin{proof}
The deviation of each component $f^k(\x)$ of the interpolator from the sample average in the neighborhood of $\x$ is given by
\begin{equation}
\label{eq:form1_fdiff}
\begin{split}
\left | f^k(\x) - \frac{1}{\Knb} \sum_{\x_j \in \nbdm} f^k(\x_j) \right | 
	= \left |  \sum_{i=1}^\numsamp \cik
	\left( 
	 \phi(\| \x - \x_i  \|) 
		-   \frac{1}{\Knb}  \sum_{\x_j \in \nbdm} \phi(\| \x_j - \x_i  \|)   
	\right)
	\right |.
\end{split}
\end{equation}
%
We thus proceed by studying the term
\begin{equation}
\label{eq:phi_dev_term}
 \phi(\| \x - \x_i  \|) -   \frac{1}{\Knb}  \sum_{\x_j \in \nbdm} \phi(\| \x_j - \x_i  \|)   
\end{equation}
which will then be used in the above expression to arrive at the stated result.

Now let $\xii \in \X$ be any training sample. In order to study the term in \eqref{eq:phi_dev_term}, we first look at 
\begin{equation*}
\bigg | \phi(\| \x - \xii \|) - \E_{u} \big[ \phi(\| u - \xii \|) \, | \, u \in B_{\delta}(x) \big ] \bigg |
\end{equation*}
where $\E_{u} \big[ \phi(\| u - \xii \|) \, | \, u \in B_{\delta}(x) \big ] $ denotes the conditional expectation of $\phi(\| u - \xii \|)$ over $u$, given $u \in B_{\delta}(x)$. The conditional expectation is given by
\begin{equation*}
\begin{split}
\E_{u} \big[ \phi(\| u - \xii \|) \given u \in \Bx \big ] 
 &=  \frac{1}{\pmes_m \big(\Bx \big)}  \int_{\Bx}  \phi(\| u - \xii \|) \, \,  d \pmes_m(u ) .
\end{split}
\end{equation*}
We have
\begin{equation*}
\begin{split}
\bigg | \phi(\| \x - \xii \|) &- \E_{u} \big[ \phi(\| u - \xii \|) \, | \, u \in B_{\delta}(x) \big ] \bigg | \\
 &=  \frac{1}{\pmes_m \big(\Bx \big)} \ 
  \bigg| 
  \int_{\Bx} \big( \phi(\| \x - \xii \|)  -  \phi(\| u - \xii \|)  \big)  \, \, d \pmes_m(u )
 \bigg| \\
 & \leq  \frac{1}{\pmes_m \big(\Bx \big)} 
  \int_{\Bx} \big| \phi(\| \x - \xii \|)  -  \phi(\| u - \xii \|)  \big|  \, \,  d \pmes_m(u ). 
\end{split}
\end{equation*}
The term in the integral is bounded as
\begin{equation*}
 \big| \phi(\| \x - \xii \|)  -  \phi(\| u - \xii \|)  \big|  
 \leq \Lconf  \, 
 \big|  \| \x - \xii \| -  \| u - \xii \|  \big|
 \leq \Lconf  \,  \| \x - u \|.
\end{equation*}
Using this in the above term, we get
\begin{equation}
\label{eq:bnd_dev_phix_mean}
\begin{split}
\bigg | \phi(\| \x - \xii \|) &- \E_{u} \big[ \phi(\| u - \xii \|) \, | \, u \in B_{\delta}(x) \big ] \bigg | \\
&\leq \frac{\Lconf}{\pmes_m \big(\Bx \big)}  
   \int_{\Bx}  \| \x - u \|  \, d \pmes_m(u )
 =   \Lconf  \, \E_{u} \big[ \| u - \x \|  \given  u \in B_{\delta}(x) \big ]  \\
& \leq \Lconf \, \delta.
\end{split}
\end{equation}

We now analyze the term in \eqref{eq:phi_dev_term} for a given $\xii$ for two different cases, i.e., for $\xii \notin \Bx$ and $\xii \in \Bx$. We first look at the case $\xii \notin \Bx$. For $\x_j \in \Bx$, let
\begin{equation*}
\zeta_j := \phi(\| \x_j - \x_i  \|).
\end{equation*}
The observations $\zeta_j$ are i.i.d.~(since $\xj$ are i.i.d.) with mean 
$
m_\zeta= \E_{u} \big[ \phi(\| u - \xii \|) \, | \, u \in B_{\delta}(x) \big ] 
$
and take values in the interval $\zeta_{\min} \leq \zeta_j \leq \zeta_{\max}$, where 
\begin{equation*}
\zeta_{\min} := \inf_{u \in \Bx} \phi (\| u - \x_i  \|), 
\qquad \qquad
\zeta_{\max} := \sup_{u \in \Bx} \phi (\| u - \x_i  \|).
\end{equation*}
Since for any $u_1, u_2 \in \Bx$, $\|u_1-u_2\| \leq 2 \delta$, it follows from the Lipschitz continuity of $\phi$ that  
$\zeta_{\max} - \zeta_{\min} \leq 2 \Lconf \delta$. Using this together with the Hoeffding's inequality, we get
\begin{equation}
\label{eq:hoeff_dev_zeta}
P \bigg(  \bigg|   \frac{1}{\Knb} \sum_{\x_j \in \nbdm} \zeta_j - m_\zeta  \bigg |  
\geq \epsilon
\bigg)
\leq  2 \exp \left( - \frac{2 \, \Knb \, \epsilon^2 }{(\zeta_{\max}  - \zeta_{\min} )^2 } \right)
\leq 2 \exp \left( - \frac{\Knb \, \epsilon^2 }{ 2 \Lconf^2 \delta^2} \right).
\end{equation}
We have 
\begin{equation*}
\begin{split}
\bigg|  \phi(\| \x - \x_i  \|) -   \frac{1}{\Knb}  \sum_{\x_j \in \nbdm} \phi(\| \x_j - \x_i  \|)  \bigg| 
\leq 
\big|  \phi(\| \x - \x_i  \|) -  m_\zeta \big| 
+
\bigg| m_\zeta -  \frac{1}{\Knb}  \sum_{\x_j \in \nbdm} \phi(\| \x_j - \x_i  \|)  \bigg| .
\end{split} 
\end{equation*}
Using \eqref{eq:bnd_dev_phix_mean} and \eqref{eq:hoeff_dev_zeta} in the above equation, it holds with probability at least 
\begin{equation*}
1- 2 \exp \left( - \frac{\Knb \, \epsilon^2 }{ 2 \Lconf^2 \delta^2} \right)
\end{equation*}
that
\begin{equation*}
\bigg|  \phi(\| \x - \x_i  \|) -   \frac{1}{\Knb}  \sum_{\x_j \in \nbdm} \phi(\| \x_j - \x_i  \|)  \bigg| 
\leq \Lconf \delta + \epsilon.
\end{equation*}
Next, we study the case $\xii \in \Bx$. For any fixed $\xii \in \Bx$, hence $\xii \in \nbdm$, we have: 
\begin{equation*}
\begin{split}
&\bigg|  \phi(\| \x - \x_i  \|) -   \frac{1}{\Knb}  \sum_{\x_j \in \nbdm} \phi(\| \x_j - \x_i  \|)  \bigg|  \\
&= 
\bigg| \frac{1}{\Knb} \phi(\| \x - \x_i  \|) +  \frac{\Knb - 1}{\Knb} \phi(\| \x - \x_i  \|) -  \frac{1}{\Knb} \phi(\| \x_i - \x_i \|)  - \frac{1}{\Knb}  \sum_{\x_j \in \nbdm \setminus \{ \x_i \} } \phi(\| \x_j - \x_i  \|)  \bigg|  \\
&\leq 
 \frac{1}{\Knb}  \bigg|  \phi(\| \x - \x_i  \|) - \phi(\| \x_i - \x_i \|)   \bigg| 
 +
  \frac{\Knb - 1}{\Knb}  \bigg|  \phi(\| \x - \x_i  \|)  - \frac{1}{\Knb -1 }  \sum_{\x_j \in \nbdm \setminus \{ \x_i \} } \phi(\| \x_j - \x_i  \|)  \bigg| 
\end{split} 
\end{equation*}
The first term above is bounded as 
\begin{equation*}
 \frac{1}{\Knb}  \bigg|  \phi(\| \x - \x_i  \|) - \phi(\| \x_i - \x_i \|)   \bigg| 
 \leq    \frac{\Lconf \delta}{\Knb}. 
\end{equation*}
Next, similarly to the analysis of the case $\xii \neq \Bx$, we get that for $\xii \in \Bx$ with probability at least 
\begin{equation*}
1- 2 \exp \left( - \frac{(\Knb-1) \, \epsilon^2 }{ 2 \Lconf^2 \delta^2} \right)
\end{equation*}
it holds that
\begin{equation*}
\bigg|  \phi(\| \x - \x_i  \|)  - \frac{1}{\Knb -1 }  \sum_{\x_j \in \nbdm \setminus \{ \x_i \} } \phi(\| \x_j - \x_i  \|)  \bigg| 
\leq \Lconf \delta + \epsilon,
\end{equation*}
hence
\begin{equation*}
\begin{split}
\bigg|  \phi(\| \x - \x_i  \|) -   \frac{1}{\Knb}  \sum_{\x_j \in \nbdm} \phi(\| \x_j - \x_i  \|)  \bigg|  
\leq 
\frac{\Lconf \delta}{\Knb} +   \frac{\Knb - 1}{\Knb} ( \Lconf \delta + \epsilon)
\leq 
\Lconf \delta + \epsilon.
\end{split}
\end{equation*}
Combining the analyses of the cases $\xii \neq \Bx$ and $\xii \in \Bx$, we conclude that for any given $\xii \in \X$, 
\begin{equation*}
P \left( 
\bigg|  \phi(\| \x - \x_i  \|) -   \frac{1}{\Knb}  \sum_{\x_j \in \nbdm} \phi(\| \x_j - \x_i  \|)  \bigg|  
\leq
\Lconf \delta + \epsilon
\right)
\geq
1- 2 \exp \left( - \frac{(\Knb-1) \, \epsilon^2 }{ 2 \Lconf^2 \delta^2} \right).
\end{equation*}
Therefore, applying the union bound on all $\numsamp$ samples $\xii$ in $\X$, we get that with probability at least
\begin{equation*}
1- 2 \numsamp \exp \left( - \frac{(\Knb-1) \, \epsilon^2 }{ 2 \Lconf^2 \delta^2} \right)
\end{equation*}
it holds that
\begin{equation}
\label{eq:cond_phidev_allxi}
\bigg|  \phi(\| \x - \x_i  \|) -   \frac{1}{\Knb}  \sum_{\x_j \in \nbdm} \phi(\| \x_j - \x_i  \|)  \bigg|  
\leq
\Lconf \delta + \epsilon
\end{equation}
for all $\xii \in \X$.

We can now use this in \eqref{eq:form1_fdiff} to bound the deviation of $f^k(\x)$ from the empirical mean of $f^k$ in the neighbourhood of $\x$. Assuming that the condition \eqref{eq:cond_phidev_allxi} holds for all $\xii \in \X$, we obtain
\begin{equation*}
\begin{split}
\left | f^k(\x) - \frac{1}{\Knb} \sum_{\x_j \in \nbdm} f^k(\x_j) \right | 
	&= \left |  \sum_{i=1}^\numsamp \cik
	\left( 
	 \phi(\| \x - \x_i  \|) 
		-   \frac{1}{\Knb}  \sum_{\x_j \in \nbdm} \phi(\| \x_j - \x_i  \|)   
	\right)
	\right | \\
&\leq
(\Lconf \delta + \epsilon)  \sum_{i=1}^\numsamp | \cik |	
\leq    \cbnd (\Lconf \delta + \epsilon),
\end{split}
\end{equation*}
which gives
\begin{equation*}
\begin{split}
 \| f(x) -  \frac{1}{\Knb} \sum_{\x_j \in \nbdm} f(\x_j)  \|
 = \left(
 \sum_{k=1}^d
 \bigg( f^k(\x) - \frac{1}{\Knb} \sum_{\x_j \in \nbdm} f^k(\x_j) \bigg)^2
 \right)^{1/2}
 \leq \sqrt{d}  \cbnd (\Lconf \delta + \epsilon).
\end{split}
\end{equation*}
We thus get
\begin{equation*}
P\left( \| f(x) -  \frac{1}{\Knb} \sum_{\x_j \in \nbdm} f(\x_j)  \|  \leq \sqrt{d}  \cbnd (\Lconf \delta + \epsilon) \right) 
\geq
1- 2 \numsamp \exp \left( - \frac{(\Knb-1) \, \epsilon^2 }{ 2 \Lconf^2 \delta^2} \right)
\end{equation*}
which completes the proof.
\end{proof}

\subsection{Proof of Theorem \ref{thm:acc_cl_rbfint}}
\label{pf:thm:acc_cl_rbfint}

\begin{proof} 
%
Remember from the proof of Theorem \ref{thm:linclassif_genf} that 
\begin{equation*}
P(| \nbdm | \geq Q) \geq 1 - \exp \left( \frac{-2 \, (\numsamp_m \, \minm - Q)^2 }{\numsamp_m} \right).
\end{equation*}

Lemma \ref{lem:bnd_dev_f_nborhd} states that, if $\Bx$ contains at least $Q$ samples from class $m$, i.e., $| \nbdm | \geq Q$, then
\begin{equation*}
\begin{split}
P\left( \| f(x) -  \frac{1}{|\nbdm|} \sum_{\x_j \in \nbdm} f(\x_j)  \|  \leq \sqrt{d}  \cbnd (\Lconf \delta + \epsilon) \right) 
&\geq
1- 2 \numsamp \exp \left( - \frac{(|\nbdm|-1) \, \epsilon^2 }{ 2 \Lconf^2 \delta^2} \right) 
\\
&\geq 
1- 2 \numsamp \exp \left( - \frac{(\Knb-1) \, \epsilon^2 }{ 2 \Lconf^2 \delta^2} \right).
\end{split}
\end{equation*}


Hence, combining these two results (multiplying both probabilities), we get that with probability at least 
\begin{equation*}
\begin{split}
\left( 
1 - \exp \left( \frac{-2 \, (\numsamp_m \, \minm - Q)^2 }{\numsamp_m} \right)
 \right) 
\left(
1- 2 \numsamp \exp \left( - \frac{(\Knb-1) \, \epsilon^2 }{ 2 \Lconf^2 \delta^2} \right)
\right) \\
\geq
1 - \exp \left( \frac{-2 \, (\numsamp_m \, \minm - Q)^2 }{\numsamp_m} \right)
 - 2 \numsamp \exp \left( - \frac{(\Knb-1) \, \epsilon^2 }{ 2 \Lconf^2 \delta^2} \right)
\end{split}
\end{equation*}
it holds that
\begin{equation}
\label{eq:ovrbnd_fdev_mean}
\| f(x) -  \frac{1}{|\nbdm|} \sum_{\x_j \in \nbdm} f(\x_j)  \|  \leq \sqrt{d}  \cbnd (\Lconf \delta + \epsilon).
\end{equation}

A test sample $\x$ drawn from $\pmes_m$ is classified correctly with the linear classifier if  
\begin{equation*}
\hyp_{mk}^T \, f(x) + b_{mk}  > 0,  \quad  \forall k=1, \dots, M-1, \, k\neq m.
\end{equation*}
If the condition in \eqref{eq:ovrbnd_fdev_mean} is satisfied, for all $k\neq m$, we have
\begin{equation*}
\begin{split}
\hyp_{mk}^T \, f(x) + b_{mk}  
&= \hyp_{mk}^T \frac{1}{| \nbdm |} 
\sum_{\x_j \in \nbdm} f(\x_j)  
+ b_{mk} 
+\hyp_{mk}^T \, \left( f(x) - \frac{1}{| \nbdm |} 
\sum_{\x_j \in \nbdm} f(\x_j) 
\right) \\   
&\geq
\hyp_{mk}^T \frac{1}{| \nbdm |} 
\sum_{\x_j \in \nbdm} f(\x_j)  
+ b_{mk}
- \|  f(x) - \frac{1}{| \nbdm |} 
\sum_{\x_j \in \nbdm} f(\x_j) 
  \| \\
&>
\mar_Q/2 - \|  f(x) - \frac{1}{| \nbdm |} 
\sum_{\x_j \in \nbdm} f(\x_j)  \|
\geq 
\mar_Q/2 -  \sqrt{d}  \cbnd (\Lconf \delta + \epsilon)
\geq 0.
\end{split}
\end{equation*}
%
We thus conclude that with probability at least 
\begin{equation*}
1 - \exp \left( \frac{-2 \, (\numsamp_m \, \minm - Q)^2 }{\numsamp_m} \right)
 - 2 \numsamp \exp \left( - \frac{(\Knb-1) \, \epsilon^2 }{ 2 \Lconf^2 \delta^2} \right)
\end{equation*}
$\hyp_{mk}^T \, f(x) + b_{mk}  > 0$ for all $k\neq m$, hence, the class label of $\x$ is estimated correctly.
\end{proof}

\subsection{Proof of Theorem \ref{thm:acc_cl_nn_rbfint}}
\label{pf:thm:acc_cl_nn_rbfint}

\begin{proof}
First, recall from the proof of Theorem \ref{thm:acc_cl_rbfint} that, with probability at least 
\begin{equation*}
1 - \exp \left( \frac{-2 \, (\numsamp_m \, \minm - Q)^2 }{\numsamp_m} \right)
 - 2 \numsamp \exp \left( - \frac{(\Knb-1) \, \epsilon^2 }{ 2 \Lconf^2 \delta^2} \right)
\end{equation*}
the $\delta$-neighborhood $\Bx$ of a test sample $x$ from class $m$ contains at least $Q$ samples from class $m$, and
\begin{equation}
\label{eq:recap_lemma_avg}
\| f(x) -  \frac{1}{|\nbdm|} \sum_{\x_j \in \nbdm} f(\x_j)  \|  \leq \sqrt{d}  \cbnd (\Lconf \delta + \epsilon)
\end{equation}
where $\nbdm$ is the set of training samples in $\Bx$ from class $m$. 


%
Then it is easy to show that (as in the proof of Theorem \ref{thm:error_genf_nnclass}), with probability at least
\begin{equation*}
1 - \exp \left( \frac{-2 \, (\numsamp_m \, \minm - Q)^2 }{\numsamp_m} \right)
 - 2 \numsamp \exp \left( - \frac{(\Knb-1) \, \epsilon^2 }{ 2 \Lconf^2 \delta^2} \right)
\end{equation*}
$\Bx$ will contain at least $Q$ samples $\x_i$ from class $m$ such that 
\begin{equation}
\label{eq:bnd_dist_to_clneig}
\|  f(\x) - f(\x_i)  \| \leq \sqrt{d}  \cbnd (\Lconf \delta + \epsilon) + \Demb_{2 \delta}.
\end{equation}

Hence, for a training sample  $\x_i'$ from another class (other than $m$), we have
\begin{equation*}
\| f(\x) - f(\x_i') \| \geq  \| f(\x_i)  - f(\x_i')  \|  - \|  f(\x) - f(\x_i)  \|  > \mar - (\sqrt{d}  \cbnd (\Lconf \delta + \epsilon) + \Demb_{2 \delta})
\end{equation*}
which follows from \eqref{eq:bnd_dist_to_clneig} and the hypothesis on the embedding that $ \| f(\x_i) - f(\x_i') \| > \mar$.

Due to the condition \eqref{eq:cond_sep_nn_interp}, we have $\mar \geq 2\sqrt{d} \, \cbnd \, (\Lconf \delta + \epsilon) +   2 \Demb_{2 \delta} $. Using this above equation, we obtain
\begin{equation*}
\| f(\x) - f(\x_i') \| >  \sqrt{d}  \cbnd (\Lconf \delta + \epsilon) + \Demb_{2 \delta} .
\end{equation*}
Therefore, the distance of $f(\x)$ to the embedding of the samples from other classes is more than $ \sqrt{d} \cbnd (\Lconf \delta + \epsilon) + \Demb_{2 \delta}$, while there are samples from its own class within a distance of  $ \sqrt{d} \cbnd (\Lconf \delta + \epsilon) + \Demb_{2 \delta}$ to $f(\x)$. We thus conclude that the class label of $x$ is estimated correctly with nearest-neighbor classification in the low-dimensional domain of embedding.
\end{proof}

\section{Proof of the results in Section \ref{sec:sep_analysis}}

\subsection{Proof of Lemma \ref{lem:twoclass_diffsign}}
\label{pf:lem_twoclass_diffsign}

\begin{proof}
The coordinate vector $\y$ is the eigenvector of the matrix $\Lw - \mu \Lb$ corresponding to its minimum eigenvalue. Hence,
\begin{equation*}
\y= \arg \min_{\begin{subarray}{c} \yv \\  \| \yv\|=1  \end{subarray} }
\yv^T ( \Lw - \mu \Lb) \yv. 
\end{equation*}
Equivalently, defining the degree-normalized coordinates $\z = \Dw^{-1/2} \y $, and thus replacing the above $\yv$ by $\Dw^{1/2} \yv$, we have 
%
%
%
%
\begin{equation}
\label{eq:z1d_probform}
\begin{split}
\z &=\arg \min_{\begin{subarray}{c} \yv \\   \yv^T \Dw \yv=1  \end{subarray}} 
N(\yv)\\
N(\yv)&= \yv^T \Dw^{1/2} ( \Lw - \mu \Lb) \Dw^{1/2} \yv \\
	 &= \yv^T (\Dw-\Ww) \yv - \mu \, \yv^T  (\Dw \Db^{-1})^{1/2} \, (\Db - \Wb) \, (\Db^{-1} \Dw)^{1/2} \yv.
\end{split}
\end{equation}
Then, denoting $\rdeg_i=\dw(i)/\db(i)$, the term $N(\yv)$ can be rearranged as
\begin{equation*}
\begin{split}
N(\yv) &= \sum_{\ii} \yv_i \bigg( \dw(i) \, \yv_i - \sum_{\jj \simw \ii}  \yv_j \, \wij \bigg)
	- \mu \sum_{\ii} \yv_i \bigg( \dw(i) \, \yv_i - \sum_{\jj \simb \ii}  \yv_j \, \wij \,\sqrt{\rdeg_i \rdeg_j}  \bigg) \\
	&=\sum_{\ii} \yv_i \sum_{\jj \simw \ii} (\yv_i- \yv_j) \wij
	- \mu \sum_{\ii} \yv_i \sum_{\jj \simb \ii} (\rdeg_i \yv_i- \sqrt{\rdeg_i \rdeg_j} \, \yv_j) \wij
\\
	&= \sum_{\ii} \sum_{\jj \simw \ii} (\yv_i^2- \yv_i \yv_j) \wij
	- \mu  \sum_{\ii} \sum_{\jj \simb \ii} ( \rdeg_i \yv_i^2-  \sqrt{\rdeg_i \rdeg_j} \yv_i \yv_j) \wij
\end{split}
\end{equation*}
which gives \footnote{In our notation, the terms $\ii \simw \jj$ and $\ii \simb \jj$ in the summation indices as in (\ref{defn:Nyv}) refer to edges rather than neighboring $(i,j)$-pairs; i.e., each pair is counted only once in the summation.}
\begin{equation}
\label{defn:Nyv}
N(\yv)=
\sum_{\ii \simw \jj} (\yv_i -\yv_j)^2 \, \wij  
- \mu \sum_{\ii \simb \jj} \left( \sqrt{\rdeg_i} \yv_i - \sqrt{\rdeg_j} \yv_j \right)^2 \, \wij 
\end{equation}
by grouping the neighboring $(i,j)$ pairs in the inner sums. Now, for any $\yv \in \R^{\numsamp \times 1}$ such that $\yv^T \Dw \yv=1 $, we define $\yv^*$ as follows.
\begin{equation}
\label{eq:form_mnmzer_N}
\yv_i^* = \bigg\{ 
\begin{array} {l}
 - | \yv_i | \, \, \text{    if   }  \classi=1 \\ 
\, \, \, \, \,  | \yv_i | \, \, \text{    if   }  \classi=2 \end{array}
\end{equation}
Clearly, $\yv^*$ also satisfies $(\yv^*)^T \Dw \yv^*=1 $. From (\ref{defn:Nyv}), it can be easily checked that $N(\yv^*) \leq N(\yv)$ for any $\yv$, Then, a  minimizer $\z$ of the problem (\ref{eq:z1d_probform}) has to be of the separable form defined in \eqref{eq:form_mnmzer_N}; otherwise $\z^*$ would yield a smaller value for the function $N$, which would contradict the fact that $\z$ is a minimizer. Note that the equality $N(\z^*) = N(\z)$ holds only if $\z=\z^*$ or $\z=-\z^*$, thus when $\z$ is separable. Therefore, the embedding $\z$ 
satisfies the condition
\begin{equation*}
\zi \leq 0 \, \, \text{    if   }  \classi=1, 
\qquad \qquad
\zi \geq 0 \, \, \text{    if   }  \classi=2
\end{equation*}
or the equivalent condition 
\begin{equation*}
\zi \leq 0 \, \, \text{    if   }  \classi=2,
\qquad \qquad
\zi \geq 0 \, \, \text{    if   }  \classi=1.
\end{equation*}
Finally, since $\yi=  \sqrt{\dw(i)} \, \zi$, the same property also holds for the embedding $\y$.


\end{proof}

\subsection{Proof of Theorem \ref{thm:sep2class}}
\label{pf:thm_sep2class}

\begin{proof}
From (\ref{eq:z1d_probform}) and (\ref{defn:Nyv}), we have
\begin{equation}
\label{eq:solnobj2class}
\z=\arg \min_{\begin{subarray}{c} \yv \\   \yv^T \Dw \yv=1  \end{subarray}} 
\sum_{\ii \simw \jj} (\yv_i -\yv_j)^2 \, \wij  
- \mu \sum_{\ii \simb \jj} \left( \sqrt{\rdeg_i} \yv_i - \sqrt{\rdeg_j} \yv_j \right)^2 \, \wij.
\end{equation}
Thus, at the optimal solution $\z$ the objective function takes the value
\begin{equation}
\label{eq:obj2class}
N(\z)= \sum_{\ii \simw \jj} (\z_i -\z_j)^2 \, \wij  
- \mu \sum_{\ii \simb \jj} \left( \sqrt{\rdeg_i} \z_i - \sqrt{\rdeg_j} \z_j \right)^2 \, \wij.
\end{equation}

In the following, we derive a lower bound for the first sum and an upper bound for the second sum in (\ref{eq:obj2class}). We begin with the first sum. Let $\ionemin$, $\ionemax$, $\itwomin$ and $\itwomax$ denote the indices of the data samples in class 1 and class 2 that are respectively mapped to the extremal coordinates $\zonemin$, $\zonemax$, $\ztwomin$, $\ztwomax$, where
\begin{equation*}
\z_{k, min} = \min_{i: \, \classi=k  } \zi \, , 
\qquad \qquad
\z_{k, max} = \max_{i: \, \classi=k  } \zi  \, .
\end{equation*}
Let $P_1=\{ (\x_{k_{i-1}}, \x_{k_{i}}) \}_{i=1}^{L_1}$ be a shortest path of length $L_1$ joining $\x_{\ionemin}$ and $\x_{\ionemax}$ and $P_2=\{ (\x_{n_{i-1}}, \x_{n_{i}}) \}_{i=1}^{L_2}$ be a shortest path of length $L_2$ joining $\x_{\itwomin}$ and $\x_{\itwomax}$. We have 
\begin{equation}
\label{eq:term1obj2class}
\begin{split}
\sum_{\ii \simw \jj} (\z_i -\z_j)^2 \, \wij  
\geq \sum_{i=1}^{L_1}  (\z_{k_i} -\z_{k_{i-1}})^2 \, \w_{k_{i-1} k_i}
+ \sum_{i=1}^{L_2}   (\z_{n_i} -\z_{n_{i-1}})^2 \, \w_{n_{i-1} n_i} \\
\geq \wminone \sum_{i=1}^{L_1} (\z_{k_i} -\z_{k_{i-1}})^2 \, 
+  \wmintwo  \sum_{i=1}^{L_2}  (\z_{n_i} -\z_{n_{i-1}})^2 \, 
\end{split}
\end{equation}
where the first inequality simply follows from the fact that the set of edges making up $P_1 \cup P_2$ are contained in the set of all edges in $\Gw$. For a sequence $\{a_i\}_{i=0}^L$, the following inequality holds.
\begin{equation*}
\begin{split}
(a_L - a_0)^2 &= \sum_{i=1}^{L} (a_i - a_{i-1})^2 + 
\sum_{\begin{subarray}{c} i,j=1 \\  i \neq j  \end{subarray}}^L
 (a_i - a_{i-1})  (a_j - a_{j-1})\\
 & \leq  \sum_{i=1}^{L} (a_i - a_{i-1})^2 + 
\half \sum_{\begin{subarray}{c} i,j=1 \\  i \neq j  \end{subarray}}^L
\big( (a_i - a_{i-1})^2 +  (a_j - a_{j-1})^2 \big)
 = L \sum_{i=1}^{L} (a_i - a_{i-1})^2 
 \end{split}
\end{equation*}
Hence, 
\begin{equation*}
 \sum_{i=1}^{L} (a_i - a_{i-1})^2  \geq \frac{1}{L} (a_L - a_0)^2.
\end{equation*}
Using this inequality in (\ref{eq:term1obj2class}), we get
\begin{equation*}
\begin{split}
\sum_{\ii \simw \jj} (\z_i -\z_j)^2 \, \wij  
\geq \frac{\wminone}{L_1} (\zonemax - \zonemin)^2 \, 
+  \frac{\wmintwo}{L_2}   (\ztwomax -\ztwomin)^2 . 
\end{split}
\end{equation*}
Since the path lengths $L_1$ and $L_2$ are upper bounded by the diameters $\diam_1$ and $\diam_2$, we finally obtain the lower bound
\begin{equation}
\label{eq:ub_term1_obj2class}
\sum_{\ii \simw \jj} (\z_i -\z_j)^2 \, \wij  
\geq \frac{\wminone}{\diam_1} (\zonemax - \zonemin)^2 \, 
+  \frac{\wmintwo}{\diam_2}   (\ztwomax -\ztwomin)^2 . 
\end{equation}
Next, we find an upper bound for the second sum in (\ref{eq:obj2class}). Using Lemma \ref{lem:twoclass_diffsign}, we obtain the following inequality. 
\begin{equation}
\label{eq:ub_term2_obj2class}
\begin{split}
\sum_{\ii \simb \jj} \left( \sqrt{\rdeg_i} \z_i - \sqrt{\rdeg_j} \z_j \right)^2 \, \wij
& \leq \sum_{\ii \simb \jj} (\ztwomax - \zonemin)^2 \, \rdegmax \, \wij \\
& = \half (\ztwomax - \zonemin)^2 \, \rdegmax \volbetmax
\end{split}
\end{equation}

Now, since the solution $\z$ in (\ref{eq:solnobj2class}) minimizes the objective function $N(\yv)$, we have
\begin{equation*}
N(\z) = \lambdamin(\Lw - \mu \Lb)
\end{equation*}
where $\lambdamin(\cdot)$ and $\lambdamax(\cdot)$ respectively denote the minimum and the maximum eigenvalues of a matrix. For two Hermitian matrices $A$ and $B$, the inequality $\lambdamin(A+B) \leq \lambdamin(A) + \lambdamax(B)$ holds. As $\Lw$ and $\Lb$ are graph Laplacian matrices, we have  $\lambdamin(\Lw) = \lambdamin(\Lb) = 0$ and thus
\begin{equation*}
N(\z) = \lambdamin(\Lw - \mu \Lb) \leq \lambdamin(\Lw) + \lambdamax(-\mu \Lb) = \lambdamin(\Lw) - \mu \lambdamin(\Lb) = 0.
\end{equation*}
Using in (\ref{eq:obj2class}) the above inequality and the lower and upper bounds in (\ref{eq:ub_term1_obj2class}) and (\ref{eq:ub_term2_obj2class}), we obtain
\begin{equation*}
\begin{split}
0 \geq 
N(\z) &= \sum_{\ii \simw \jj} (\z_i -\z_j)^2 \, \wij  
- \mu \sum_{\ii \simb \jj} \left( \sqrt{\rdeg_i} \z_i - \sqrt{\rdeg_j} \z_j \right)^2 \, \wij \\
& \geq  \frac{\wminone}{\diam_1} (\zonemax - \zonemin)^2 \,  
+  \frac{\wmintwo}{\diam_2}   (\ztwomax -\ztwomin)^2  \\
& \quad -   \half  \mu  (\ztwomax - \zonemin)^2 \, \rdegmax \volbetmax.
\end{split}
\end{equation*}
Hence
\begin{equation}
\label{eq:ineqsupp_2class}
\frac{\wminone}{\diam_1} (\zonemax - \zonemin)^2 \,  
+  \frac{\wmintwo}{\diam_2}   (\ztwomax -\ztwomin)^2  \leq  \half  \mu  (\ztwomax - \zonemin)^2 \, \rdegmax \volbetmax.
\end{equation}
The RHS of the above inequality is related to the overall support $\ztwomax - \zonemin$ of the data, whereas the terms on the LHS are related to the individual supports $\zonemax - \zonemin$ and $\ztwomax - \ztwomin$ of the two classes in the learnt embedding. Meanwhile, the separation $\ztwomin - \zonemax$ between the two classes is given by the gap between the overall support and the sums of the individual supports. In order to use the above inequality in view of this observation, we first derive a lower bound on the RHS term. Since $\z^T \Dw \z =1$, we have
\begin{equation*}
\begin{split}
1 = \sum_{i} \zi^2 \, \dw(i) 
& = \sum_{i: \, \classi =1 } \zi^2 \, \dw(i)  
+ \sum_{i: \, \classi =2 } \zi^2 \, \dw(i) \\
& \leq \zonemin^2 \sum_{i: \, \classi =1 } \dw(i)  + \ztwomax^2 \sum_{i: \, \classi =2 }  \dw(i) 
 = \zonemin^2 \vol_1 + \ztwomax^2 \vol_2.
\end{split}
\end{equation*}
This gives 
\begin{equation*}
\zonemin^2  + \ztwomax^2  \geq \frac{1}{\volmax}.
\end{equation*}
Hence, we obtain the following lower bound on the overall support
\begin{equation}
\label{eq:lb_suppovr_2class}
(\ztwomax - \zonemin)^2 \geq \ztwomax^2 + \zonemin^2 \geq \frac{1}{\volmax}.
\end{equation}
Denoting the supports of class 1 and class 2 and the overall support as
\begin{equation*}
\suppone = \zonemax - \zonemin,
\qquad \qquad
\supptwo = \ztwomax - \ztwomin,
\qquad \qquad
\suppovr = \ztwomax - \zonemin,
\end{equation*}
we have from (\ref{eq:ineqsupp_2class})
\begin{equation*}
\wminbyD (\suppone^2 + \supptwo^2)
 \leq  \half \, \mu \, \suppovr^2 \, \rdegmax \volbetmax
\end{equation*}
which yields the following upper bound on the total support of the two classes
\begin{equation*}
\suppone + \supptwo \leq \sqrt{ 2(\suppone^2 + \supptwo^2) } 
\leq \suppovr \sqrt{ \frac{ \mu \rdegmax \volbetmax }{\wminbyD}}.
\end{equation*}
We can thus lower bound the separation $\ztwomin - \zonemax$ as
\begin{equation*}
\ztwomin - \zonemax = \suppovr - (\suppone + \supptwo)
\geq \suppovr \left(  1 -  \sqrt{ \frac{ \mu \rdegmax \volbetmax }{\wminbyD}}  \right)
\end{equation*}
provided that $\mu < \wminbyD / (\rdegmax \volbetmax)$. From the lower bound on the overall support in (\ref{eq:lb_suppovr_2class}), we lower bound the separation as follows
\begin{equation*}
\ztwomin - \zonemax  \geq  \frac{1}{\sqrt{\volmax}} \left(  1 -  \sqrt{ \frac{ \mu \rdegmax \volbetmax }{\wminbyD}}  \right).
\end{equation*}
Finally, since the separation of any embedding with dimension $d\geq 1$ is at least as much as the separation $\ztwomin - \zonemax$ of the embedding of dimension $d=1$, the above lower bound holds for any $d \geq 1$ as well.
\end{proof}

\subsection{Proof of Corollary \ref{cor:dist_orig_2class}}
\label{pf:cor_dist_orig_2class}

\begin{proof}
The one-dimensional embedding $\z$ is given as the solution of the constrained optimization problem 
\begin{equation*}
\z =\arg \min N(\yv)  \text{ s.t. } D(\yv)=1
\end{equation*}
where
\begin{equation*}
N(\yv)= \yv^T \Dw^{1/2} ( \Lw - \mu \Lb) \Dw^{1/2} \yv, 
\qquad
D(\yv)= \yv^T \Dw \yv.
\end{equation*}
Defining the Lagrangian function
\begin{equation*}
\Lambda(\yv, \lambda) = N(\yv) + \lambda (D(\yv) -1)
\end{equation*}
at the optimal solution $\z$, we have
\begin{equation*}
\nabla_{\yv} \Lambda = \nabla_{\lambda} \Lambda = 0
\end{equation*}
where $\nabla_{\yv}$ and $\nabla_{\lambda} $ respectively denote the derivatives of $\Lambda$ with respect to $\yv$ and $\lambda$. Thus, at $\yv = z$,
\begin{equation*}
\frac{\partial \Lambda}{ \partial \yv_\ii} 
= \frac{\partial N(\yv)}{ \partial \yv_\ii} +  \lambda \frac{\partial D(\yv)}{ \partial \yv_\ii} =0
\end{equation*}
for all $\ii=1, \dots, \numsamp$. From (\ref{defn:Nyv}), the derivatives of $N(\yv)$ and $D(\yv)$ at $\z$ are given by
\begin{equation*}
\begin{split}
\frac{\partial N(\yv)}{ \partial \yv_\ii} \bigg |_{\yv=\z} 
&= \sum_{\jj \simw \ii} 2 (\zi -\zj)  \wij 
- \mu \sum_{\jj  \simb \ii } 2 \left( \sqrt{\rdeg_i} \zi - \sqrt{\rdeg_j} \zj \right) \,  \sqrt{\rdeg_i} \, \wij \\
\frac{\partial D(\yv)}{ \partial \yv_\ii} \bigg |_{\yv=\z} &= 2  \, \dw(i) \, \zi
\end{split}
\end{equation*}
which yields 
\begin{equation}
\label{eq:lag_eq_z_2class}
\sum_{\jj \simw \ii}  (\zi -\zj)  \wij 
- \mu \sum_{\jj  \simb \ii }  \left( \sqrt{\rdeg_i} \zi - \sqrt{\rdeg_j} \zj \right) \,  \sqrt{\rdeg_i} \, \wij 
+ \lambda  \, \dw(i) \, \zi = 0
\end{equation}
for all $\ii$. At $\ii = \ionemax$, as $\z$ attains its maximal value $\zonemax$ for class 1, we have 
\begin{equation*}
\begin{split}
\lambda  \, \dw(\ionemax) \, \zonemax &= \sum_{\jj \simw \ionemax}  (\zj -\zonemax)  \w_{\ionemax j} \\
& \qquad + \mu \sum_{\jj  \simb \ionemax}  \left( \sqrt{\rdeg_{\ionemax}} \zonemax - \sqrt{\rdeg_j} \zj \right)  \sqrt{\rdeg_{\ionemax}} \, \w_{\ionemax j}  \\
&\leq - \mu \, \rdegmin \, (\ztwomin - \zonemax) \, \db(\ionemax).
\end{split}
\end{equation*}
Hence
\begin{equation}
\label{eq:lb_zonemax_v1}
| \zonemax | = - \zonemax
 \geq \frac{\mu \, \rdegmin \, (\ztwomin - \zonemax) \db(\ionemax)}{\lambda  \, \dw(\ionemax) }
\geq \frac{\mu \, \rdegmin \, (\ztwomin - \zonemax) }{\lambda  \, \rdegmax }.
\end{equation}
We proceed by deriving an upper bound for $\lambda$. The gradients of $N(\yv)$ and $D(\yv)$ are given by
\begin{equation*}
\nabla_{\yv} N= 2 \Dw^{1/2} ( \Lw - \mu \Lb) \Dw^{1/2} \yv, 
\qquad
\nabla_{\yv} D = 2 \Dw \yv.
\end{equation*}
 From the condition $\nabla_{\yv} \Lambda=0$ at $\yv = \z$, we have
\begin{equation*}
\begin{split}
\Dw^{1/2} ( \Lw - \mu \Lb) \Dw^{1/2} \z + \lambda  \Dw \z &=0 \\
( \Lw - \mu \Lb) \y  + \lambda \y &= 0. 
\end{split}
\end{equation*}
Since $\y = \Dw^{1/2} \z$ is the unit-norm eigenvector of $ \Lw - \mu \Lb$ corresponding to its smallest eigenvalue, the Lagrangian multiplier $\lambda$ is given by
\begin{equation*}
\lambda = - \lambdamin( \Lw - \mu \Lb).
\end{equation*}
We can lower bound the minimum eigenvalue as
\begin{equation*}
\lambdamin( \Lw - \mu \Lb) \geq  \lambdamin(\Lw) + \lambdamin(-\mu \Lb)
= 0 - \mu \lambdamax(\Lb) \geq - 2 \mu
\end{equation*}
since the eigenvalues of a graph Laplacian are upper bounded by $2$. This gives $\lambda \leq 2\mu$. Using this upper bound on $\lambda$ in (\ref{eq:lb_zonemax_v1}), we obtain 
\begin{equation*}
| \zonemax | 
\geq \half \frac{ \rdegmin} {  \rdegmax }  \, (\ztwomin - \zonemax) .
\end{equation*}
Repeating the same steps for $i=\itwomin$ following (\ref{eq:lag_eq_z_2class}), one can similarly show that
\begin{equation*}
\ztwomin
\geq \half \frac{ \rdegmin} {  \rdegmax }  \, (\ztwomin - \zonemax) .
\end{equation*}

\end{proof}


\subsection{Proof of Theorem \ref{thm:sep_mult_class_categ}}
\label{pf:thm:sep_mult_class_categ}

We first present two lemmas that will be useful for proving Theorem \ref{thm:sep_mult_class_categ}. 

\begin{lemma}
\label{lem:corr_eigvec_pert}

Let $A \in \R^{\numsamp \times \numsamp}$ be a symmetric matrix with eigenvalue decomposition $A = U \Lambda U^T$, where $U$ is an orthogonal matrix and $\Lambda$ is a diagonal matrix consisting of the eigenvalues $\lambda_1, \dots, \lambda_\numsamp$. Consider a symmetric perturbation $\Delta A$ on $A$. Let the perturbed matrix $\tA = A + \Delta A$ have the eigenvalue  decomposition $\tA = \tU \tLambda \tU^T$. 

Assume that the eigenvalues $\lambda_i$ have a separation of at least $\eta$, i.e., for all distinct $i, j$, one has $| \lambda_i - \lambda_j | \geq \eta$. Then the inner products of the corresponding eigenvectors of $A$ and $\tilde A$ are lower bounded as
\begin{equation*}
| \tilde u_j^T u_j | \geq \sqrt{1 - \frac{4 \, \| \Delta A \|^2}{\eta^2} }
\end{equation*}
for all $j=1, \dots, \numsamp$, where $u_j$ denotes the $j$-th column of $U$.
\end{lemma}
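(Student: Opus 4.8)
The plan is to expand the perturbed eigenvector $\tu_j$ in the orthonormal eigenbasis $\{ u_i \}_{i=1}^{\numsamp}$ of $A$ and to argue that its components in the directions $u_i$ with $i \neq j$ must be small, so that almost all of its mass lies along $u_j$. Writing $\tu_j = \sum_{i=1}^{\numsamp} c_i \, u_i$ with $c_i = u_i^T \tu_j$, the orthonormality of $\{ u_i \}$ together with $\| \tu_j \| = 1$ gives $\sum_{i=1}^{\numsamp} c_i^2 = 1$, and the quantity to be bounded is exactly $|c_j| = |u_j^T \tu_j|$. It therefore suffices to upper bound $\sum_{i \neq j} c_i^2$, since then $c_j^2 = 1 - \sum_{i \neq j} c_i^2$ is lower bounded.

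The key identity comes from the eigenvalue equation for $\tA$. Let $\tilde\lambda_j$ denote the eigenvalue of $\tA$ associated with $\tu_j$. From $\tA \, \tu_j = \tilde\lambda_j \, \tu_j$ and $\tA = A + \Delta A$, I would rewrite $\Delta A \, \tu_j = (\tilde\lambda_j I - A) \tu_j$. Taking squared norms and using $A = U \Lambda U^T$ with $\tu_j = \sum_i c_i u_i$ yields

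\begin{equation*}
\sum_{i=1}^{\numsamp} (\tilde\lambda_j - \lambda_i)^2 \, c_i^2 = \| \Delta A \, \tu_j \|^2 \leq \| \Delta A \|^2 ,
\end{equation*}

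which already controls a weighted sum of the off-diagonal components $c_i^2$ once the nonnegative $i = j$ term is dropped.

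To convert this weighted sum into a bound on $\sum_{i \neq j} c_i^2$ itself, I would lower bound $| \tilde\lambda_j - \lambda_i |$ uniformly over $i \neq j$. By Weyl's inequality applied to the correspondingly ordered spectra, $| \tilde\lambda_j - \lambda_j | \leq \| \Delta A \|$; combining this with the separation hypothesis $| \lambda_i - \lambda_j | \geq \eta$ and the triangle inequality gives $| \tilde\lambda_j - \lambda_i | \geq \eta - \| \Delta A \| \geq \eta/2$ for every $i \neq j$, in the regime $\| \Delta A \| \leq \eta/2$ where the asserted bound is meaningful. Substituting $(\tilde\lambda_j - \lambda_i)^2 \geq \eta^2/4$ into the displayed inequality yields $\tfrac{\eta^2}{4} \sum_{i \neq j} c_i^2 \leq \| \Delta A \|^2$, hence $\sum_{i \neq j} c_i^2 \leq 4 \| \Delta A \|^2 / \eta^2$, and therefore $c_j^2 \geq 1 - 4 \| \Delta A \|^2 / \eta^2$. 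Taking square roots and recalling $|c_j| = |u_j^T \tu_j|$ completes the argument; the sign ambiguity of eigenvectors is harmless because only the absolute value $|u_j^T \tu_j|$ is claimed.

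The main delicate point is the eigenvalue matching. The separation hypothesis guarantees simple eigenvalues, so each $u_j$ and each $\tu_j$ is determined up to sign, but one must ensure that $\tilde\lambda_j$ is the eigenvalue of $\tA$ that remains closest to $\lambda_j$ — this is exactly what Weyl's inequality secures for equally ordered spectra, and it is the ingredient that makes the uniform gap bound $|\tilde\lambda_j - \lambda_i| \geq \eta/2$ valid. The only other thing to track is the implicit assumption $\| \Delta A \| \leq \eta/2$, needed both for the right-hand side $\sqrt{1 - 4\|\Delta A\|^2/\eta^2}$ to be real and for the gap bound; outside this regime the stated inequality carries no content, so no separate treatment is required.
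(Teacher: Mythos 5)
Your proof is correct, but it takes a genuinely different route from the paper's. You argue vector by vector, in Davis--Kahan style: you project the single eigen-equation $\tA \tu_j = \tilde\lambda_j \tu_j$ onto the unperturbed eigenbasis to get $\sum_{i\neq j} (\tilde\lambda_j - \lambda_i)^2 c_i^2 \leq \| \Delta A \|^2$, and then make the gaps uniform by shifting the perturbed eigenvalue back with Weyl's inequality, $| \tilde\lambda_j - \lambda_i | \geq \eta - \| \Delta A \| \geq \eta/2$. The paper instead works globally with the orthogonal matrix $R = \tU^T U$: from the exact identity $\| \tLambda R - R \Lambda \| = \| \Delta A \|$ together with Weyl in the form $\| \tLambda - \Lambda \| \leq \| \Delta A \|$, it bounds the commutator as $\| \Lambda R - R \Lambda \| \leq 2 \| \Delta A \|$, notes that its $(i,j)$-th entry is $(\lambda_i - \lambda_j) R_{ij}$, and uses only the \emph{unperturbed} gaps $| \lambda_i - \lambda_j | \geq \eta$ to get $\sum_{i \neq j} R_{ij}^2 \leq 4 \| \Delta A \|^2 / \eta^2$ columnwise, finishing with unit column norms. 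The factor $4$ thus arises differently in the two arguments---in yours from squaring the halved gap $(\eta/2)^2$, in the paper's from squaring the doubled perturbation $(2 \| \Delta A \|)^2$---and both invoke Weyl exactly once and rest on the same tacit convention that $\Lambda$ and $\tLambda$ are ordered consistently, a point you make explicit while the paper leaves it implicit. What the paper's commutator route buys is that its intermediate inequality $\sum_{i \neq j} R_{ij}^2 \leq 4 \| \Delta A \|^2 / \eta^2$ holds with no restriction on $\| \Delta A \|$, whereas your uniform-gap step genuinely needs $\| \Delta A \| \leq \eta/2$; as you correctly observe, this costs nothing, since outside that regime the claimed lower bound is vacuous anyway. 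What your route buys is a shorter, more self-contained argument (one eigen-equation, no manipulation of the full matrix $R$) that also records, as a by-product, which perturbed eigenvalue each eigenvector bound is anchored to.
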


\begin{proof}
Defining $R=\tU^T U$, we look for a lower bound on the diagonal entries of $R$. It will be helpful to examine the term
\begin{equation}
\label{eq:noncommut_lamR}
\| \Lambda R - R \Lambda \| = \|  \tLambda R -  (\Delta \Lambda) R - R \Lambda \|
		\leq  \|  \tLambda R -  R \Lambda  \|  +  \|  \Delta \Lambda  \|
		\leq  \|  \tLambda R -  R \Lambda  \|  +  \|  \Delta A  \|
\end{equation}
where $\Delta \Lambda = \tLambda - \Lambda$ and the last inequality follows from the fact that the variation in the eigenvalues is upper bounded by the norm of the perturbation matrix.

We proceed by bounding the term $ \|  \tLambda R -  R \Lambda  \|$. First observe that
\begin{equation*}
(\Delta A) U = (\tA - A) U = (\tU \tLambda \tU^T - U \Lambda U^T) U 
		= \tU \tLambda R - U \Lambda  
\end{equation*}
which gives
\begin{equation*}
\| \Delta A  \| = \| \tU \tLambda R - U \Lambda  \| 
		= \| \tU^T (\tU \tLambda R - U \Lambda)  \|
		= \| \tLambda R  - R \Lambda \|.
\end{equation*}
Using this in \eqref{eq:noncommut_lamR}, we get
\begin{equation}
\label{eq:bnd_deltaR_noncom}
\| \Lambda R - R \Lambda \|  \leq 2 \| \Delta A  \| .
\end{equation}

Since each column of $R$ is given by the projection of a unit norm vector on an orthogonal basis, it is unit-norm. Denoting the $(i,j)$-th entry of $R$ by $R_{ij}$, we have
\begin{equation}
\label{eq:R_jj_form}
| R_{jj} |= ( 1- \sum_{i \neq j} R_{ij}^2 )^{1/2}.
\end{equation}
We proceed by bounding the sum of the entries $R_{ij}^2$. Notice that the  $(i,j)$-th entry of $\Lambda R - R \Lambda$ is given by $(\lambda_i - \lambda_j) R_{ij}$. For each $j$,
\begin{equation*}
\sum_{i \neq j} (\lambda_i - \lambda_j)^2 R_{ij}^2 \leq  \|  \Lambda R - R \Lambda \|^2 \leq 4 \| \Delta A \|^2
\end{equation*}
where the first inequality follows from the fact that the norm of the $j$-th column of a matrix can be upper bounded by its operator norm, and the second inequality is due to \eqref{eq:bnd_deltaR_noncom}. Due to the eigenvalue separation hypothesis, the first term above can be lower bounded as
\begin{equation*}
\sum_{i \neq j} (\lambda_i - \lambda_j)^2 R_{ij}^2 \geq  \eta^2  \sum_{i \neq j}  R_{ij}^2
\end{equation*}
which gives
\begin{equation*}
 \sum_{i \neq j}  R_{ij}^2 \leq  \frac{4 \| \Delta A \|^2}{ \eta^2}.
\end{equation*}
From \eqref{eq:R_jj_form}, we arrive at the stated result, i.e., for each $j$
\begin{equation*}
| \tilde u_j^T u_j | = | R_{jj} |= ( 1- \sum_{i \neq j} R_{ij}^2 )^{1/2} \geq   \left( 1- \frac{4 \| \Delta A \|^2}{ \eta^2} \right)^{1/2}.
\end{equation*}
\end{proof}

\begin{lemma}
\label{lem:pert_row_col}

Let $U, \tU \in \R^{\numsamp \times \numsamp}$ be two orthogonal matrices such that the difference between the corresponding columns $u_i$, $\tilde u_i$ of $U$ and $\tU$ are upper bounded as $\| u_i - \tilde u_i \|^2 \leq \delta$ for some $\delta < 2$. Let $V=U^T$ and $\tV=\tU^T$. Then the difference between the corresponding columns $v_i$, $\tilde v_i$ of $V$ and $\tV$ are upper bounded as 
\begin{equation*}
\| v_i - \tilde v_i \|^2 \leq \delta + 2 \sqrt{\numsamp} \sqrt{1- \left(1-\frac{\delta}{2} \right)^2}.
\end{equation*}
\end{lemma}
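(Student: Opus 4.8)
The plan is to pass through the orthogonal matrix $R = U^T \tilde{U}$, which conjugates the column picture into the row picture. Since $U$ is orthogonal we have $\tilde{U} = U R$, so the $i$-th row $\tilde{v}_i$ of $\tilde{U}$ is related to the $i$-th row $v_i$ of $U$ by $\tilde{v}_i = R^T v_i$. The hypothesis controls the \emph{columns}: because $u_j$ and $\tilde{u}_j$ are unit vectors, $\| u_j - \tilde{u}_j \|^2 \le \delta$ is equivalent to $R_{jj} = u_j^T \tilde{u}_j \ge 1 - \delta/2 =: c$, where $\delta < 2$ guarantees $c > 0$. Moreover, being orthogonal, $R$ has unit-norm rows and columns, so $\sum_{k \ne j} R_{jk}^2 = 1 - R_{jj}^2 \le 1 - c^2$ for every $j$. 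These two facts are all I will use about $R$.

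First I would write $\| v_i - \tilde{v}_i \|^2 = 2 - 2\, v_i^T \tilde{v}_i = 2 - 2\, v_i^T R^T v_i$ (using $\|v_i\| = \|\tilde{v}_i\| = 1$, since the rows of orthogonal matrices are unit vectors), and then split $v_i^T R^T v_i$ into its diagonal and off-diagonal contributions:
\begin{equation*}
v_i^T R^T v_i = \sum_j R_{jj} (v_i)_j^2 + \sum_{j \ne k} R_{kj} (v_i)_j (v_i)_k.
\end{equation*}
The diagonal part is easy: since $1 - R_{jj} \le \delta/2$ and $\sum_j (v_i)_j^2 = \|v_i\|^2 = 1$, I get $2\sum_j (1 - R_{jj})(v_i)_j^2 \le \delta$, which furnishes the additive $\delta$ in the target bound.

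The hard part is the off-diagonal term, and this is where the $\sqrt{\numsamp}$ factor is born. Writing the cross sum as $v_i^T M v_i$ with $M$ the (transposed) off-diagonal block of $R$, i.e.\ $M_{jk}=R_{kj}$ for $j\ne k$ and $M_{jj}=0$, I would apply Cauchy--Schwarz over the index pairs $(j,k)$: since $\sqrt{\sum_{j,k} (v_i)_j^2 (v_i)_k^2} = \|v_i\|^2 = 1$, it follows that $| v_i^T M v_i | \le \| M \|_F = \big( \sum_{j \ne k} R_{kj}^2 \big)^{1/2}$. The row bound $\sum_{k \ne j} R_{jk}^2 \le 1 - c^2$ summed over the $\numsamp$ indices then gives $\| M \|_F \le \sqrt{\numsamp}\,\sqrt{1 - c^2} = \sqrt{\numsamp}\,\sqrt{1 - (1 - \delta/2)^2}$. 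Combining the diagonal and off-diagonal estimates (bounding $-2 v_i^T M v_i$ by $2|v_i^T M v_i|$) yields
\begin{equation*}
\| v_i - \tilde{v}_i \|^2 \le \delta + 2\sqrt{\numsamp}\,\sqrt{1 - \left(1 - \tfrac{\delta}{2}\right)^2},
\end{equation*}
which is exactly the claim. The only points needing care are the sign bookkeeping in the last step and the use of Cauchy--Schwarz at the level of the \emph{Frobenius} norm rather than the operator norm of $M$; the operator-norm estimate would be cleaner but would not reproduce the correct $\sqrt{\numsamp}$ dependence asserted in the statement.
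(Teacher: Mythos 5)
Your proof is correct and follows essentially the same route as the paper's: both pass to the orthogonal matrix $R$ (yours, $U^T\tU$, is the transpose of the paper's $\tU^T U$), use that rows of orthogonal matrices are unit vectors, extract the additive $\delta$ from the diagonal via $R_{jj} \geq 1-\delta/2$, and bound the off-diagonal quadratic form by $\sqrt{\numsamp}\sqrt{1-(1-\delta/2)^2}$. The only difference is cosmetic, in the last step: you bound $|v_i^T M v_i|$ by the Frobenius norm of the off-diagonal part via entrywise Cauchy--Schwarz, whereas the paper bounds the operator norm of $R^{nd}$ through the maximal $\ell_1$-norms of its rows and columns---both yield the identical $\sqrt{\numsamp}\sqrt{1-(1-\delta/2)^2}$ factor.
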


\begin{proof}
Let $R= \tU^T U$. Since $u_i$ and $\tu_i$ are unit-norm vectors, we have
\begin{equation*}
\| u_i - \tu_i \|^2 = 2 - 2   \tu_i^T u_i  \leq \delta
\end{equation*}
therefore
\begin{equation}
\label{eq:R_ii_bnd}
R_{ii} =  \tu_i^T u_i   \geq 1 - \frac{ \delta}{2} > 0
\end{equation}
where $R_{ii}$ denotes the $i$-th diagonal entry of $R$. From $\tv_i = R v_i$ it follows that
\begin{equation}
\label{eq:vi_vitilde}
\begin{split}
v_i^T \tv_i = v_i^T R v_i  = v_i^T R^d v_i + v_i^T R^{nd} v_i
	\geq  v_i^T R^d v_i  - | v_i^T R^{nd} v_i |
\end{split}
\end{equation}
%
%
where $R^d$ and $R^{nd}$ denote the components of $R$ consisting respectively of the diagonal and the nondiagonal terms. From the condition \eqref{eq:R_ii_bnd} on the diagonal entries of $R$, it follows that the first term is lower bounded as
\begin{equation*}
v_i^T R^d v_i \geq 1 - \delta/2.
\end{equation*}
Also, from \eqref{eq:R_ii_bnd}, the $\ell_2$-norm of each row and each column of $R^{nd}$ is upper bounded by $\sqrt{1-(1-\delta/2)^2}$.  Bounding the operator norm of $R^{nd}$ in terms of the maximal $\ell_1$-norms of the rows and columns, we get 
\begin{equation*}
| v_i^T R^{nd} v_i  | \leq \| R^{nd} \|  \leq \sqrt{\numsamp} \sqrt{1-(1-\delta/2)^2}.
\end{equation*}
Using this together with the inequality \eqref{eq:R_ii_bnd} in \eqref{eq:vi_vitilde}, we get 
\begin{equation*}
v_i^T \tv_i  \geq 1- \frac{\delta}{2} - \sqrt{\numsamp} \sqrt{1-(1-\delta/2)^2}
\end{equation*}
which gives the stated result
\begin{equation*}
\| v_i - \tilde v_i  \|^2 = 2 - 2 v_i^T \tilde v_i \leq \delta + 2 \sqrt{N}  \sqrt{1-(1-\delta/2)^2}.
\end{equation*}
\end{proof}

We are now ready to prove Theorem \ref{thm:sep_mult_class_categ}.

\begin{proof}
We first look at the separation of the embedding obtained with $\Lcat$ for the reduced data graph with all between-category edges removed. The data graph corresponding to $\Lcat$ has $\numcat$ connected components; therefore, $\Lcat$ is a block diagonal matrix consisting of $\numcat$ blocks. Each $\icat$-th block is given by the objective matrix $\Licat = \Lwicat - \mu \Lbicat $ where $\Lwicat$ and $\Lbicat$ are the within-class and the between-class Laplacian matrices of the data graph restricted to only the category $\icat$. As $\Lcat$ is a block-diagonal matrix, its eigenvalues and eigenvectors are given by the union of the eigenvalues  and the eigenvectors of the block components $\Licat$ (i.e., their inclusions in $\R^\numsamp$ by zero-padding).

Let $\Y^\icat = [ \y_1^\icat  \dots \y_{\numsamp_\icat}^\icat]^T$ be the $d^\icat$-dimensional embedding of the $\numsamp_\icat$ samples in category $\icat$, whose columns are the eigenvectors of $\Licat$. The embedding $\Y^\icat $ is assumed to be separable with a margin of $\marc$ by the theorem hypothesis. Consider the embeddings $\Y^\icat$, $\Y^\jcat$ of two different categories $\icat$ and $\jcat$, and two classes $k$, $l$ respectively from these two categories. By the separation hypothesis  \eqref{eq:cond_sep_nooff} within each category, there exist hyperplanes $\hyp_k^\icat$ and $\hyp_l^\jcat$ with $\| \hyp_k^\icat \| =1$,  $\| \hyp_l^\jcat \| =1$, such that for the embedding of any sample $\y_i^\icat \in \R^{d^\icat}$ from class $k$ and any sample $\y_j^\jcat \in  \R^{d^\jcat}$ from class $l$ it holds that
\begin{equation}
\label{eq:sep_hyp_kl_ij}
\begin{split}
(\hyp_k^\icat)^T \y_i^\icat &\geq \marc/2 \\
(\hyp_l^\jcat)^T \y_j^\jcat & \leq -\marc/2.
\end{split}
\end{equation}
Now considering an ordering of all $\numcat$ categories, we can define the inclusion $ \overline{\y}_i^\icat \in \R^d $ of each sample ${\y}_i^\icat \in \R^{d^\icat}$ from each category $\icat$, where $d=\sum_\icat d^\icat$ and the nonzero entries of $ \overline{\y}_i^\icat = [0 \dots 0 \ ({\y}_i^\icat)^T \ 0 \dots 0]^T$ are located at the support of category $\icat$. Note that each $ ( \overline{\y}_i^\icat )^T$ corresponds to a row of the coordinate matrix $\Ycat $, whose columns are the eigenvectors of $\Lcat$.

Consider the hyperplane 
\begin{equation*}
\hyp_{k,l}^{\icat,\jcat} = \frac{1}{\sqrt{2}} [ 0 \dots 0 \ (\hyp_k^\icat)^T \ 0 \dots 0 \ (\hyp_l^\jcat)^T  ]^T \in \R^d
\end{equation*}
with $\| \hyp_{k,l}^{\icat,\jcat} \|=1$, formed by the inclusion of  $(\hyp_k^\icat)^T$ and $(\hyp_l^\jcat)^T$ in $\R^d$ over the entries corresponding respectively to the categories $\icat$ and $\jcat$. From \eqref{eq:sep_hyp_kl_ij}, we get that the hyperplane $\hyp_{k,l}^{\icat,\jcat}$ separates these two classes as
\begin{equation}
\label{eq:sep_Lc_catform}
\begin{split}
(\hyp_{k,l}^{\icat,\jcat})^T  \ \overline{\y}_i^\icat  &\geq \frac{ \marc}{2\sqrt{2}} \\
(\hyp_{k,l}^{\icat,\jcat})^T  \ \overline{\y}_j^\jcat  & \leq - \frac{ \marc}{2\sqrt{2}} .
\end{split}
\end{equation}
This shows that there exists a $d$-dimensional embedding given by the eigenvectors of $\Lcat$ that separates any pair of classes with a margin of at least $\marc/\sqrt{2}$.

Now observe from Lemma \ref{lem:corr_eigvec_pert} that the correlation between the $i$-th eigenvector $u^c_i$ of $\Lcat$ and the corresponding eigenvector $u_i$ of $\Lall$ is upper bounded as
\begin{equation*}
|  u_i^T u^c_i | \geq \xi= \sqrt{1 - \frac{4 \| \PerL \|^2}{\eta^2}}.
\end{equation*}
This implies either of the conditions $u_i^T u^c_i \geq \xi$ or $u_i^T (-u^c_i) \geq \xi$. Therefore, the eigenvector $u_i$ of the perturbed objective matrix $\Lall$ has a correlation of at least $\xi$ with either $u^c_i$ or its opposite $-u^c_i$. Meanwhile, the separability of an embedding is invariant to the negation of one of the eigenvectors. This corresponds simply to changing the sign of one of the coordinates of all data samples (i.e., taking the symmetric of the embedding with respect to one axis); therefore, the linear separability remains the same. For this reason, it suffices to treat the case $u_i^T u^c_i \geq \xi$ for analyzing the separability without loss of generality.

The condition $u_i^T u^c_i \geq \xi$ implies 
\begin{equation}
\label{eq:eigv_pert_norsq}
\| u_i - u^c_i \|^2 = 2 - 2 u_i^T u^c_i  \leq 2 - 2 \xi.
\end{equation}
While this upper bounds the difference between the corresponding eigenvectors of $\Lall$ and $\Lcat$, we need to upper bound the variation between the rows of $\Lall$ and $\Lcat$, as we are interested in the separation obtained with the embedded data coordinates given by the rows of $\Lall$. Denoting the $i$-th rows of $\Lall$ and $\Lcat$ respectively as $\y_i^T$ and $\overline{\y}_i^T$, from the condition in \eqref{eq:eigv_pert_norsq} and Lemma \ref{lem:pert_row_col}, the difference between the corresponding rows of these matrices can be bounded as
\begin{equation}
\label{eq:bnd_yi_yibar}
\|  \y_i^T - \overline{\y}_i^T \|^2 \leq 2 - 2\xi + 2 \sqrt{\numsamp (1-\xi^2) }. 
\end{equation}
As the separability condition in \eqref{eq:sep_Lc_catform} is general and valid for any two categories, we can reformulate it as follows. For any pair of classes $k,l \in \{ 1, \dots, \numclass \}$, there exists a hyperplane $\hyp_{k,l}$ such that
\begin{equation}
\label{eq:sep_Lc_genform}
\begin{split}
\hyp_{k,l}^T  \ \overline{\y}_i  &\geq \frac{ \marc}{2\sqrt{2}}  \quad \,\,\,\, \text{     if  } \classi=k\\ 
\hyp_{k,l}^T  \ \overline{\y}_i  & \leq - \frac{ \marc}{2\sqrt{2}}  \quad \,\,\,\, \text{     if  } \classi=l.
\end{split}
\end{equation}
Then, from \eqref{eq:bnd_yi_yibar} and \eqref{eq:sep_Lc_genform} we have
\begin{equation*}
\hyp_{k,l}^T \y_i = \hyp_{k,l}^T \overline{\y_i} + \hyp_{k,l}^T (\y_i - \overline{\y_i}) 
	\geq  \hyp_{k,l}^T \overline{\y_i}  - \| \y_i - \overline{\y_i} \|
	\geq   \frac{ \marc}{2\sqrt{2}}  -  \left( 2 - 2\xi + 2 \sqrt{\numsamp (1-\xi^2) } \right)^{1/2}
\end{equation*}
if $\classi=k$; and 
\begin{equation*}
\hyp_{k,l}^T \y_i = \hyp_{k,l}^T \overline{\y_i} + \hyp_{k,l}^T (\y_i - \overline{\y_i}) 
	\leq  \hyp_{k,l}^T \overline{\y_i}  + \| \y_i - \overline{\y_i} \|
	\leq  -  \frac{ \marc}{2\sqrt{2}}  + \left( 2 - 2\xi + 2 \sqrt{\numsamp (1-\xi^2) } \right)^{1/2}
\end{equation*}
if $\classi = l$. Hence, the embedding $\Y$ given by the eigenvectors of the overall objective matrix $\Lall$ is linearly separable with a margin of 
\begin{equation*}
\mar = \frac{ \marc}{\sqrt{2}}  -  2 \left( 2 - 2\xi + 2 \sqrt{\numsamp (1-\xi^2) } \right)^{1/2}
\end{equation*}
if
\begin{equation*}
\marc > 4 \left( 1 - \xi +  \sqrt{\numsamp (1-\xi^2) } \right)^{1/2}.
\end{equation*}
We thus arrive at the result stated in the theorem.

\end{proof}

\vskip 0.2in
\bibliography{refs}

\end{document}